\documentclass[a4paper,UKenglish,cleveref, autoref, thm-restate]{lipics-v2021}

\hideLIPIcs  %
\nolinenumbers

\bibliographystyle{plainurl}%
\usepackage[T1]{fontenc}
\usepackage{graphicx}
\usepackage{hyperref}
\usepackage{color}

\urlstyle{rm}
\usepackage{newfile}
\usepackage{refcount}
\usepackage{cite}
\usepackage{complexity}
\usepackage{framed}
\usepackage{amsfonts}
\usepackage{amsmath}
\usepackage{amsthm} 
\usepackage{thmtools}
\usepackage{amssymb}
\usepackage{mathtools}
\usepackage{thm-restate}
\usepackage{color}
\usepackage{graphicx}
\usepackage{subcaption}
\usepackage{url}
\usepackage{relsize}
\usepackage{graphicx}
\usepackage{algorithm}
\usepackage[noend]{algpseudocode}
\usepackage{tikz}
\usetikzlibrary{automata, positioning, arrows.meta}
\usepackage{pgfplots}
\usepackage{multirow}
\usepackage{stackengine,scalerel}
\usepackage[strings]{underscore}
\usepackage{paralist}
\usepackage{booktabs}
\usepackage{scalerel}
\usepackage{csquotes}
\usepackage[capitalise]{cleveref}
\usepackage{wrapfig}
\usepackage{xspace}

\crefname{equation}{Eq.}{Eq.}
\crefname{figure}{Fig.}{Fig.}
\crefname{lemma}{Lem.}{Lem.}
\crefname{theorem}{Thm.}{Thm.}
\crefname{corollary}{Cor.}{Cor.}
\crefname{definition}{Def.}{Def.}
\crefname{claim}{Claim}{Claim}
\crefname{section}{Sec.}{Sec.}
\crefname{figure}{Fig.}{Fig.}
\crefname{table}{Tab.}{Tab.}
\crefname{algorithm}{Alg.}{Alg.}
\crefname{observation}{Obs.}{Obs.}
\crefname{example}{Ex.}{Ex.}
\crefname{remark}{Rem.}{Rem.}
\crefname{appendix}{App.}{App.}
\crefname{subfigure}{Fig.}{Fig.}
\usepackage{enumitem}
\usepackage{tikz}  %
\usetikzlibrary{arrows.meta, positioning,arrows, automata, shapes}  %
\usepackage{xcolor}  %
\tikzset{auto, >= stealth}
\tikzset{every edge/.append style={thick, shorten >= 1pt, ->}}
\tikzset{initial/.style={draw, thick, <-, shorten <=1pt}}
\tikzset{player0/.style = {draw, thick, shape=circle, minimum size=0.5cm},node distance=1.5cm}
\tikzset{player1/.style = {draw, thick, shape=rectangle, minimum size=0.7cm},node distance=1.5cm}

\definecolor{customdarkgreen}{rgb}{0.0, 0.5, 0.0} %

\newcommand{\mem}{\mathbb{M}}

\newcommand{\abs}[1]{\left\lvert{#1}\right\rvert}

\renewcommand{\R}{\mathbb{R}}

\newcommand{\qinit}{q_0}
\newcommand{\dist}[1]{\mathcal{D}(#1)}
\newcommand{\pr}{\mathit{pr}}
\renewcommand{\exp}{\mathbb{E}}
\newcommand{\cost}{\mathtt{cost}}
\newcommand{\distance}{\mathtt{D_{TV}}}
\newcommand{\freq}{\mathtt{freq}}

\usepackage[]{todonotes}

\makeatletter
\newsavebox{\@brx}
\newcommand{\llangle}[1][]{\savebox{\@brx}{\(\m@th{#1\langle}\)}%
	\mathopen{\copy\@brx\kern-0.5\wd\@brx\usebox{\@brx}}}
\newcommand{\rrangle}[1][]{\savebox{\@brx}{\(\m@th{#1\rangle}\)}%
	\mathclose{\copy\@brx\kern-0.5\wd\@brx\usebox{\@brx}}}

\makeatother

\renewcommand{\inf}{\mathtt{Inf}}

\newcommand{\src}{\mathit{src}}
\newcommand{\template}{\ensuremath{\Gamma}}

\newcommand{\livegroup}{H_\ell}
\newcommand{\livegroupSingle}{H}
\newcommand{\livegroupSingleN}{H}
\newcommand{\colivegroup}{D}
\newcommand{\safegroup}{S}

\newcommand{\Qavg}{Q_\triangle}
\newcommand{\Qsys}{Q_\bigcirc}
\newcommand{\Qenv}{Q_\square}
\newcommand{\sys}{\bigcirc}
\newcommand{\env}{\square}
\newcommand{\avg}{\triangle}
\newcommand{\shield}[1]{#1'}

\newcommand{\parityTemp}{\textsc{ParityTemplate}}

\newcommand{\thalf}{\frac{1}{2}}
\newcommand{\player}{\diamondsuit}

\newcommand{\buchi}{\ifmmode B\ddot{u}chi \else B\"uchi \fi}
\newcommand{\cobuchi}{\ifmmode co\text{-}B\ddot{u}chi \else co-B\"uchi \fi}
\newcommand{\buchiRegion}{\ensuremath{\mathcal{B}}\xspace}

\newcommand{\sure}{\bullet}
\newcommand{\asure}{\circ}
\newcommand{\psure}{\star}
\newcommand{\win}{\mathcal{W}}

\newcommand{\prob}[1]{\Pr\left(#1\right)}
\newcommand{\distr}{\mu}
\newcommand{\supp}{\mathit{supp}}
\newcommand{\run}{\rho}
\newcommand{\runs}{\text{Runs}}
\newcommand{\fruns}{\text{FRuns}}
\newcommand{\avgreward}[2]{\text{Avg}^{#1}_{#2}}
\newcommand{\discreward}[3]{\text{Disc}^{#2}_{#3}(#1)}
\newcommand{\policy}{\sigma}
\newcommand{\strat}{\policy}
\newcommand{\Policies}{\Sigma}

\newcommand{\hist}{\kappa}
\newcommand{\edge}[2]{(#1,#2)}
\newcommand{\pref}{\textit{pref}}
\newcommand{\last}{\textit{last}}
\newcommand{\normalize}[1]{\mathcal{N}(#1)}
\newcommand{\given}{~\Big\vert~}
\newcommand{\biasact}{B^\policy}
\newcommand{\biasstate}{V^\policy}
\newcommand{\Apol}{\Pi}

\newcommand{\counter}[1]{\mathit{count}_{#1}}
\newcommand{\maxpref}[2]{\mathit{maxpref}^{#1}_{#2}}
\newcommand{\thres}{{\theta}}

\newcommand{\applyStars}{\textsc{ApplySTARs}\xspace}
\newcommand{\applyNaive}{\textsc{ApplyNaive}\xspace}
\newcommand{\benchmark}{\textsc{FactoryBot}\xspace}
\newcommand{\tool}{\texttt{MARG}\xspace}

\newcommand{\Star}{STAR\xspace}
\newcommand{\Stars}{STARs\xspace}
\newcommand{\far}{\texttt{Far}\xspace}
\newcommand{\close}{\texttt{Close}\xspace}
\newcommand{\payoffRegion}{\ensuremath{\mathcal{R}}\xspace}

\definecolor{dkcyan}{rgb}{0.1, 0.3, 0.3}
\definecolor{dkgreen}{rgb}{0,0.3,0}
\definecolor{olive}{rgb}{0.5, 0.5, 0.0}
\definecolor{dkblue}{rgb}{0,0.1,0.5}

\lstdefinelanguage{custom-lang}{
	keywords={let, in, match, with, when, if, then, else, elif, for, to, do, rec, return, new, not, and, while, Input, :},
	keywordstyle=[1]\color{black}\bfseries,
	morekeywords=[2]{append, Set, Dict, Queue, pop, push, add, contains},
	keywordstyle=[2]\sffamily,
	morekeywords=[3]{generalController, prophecy, existentialProjection, composition, isMember, stepComposition, modifiedPlant, h},
	keywordstyle=[3]\color{dkcyan}\ttfamily,
	comment=[l][\color{comment-color}]{//},
	literate=%
	{=}{{{\color{operator-color}=}}}1
	{<-}{{{\color{operator-color}$\leftarrow$}}}1
	{|}{{{\color{dkblue}$\mid$}}}1
	{:}{{{\color{dkblue}:}}}1
	{:=}{{{\color{dkblue}:=}}}1
	{@}{ }1
}
\lstdefinestyle{default}{
	backgroundcolor=\color{white},
	escapeinside={(*}{*)},
	basicstyle=,
	columns=fullflexible,
	commentstyle=\sffamily\color{black!50!white},
	framexleftmargin=1em,
	framexrightmargin=1ex,
	keepspaces=true,
	keywordstyle=\color{dkblue},
	mathescape,
	numbers=left,
	numberblanklines=false,
	numbersep=0.5em,
	numberstyle=\relscale{0.75}\color{gray}\ttfamily,
	showstringspaces=true,
	stepnumber=1,
	xleftmargin=1.2em,
}

\lstnewenvironment{mycode}[1][]
{\small
	\lstset{
		style=default, 
		language=custom-lang,
		#1
	}
}
{}

\graphicspath{{./figures/}{./img/svg/}}
	
\title{Follow the \textsc{STARs}:}
\subtitle{Dynamic $\omega$-Regular Shielding of Learned Probabilistic Policies}
\titlerunning{Dynamic $\omega$-Regular Shielding of Learned Probabilistic Policies}

\author{Ashwani Anand}{Max Planck Institute for Software Systems, Kaiserslautern, Germany}{ashwani@mpi-sws.org}{https://orcid.org/0000-0002-9462-8514}{}
\author{Satya Prakash Nayak}{Max Planck Institute for Software Systems, Kaiserslautern, Germany}{sanayak@mpi-sws.org}{https://orcid.org/0000-0002-4407-8681}{}
\author{Ritam Raha}{Max Planck Institute for Software Systems, Kaiserslautern, Germany}{rraha@mpi-sws.org}{https://orcid.org/0000-0003-1467-1182}{}
\author{Anne-Kathrin Schmuck}{Max Planck Institute for Software Systems, Kaiserslautern, Germany}{akschmuck@mpi-sws.org}{https://orcid.org/0000-0003-2801-639X}{}

\authorrunning{Anand et al.}

 \ccsdesc[100]{} %

\keywords{Shielding, $\omega$-regular games, Strategy Templates, Reinforcement Learning, Run-time monitoring} %

\relatedversion{} %

\EventEditors{John Q. Open and Joan R. Access}
\EventNoEds{2}
\EventLongTitle{42nd Conference on Very Important Topics (CVIT 2016)}
\EventShortTitle{CVIT 2016}
\EventAcronym{CVIT}
\EventYear{2016}
\EventDate{December 24--27, 2016}
\EventLocation{Little Whinging, United Kingdom}
\EventLogo{}
\SeriesVolume{42}
\ArticleNo{23}

\begin{document}
	
\label{beginningofdocument}
\newoutputstream{docstatus}
\openoutputfile{main.ds}{docstatus}
\addtostream{docstatus}{Submitted}
\closeoutputstream{docstatus}

\maketitle              
\begin{abstract}
    This paper presents a novel dynamic post-shielding framework that enforces the full class of $\omega$-regular correctness properties over pre-computed probabilistic policies. This constitutes a paradigm shift from the predominant setting of safety-shielding -- i.e., ensuring that nothing bad ever happens -- to a shielding process that additionally enforces liveness -- i.e., ensures that something good eventually happens. At the core, our method uses \emph{Strategy-Template-based Adaptive Runtime Shields (STARs)}, which leverage permissive strategy templates to enable post-shielding with minimal interference. As its main feature, STARs introduce a mechanism to \emph{dynamically control interference}, allowing a tunable enforcement parameter to balance formal obligations and task-specific behavior \emph{at runtime}. This allows to trigger more aggressive enforcement when needed while allowing for optimized policy choices otherwise.
    In addition, STARs support runtime adaptation to changing specifications or actuator failures, making them especially suited for cyber-physical applications. 
    We evaluate STARs on a mobile robot benchmark to demonstrate their controllable interference when enforcing (incrementally updated) $\omega$-regular correctness properties over learned probabilistic policies. 
\end{abstract}

\sloppy
\section{Introduction}
\label{sec:introduction}

\noindent\textbf{Motivation.}
Adhering to formal correctness while simultaneously optimizing performance is a core challenge in the design of autonomous cyber-physical systems (CPS)~\cite{seshia2022toward,dalrymple2024towards}. While machine learning techniques -- especially reinforcement learning -- are highly effective at generating policies optimizing quantitative rewards, they often struggle to enforce qualitative correctness criteria such as safety or liveness. In contrast, formal methods excel at specifying and verifying such qualitative properties, typically using temporal logic or automata. Integrating these strengths has been a growing focus in safe and explainable autonomy and has led to a rich body of work integrating logical specifications into policy synthesis via multi-objective formulations~\cite{ChatterjeeD10,ChatterjeeD11,Helouet2019,Savas2024}, or into policy reinforcement learning through automata-based reward shaping~\cite{Hahn,HahnPSS0W23,KazemiPSS0V22,YangMRR23,Busatto-GastonC21,KretinskyPR18}. While these approaches can yield policies that satisfy complex goals while adhering to formal specifications, they incorporate the specification into the synthesis or learning procedures -- requiring re-training when formal objectives, environment conditions, or reward structures change.

\smallskip
\noindent\textbf{Shielding.}
A promising alternative to retraining is shielding\footnote{While there are pre-shielding frameworks that apply shielding during training, we focus solely on post-shielding at runtime.} -- a lightweight, runtime approach that monitors and, if necessary, overrides the actions proposed by a policy to maintain compliance with a formal specification.
Shields treat existing policies as a black box and ensure correctness in a \emph{minimally interfering} manner i.e., they intervene if \emph{and only if} the systems executions will (surely) violate the specification (in the future). The concept of shielding traces back to the foundational works on runtime monitoring of program executions in computer science~\cite{lee1999runtime,havelund2002synthesizing}, and formal supervision of feedback control software in engineering~\cite{ramadge1989control}. More recently, several shielding frameworks tailored for learned policies in autonomous CPS have been introduced (see surveys by~\cite{Review23SafetyFilterHsuHuFisac,Review23DataDrivenSafetyFiltersAmesTomlinZeilingerEtAl,konighofer2022correct,OdriozolaOlaldeZA23}). Crucially, because shielding operates at runtime, it allows goals, safety regions, or constraints to be updated online, replacing retraining with a re-computation of the shield which often requires significantly less computational resources.

\smallskip
\noindent\textbf{Challenges.}
The primary challenge in shielding is to ensure correctness and minimal interference -- with only black-box runtime access to the (learned) nominal policy. 
Prior frameworks have primarily focused on safety, where synthesizing maximally permissive shields (which are `inherently' minimally interfering) is tractable~\cite{alshiekh2018safe,konighofer2023online,reed2024shielded}. In contrast to safety, shielding for liveness is challenging because such specifications (i) do not easily allow for permissive strategies, and (ii) only manifest themselves in the infinite limit, hardening their enforcement at runtime given only a finite prefix.
However, the need for shields which enforce the full class of \emph{$\omega$-regular specifications} naturally arises in CPS applications. %
A natural example are warehouse robots which should not drop a parcel (safety) but should also 
deliver it eventually (liveness)[see~\url{https://youtu.be/AAhHmquT9Fs}].

\begin{figure*}[t]
	\centering
	\begin{subfigure}[b]{0.21\textwidth}
		\includegraphics[width=\textwidth]{./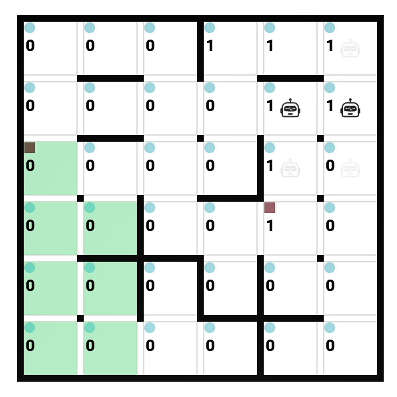}
		\caption{Unshielded Bot}
		\label{screenshot:botunshielded}
	\end{subfigure}
	\begin{subfigure}[b]{0.21\textwidth}
		\includegraphics[width=\textwidth]{./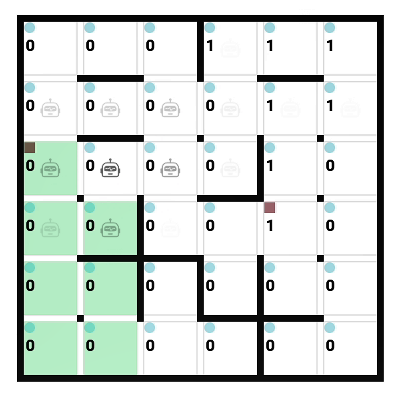}
		\caption{Shielded Bot}
		\label{screenshot:botShielded}
	\end{subfigure}
    \hfill
    \begin{subfigure}[b]{0.25\textwidth}
		\includegraphics[width=\textwidth]{./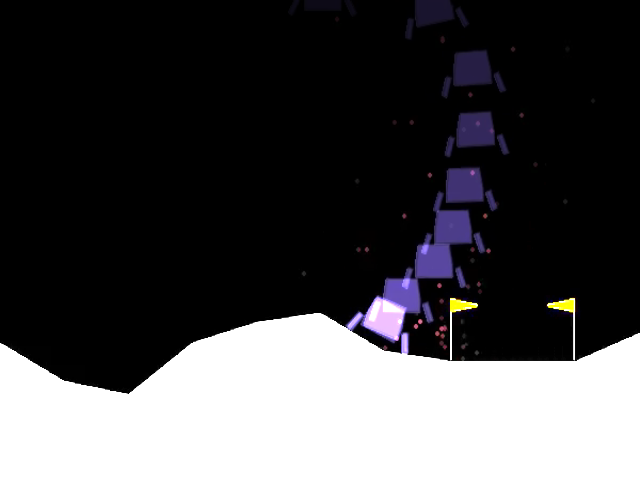}
		\caption{Unshielded Lander}
		\label{screenshot:landerUnshielded}
	\end{subfigure}
    \hspace{0.5em}
    \begin{subfigure}[b]{0.25\textwidth}
		\includegraphics[width=\textwidth]{./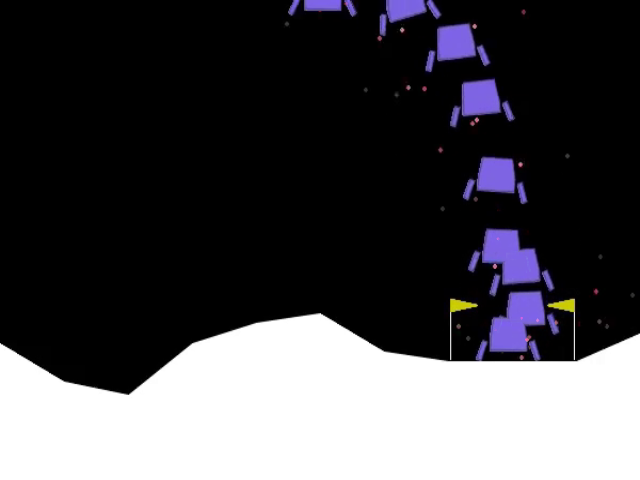}
		\caption{Shielded Lander}
		\label{screenshot:landerShielded}
	\end{subfigure}
	\caption{Applying \Stars{} on different benchmarks. For \benchmark : The agent must visit \buchiRegion (green), while maximizing the average reward (small numbers in cells). Images show the agent's heatmap without a shield (a), with our shield (b). For LunarLander: The agent must land on the helipad (between the flags) while avoiding crashing. Images show the lander's trajectory without a shield (c), with our shield (d).}

	\label{figure:screenshots}
\end{figure*}

\smallskip
\noindent\textbf{Contribution.}
To close this gap, this paper presents a \emph{dynamic} shielding framework -- \emph{Strategy Templates based Adaptive Runtime Shields (\Stars{})} -- which enforces \emph{the full class of $\omega$-regular specifications} over learned probabilistic policies. In addition to being correct and minimal interfering, our shields are equipped with an \emph{enforcement parameter~$\gamma$}.  %
This allows different levels of specification enforcement in different operational contexts at 
runtime: if liveness properties must be satisfied urgently (e.g., reaching a safe zone after an 
alarm) $\gamma$ can be increased while $\gamma$ is kept low otherwise, to exploit 
mission-specific performance objectives optimized during policy learning.

\smallskip
\noindent\textbf{Key Idea.}
The core idea behind the \Stars{} framework is to utilize
\emph{strategy templates} recently introduced by~\cite{AnandNS23} as their main building block. Strategy templates offer an alternative representation of strategies in two-player parity games (resulting from an $\omega$-regular specification)
that condense an infinite number of winning strategies into a simple and efficiently computable data structure. This makes them (i) 
truly permissive, enabling minimal inference shielding, and (ii) localizes required future 
progress (over a known transition graph\footnote{We only need access to the graph \emph{structure}, not to transition probabilities, rewards or computed policies.}), enabling a purely history-induced evaluation of liveness properties. Thereby, strategy templates constitute the \enquote{missing piece} that makes liveness shielding tractable. In addition, strategy templates are inherently robust and composable, allowing STARs to handle dynamic runtime changes, such as unavailable actions or shifting goals.

\subsection{Applications}\label{sec:introLuna}

\textbf{\benchmark} is a novel benchmark, %
which simulates multiple \emph{gridworld environments} in OpenAI Gym~\cite{gym} such as frozen lake, taxi or cliff walking, typically used to evaluate RL policies and represents a simplified version of the snake example 
used to evaluate a dynamic safety shield~\cite{konighofer2023online}. Furthermore, adaptability in such environments, e.g., dynamic goal changes in Frozen Lake, are also subsumed by the more general capabilities of our \benchmark{} setup.
In \benchmark{}, the nominal agent policy optimizes a reward function and \Stars{} are used to guide the agent towards satisfying 
(generalized) Büchi objectives\footnote{The choice of simple objectives is for 
illustration only. Our algorithm can handle the full class of $\omega$-regular objectives.} which might change at runtime. %

\textbf{Overcooked-AI}~\cite{carroll2019utility} is a widely used benchmark for collaborative reinforcement learning with multiple actors. Here, autonomous agents are trained to repeatedly perform cooking tasks. We use LTL specifications to encode additional recipe requirements of produced dishes. \Stars{} are used to enforce the (additional) production of soups satisfying these requirements infinitely often. %

\textbf{LunarLander}~\cite{brockman2016openai} is a standard benchmark in reinforcement learning. 
The simulated lunar lander can be controlled by using its directional thrusters to navigate the environment.
The classical objective is to train the lander to land on a designated helipad without crashing.
To demonstrate the utility of~\Stars{}, we slightly modified this benchmark to spawn the helipad at a \emph{random position} at runtime and trained a baseline policy that just ensures that the lander touches down safely
--- though not necessarily on the helipad. We then apply \Stars{} to enforce landing on the pad \emph{at runtime} while landing safely.

\smallskip
\noindent\textbf{Liveness Shielding.} 
Both \benchmark and Overcooked-AI are based on a known finite MDP. We can therefore synthesize shields over a game graph derived from this MDP which ensures that the agent \emph{always} fulfills the shielded objective (i.e.\ visiting the green region or producing particular soups always again) with a frequency determined by the enforcement parameter $\gamma$. On the other hand, LunarLander is based on an eight-dimensional hidden continuous MDP, which prevents us from  
computing a provably correct shield. %
Instead, we construct a very simple 
2-dimensional grid-abstraction of the landers x/y position that naively assumes that each control action makes the lander move one grid cell down, right, left, or up, respectively. Even 
though the actual dynamics of the lander are very complex, leading to quite different abstract 
behavior, this simple graph allows us to compute a shield which ensures that the lander lands fast and
safe on the helipad in $87\%$  of $200$ instances we have simulated with the right choice of $\gamma$. 
This demonstrates that \Stars enable the blending of quantitative objectives optimized in RL and qualitative liveness objectives at runtime, with promising future applications of \Stars for deep RL shielding.

\smallskip
\noindent\textbf{Supplements.}
We report details on the experiments in \Cref{sec:experiments}.
Recordings of different instances of the outlined experiments are available at {\scriptsize\textcolor{blue}{\url{http://anonymous.4open.science/w/MARGA}}}.

\subsection{Related Work}
(Post-) shielding approaches have so far focused mainly on \emph{safety} shielding, where our work is the closest related to the works of~\cite{alshiekh2018safe,konighofer2023online,reed2024shielded}.
Similar ideas are also used for \emph{policy repair} w.r.t.\ safety violations by~\cite{pathak2018verification,fuzzrepair,Waveren2022,Zhou2020} or via (partial) re-synthesis  by~\cite{KantarosPappasEtAllRobots21,PacheckKressGazitRobots23}. In contrast, \Stars{} directly inherent robustness and adaptability properties of strategy templates that typically circumvent the need for strategy repair and achieve necessary strategy adaptations directly via shielding.
For general $\omega$-regular specifications, our work is closely related to the runtime optimization~\cite{chatterjee2023shielding} which propose a similar \emph{blending} of a nominal policy with an additional liveness objective, however, via a very different shield synthesis technique. In contrast to our work, the shield synthesized in~\cite{chatterjee2023shielding} uses a fixed enforcement parameter, does not exploit probabilistic policies, and allows no dynamic adaptation in the specification or in the graph. 

This work also relates to approaches synthesizing policies for mean-payoff objectives under $\omega$-regular constraints~\cite{AlmagorKV16,ZeitlerMMS024}, and to pre-shielding frameworks~\cite{KazemiPSS0V22,Hahn,YangMRR23,OdriozolaOlaldeZA23,Busatto-GastonC21,KretinskyPR18}. Achieving similar optimality in post-shielding is harder, but \Stars{} can approach optimal rewards when the winning region is strongly connected. Addressing correctness without full MDP access remains a challenge for future work.

Other work that is not directly related to shielding, but is relevant to our work
along with all proofs and additional formalizations can be found in the appendix. 

\section{Preliminaries}
\label{sec:prelims}
This section provides an overview of notation and concepts.

\smallskip
\noindent\textbf{Notation.}
We denote by $\R$ the set of real numbers and $ [a;b]$ represents the interval $ \{a, a+1 \cdots , b\} $. 
We write $\Policies^*$ and $\Policies^\omega$ to denote the set of finite and infinite sequences of elements from a set $\Policies$, respectively.
A \emph{probability distribution} over a finite set $S$ is denoted as a function $\distr: S \mapsto [0,1]$ such that $\sum_{s\in S} \distr(s) =1$. The set of all probability distributions over $S$ is denoted as $\dist{S}$. The \emph{support} of a distribution $\distr$ is the set $\supp(\distr) = \{s \in S \mid \distr(s)>0\}$.
Given two distributions $\distr_1,\distr_2\in\dist{S}$, the \emph{total variation distance} between $\distr_1$ and $\distr_2$ is defined as: 
$\distance(\distr_1,\distr_2) = \frac{1}{2}\sum_{s\in S} \abs{\distr_1(s) - \distr_2(s)}$.
Given any function $\distr: S \mapsto \R$, we use $\normalize{\distr}\in\dist{S}$ to denote its normalized distribution: $\normalize{\distr}(s) = \frac{\distr'(s)}{\sum_{s'\in S} \distr'(s')},\quad\text{where }\distr'(s') = \max(\distr(s'),0).$

\smallskip
\noindent\textbf{Markov Decision Process.}
A \emph{Markov Decision Process} (MDP) is a tuple $M = \langle Q, A, \Delta, \qinit\rangle$ where $Q$ is a finite set of states, $A$ is a finite set of actions, $\Delta: Q \times A \mapsto \dist{Q}$ is a (partial) transition function and $\qinit \in Q$ is the initial state. For any state $q \in Q$, we let $A(q)$ denote the set of actions that can be selected in state $q$. 
A \emph{strongly connected component (SCC)} of an MDP $M$ is a maximal set of states $Q' \subseteq Q$ such that for every pair of states $q,q' \in Q'$, there exists a path from $q$ to $q'$ in $Q'$ with non-zero probability.

Given a state $q$ and an action $a \in A(q)$, we denote the probability of reaching the successor state $q'$ from $q$ by taking action $a$ as $\pr(q'|q,a)$. A run $\rho$ of an MDP $M$ is an infinite sequence in $Q \times (A \times Q)^\omega$ of the form $\qinit a_0 q_1 \ldots$ such that $\pr(q_{i+1}|q_i,a_i) > 0$. A \emph{finite run} of \emph{length} $n$ is a finite such sequence $\hist = q_0 a_0 \ldots q_n a_n$ or $\hist = q_0a_0\ldots q_n$. 
We write $\run[i]$ to denote the $i^{th}$ state-action pair $(q_i,a_i)$ appearing in $\rho$, $\run[i;j]$ to denote the infix $q_i a_i \ldots q_j$ for $j \geq i$, and $\run[j;\infty]$ to denote the suffix $q_j a_j \ldots$.
These notations extend to the case of finite runs analogously. We write $\runs^M$ (resp.\ $\fruns^M$) to denote the set of all infinite (resp.\ finite) runs of $M$. We denote the last state of a finite run $\rho$ as $\last(\rho)$.

A \emph{policy} (or, strategy) in an MDP $M$ is a function $\policy: \fruns^M \mapsto \dist{A}$ such that $\supp(\policy(\rho)) \subseteq A(\last(\rho))$. Intuitively, a policy maps a finite run to a distribution over the set of available actions from the last state of that run.  
A policy is \emph{stochastic} if $|\supp(\policy(\hist q))|= |A(q)|$ for every history $\hist q$.
A run $\rho = \qinit a_0 q_1\ldots$ is a $\policy$-run
if $a_i \in \supp(\policy(\qinit a_0 \ldots q_i))$. 
Given a measurable set of runs $P \subseteq \runs^M$ or finite runs $P \subseteq \fruns^M$, $\Pr_{\policy}[P]$ is the probability that a $\policy$-run belongs to $P$. We use $\runs^{M^\policy}$ to denote the set of all $\policy$-runs and by $\Apol_M$ the set of all policies over $M$.

\smallskip
\noindent\textbf{$\omega$-Regular Objectives and (Almost) Sure Winning.}
Given an MDP $M$, an \emph{objective} is defined as a set of %
runs $\Phi\subseteq \runs^M$. %
An $\omega$-regular objective can be canonically represented by a \emph{parity objective} (possibly with a larger set of states~\cite{BaierKatoen08}) $\Phi= \textsc{Parity}[c]$ which is defined using a coloring function $c: Q \rightarrow [0;d]$ that assigns each state a \emph{color}. The parity objective $\textsc{Parity}[c]$ contains all runs $\rho \in \runs^M$ for which the highest color (as assigned by the coloring function $c$) appearing infinitely often is even. 
To define them formally, let us write $\inf_Q(\rho)$ (resp.\ $\inf_{Q\times A}(\rho)$) denoting the set of all states (resp. state-action pairs) which occur infinitely often along $\rho$.
Then, the parity objective is defined as
$\textsc{Parity}[c] = \{\rho \in \runs^M \mid \max \{c(q)\mid q \in \inf_Q(\rho)\} \text{ is even}\},$
The parity objective $\textsc{Parity}[c]$ reduces to a \emph{B\"uchi} objective, if the domain of $c$ is restricted to two colors $\{1,2\}$.

Given an MDP $M$ and an objective $\Phi \subseteq \runs^M$,
a run is said to \emph{satisfy} $\Phi$ if it belongs to $\Phi$.
A policy $\policy$ is said to be \emph{surely} (resp. \emph{almost surely}) \emph{winning} from a state $q$, if in the MDP $M^q = \langle Q, A, \Delta, q\rangle$,
every $\policy$-run satisfies $\Phi$ (resp. $\Pr_\policy[\Phi] = 1$).
We collect all such states from which a surely (resp. almost surely) winning policy exists in the winning region $\win^{\sure}_\Phi$ (resp. $\win^{\asure}_\Phi$).
We say a policy $\policy$ is \emph{surely} (resp. \emph{almost surely}) \emph{winning} in MDP $M$ for objective $\Phi$,
denoted by $(M,\policy) \models_\sure \Phi$ (resp. $(M,\policy) \models_\asure \template$),  if it is surely (resp. almost surely) winning from every state in winning region.

Standard algorithms for computing winning policies for sure (resp. almost sure) parity objectives in MDPs reduce the problem to 2-player (resp. $1\thalf$-player) parity games~\cite{AlmagorKV16}. This is obtained by partitioning the states of the MDP into two sets: the system player controls the states in one set and chooses the next action; the environment/random player controls the states in the other set and chooses the distribution over the next states.

\smallskip
\noindent\textbf{Strategy Templates.}\label{sec:PeSTel}
\emph{Strategy templates}~\cite{AnandNS23} collect an infinite number of strategies over a (stochastic) game in a concise data structure.
With the standard reduction of MDPs with parity objectives to 2-player (or $1\thalf$-player) parity games, we can use strategy templates to represent winning policies in MDPs.

Given an MDP $M = \langle Q, A, \Delta, \qinit\rangle$ and a strategy template is a tuple $\template= (\safegroup,\colivegroup,\livegroup)$ comprising a set of \emph{unsafe} state-action pairs $\safegroup\subseteq Q \times A$, a set of \emph{co-live} state-action pairs $\colivegroup\subseteq Q \times A$, and a set of \emph{live-groups} $\livegroup \subseteq 2^{Q \times A}$.
A strategy template $\template$ over $M$ induces the following set $\runs^\template$ of infinite runs
\begin{align*}
    \small
  \left\{\rho \in \runs^M \Bigg |
 \begin{array}{ll}
  & \forall (q,a) \in \safegroup: qa \not\in \rho\\
  \wedge &\forall (q,a) \in \colivegroup: qa \not\in \inf_{Q \times A}(\rho)\\
  \wedge &\forall H\in \livegroup: \src(H)\cap \inf_Q(\rho) \neq \emptyset 
  \\&\qquad\quad\rightarrow H \cap \inf_{Q \times A}(\rho) \neq \emptyset
 \end{array}
 \right\}
\end{align*}
Intuitively, a run $\rho\in \runs^\template$  satisfies the following conditions: (i) $\rho$ never uses the unsafe actions in~$\safegroup$,
(ii)  $\rho$ stops using the co-live actions in~$\colivegroup$ eventually, and
(iii) if $\rho$ visits the set of source states of a live-group $\livegroupSingleN \in \livegroup$ infinitely often, then it also uses the actions in $\livegroupSingleN$ infinitely many times. 
Given an MDP $M$, a policy $\policy$ in $M$ \emph{follows} a template $\template$ if
$(M,\policy) \models_\sure\runs^\template$. If $M$ is clear from the context we often abuse notation and write $\policy\models_\sure\template$ if $\policy$ follows $\template$.
A strategy template $\template$ is said to be \emph{surely} (resp.\ \emph{almost surely}) winning for the parity objective $\Phi$ if every policy that follows $\template$ is  surely (resp.\ almost surely) winning for $\Phi$.

Existing algorithms~\cite{AnandNS23,phalakarn2024winningstrategytemplatesstochastic} can compute a \emph{surely} (resp.\ \emph{almost surely}) \emph{winning} strategy template $\template_\sure$ (resp. $\template_\asure$) for a (stochastic) parity game.
Using these algorithms along with the standard reduction of MDPs to 2-player (or $1\thalf$-player) parity games, one can compute a winning strategy template for the MDP.
We denote the algorithm that computes a winning strategy template by $\parityTemp_\psure(M,\Phi)$ with $\psure\in\{\sure,\asure\}$.

\section{Dynamic $\omega$-Regular Shielding}
\label{sec:shieldalgo}\label{sec:shielding}
This section formalizes our novel dynamic shielding framework via Strategy-Template-based-Adaptive-Runtime-Shields (\Stars{}) schematically depicted in~\Cref{fig:overview}. 

\begin{figure*}[t]
 \begin{center}
	\scalebox{1}{
  \includegraphics[width=0.55\textwidth]{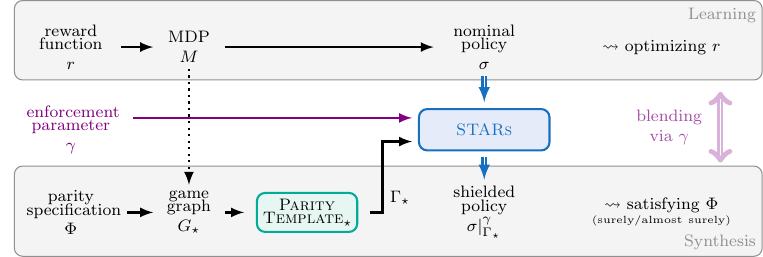}\hspace{0.4cm}
  \includegraphics[width=0.4\textwidth]{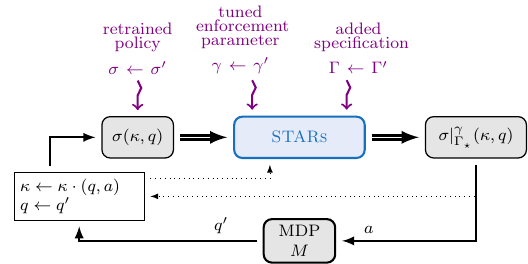} }
  \caption{Overview of \Stars synthesis (left) and runtime-application of \Stars (right). The detailed operation of \Stars is illustrated in \cref{fig:stesh}.
Cyan components are taken from the literature. Purple components illustrate dynamic adaptability.
  }\label{fig:overview}
 \end{center}
\end{figure*}

\begin{figure}[t]
 \begin{center}
  \includegraphics[width=0.5\textwidth]{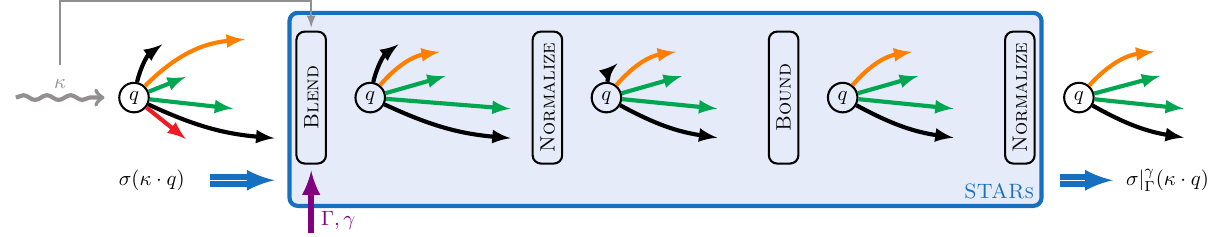}
  \caption{Illustration of dynamic interference via \Stars{}. Length of arrows indicate the relative probability of the corresponding action in $\mu\in\dist{A(q)}$. The strategy template $\Gamma=(\safegroup,\colivegroup,\livegroup)$ is illustrated via colors red ($\safegroup$), orange ($\colivegroup$) and green ($\livegroup$). Blending applies~\eqref{eq:shieldpolicy:gamma} in \cref{def:stesh}, bounding applies~\eqref{eq:shieldpolicy:eps} in \cref{def:stesh} and normalizing applies standard normalization, respectively.  }\label{fig:stesh}
 \end{center}
\end{figure}

\subsection{Dynamical Interference via \Stars{}}\label{sec:Applystesh}
Given a winning strategy template $\template_\psure = (\safegroup,\colivegroup,\livegroup)$ for the parity objective $\Phi$ interpret over an MDP, \Stars dynamically \emph{blend} a given nominal policy $\policy$ with the safety and liveness obligations of $\Phi$ localized in $\template_\psure$ as depicted in \cref{fig:stesh}.

Intuitively, to comply with the safety template $\safegroup$, \Stars set the probabilities of unsafe actions to zero, thereby preventing runs to reach states from where $\Phi$ cannot be satisfied. For each edge in the co-live group $\colivegroup$, \Stars maintain a counter that tracks how many times the edge has been sampled. Each time the edge is sampled, its probability is reduced, ensuring that runs eventually avoid co-live edges. 
Similarly, for each live group $\livegroup$, \Stars maintain a counter to track how many times the policy visits the source states of the group without sampling any of its actions. With each such visit, the shield incrementally increases the probability of sampling these actions based on the counter value. This guarantees that eventually, one of the actions in the live group is sampled (with probability close to $1$) if its source states are visited often enough. Once an action is sampled, the corresponding counter is reset, and the process repeats. 
In order to formalize the above intuition, we first formally define the history-dependent counter function for co-live and live groups discussed above.

\begin{definition}\label{def:counters}
 Let $M$ be an MDP and $\template=(\safegroup,\colivegroup,\livegroup)$ a strategy template interpreted over $M$. Further, let $\hist = \qinit a_0 q_1 a_1 \ldots q_n a_n\in FRuns^M$ be a finite run over $M$. Then we define for all $(q,a)\in D$: 
 $\counter{\edge{q}{a}}(\hist) := |\{i \mid \hist[i] = (q,a)\}|$,
 and for all $\livegroupSingle \in \livegroup$: 
 $\counter{\livegroupSingle}(\hist) := |\{i > \maxpref{\hist}{\livegroupSingle} \mid \hist[i] \in \{(q,a)\mid q\in\src(\livegroupSingle)\}\}|$,
 where $\maxpref{\hist}{\livegroupSingle} := j \text{ such that } \hist[j] \in \livegroupSingle \text{ and } \hist[j;\infty] \cap \livegroupSingle = \emptyset$.
\end{definition}

The above definition allows us to formally define how a \Star modifies the probability distribution $\distr(A(q))$ over actions chosen by $\strat$ in the current state $q$ reached with history $\hist$ using a template $\template_\psure$. Intuitively, the \emph{bias} towards satisfying $\Phi$ introduced by the counter-based modification of  $\distr(A(q))$ can be \emph{tuned} by an enforcement parameter $\gamma$, and a threshold parameter $\thres$, which can be changed dynamically at runtime.

\begin{definition}[\Stars{}]\label{def:stesh}
 Fix an MDP $M$, a finite run $\hist\in FRuns^M$ with $q=\last(\hist)$, a strategy template $\template$ interpreted over $M$, an enforcement parameter $\gamma$, and a threshold $\thres$. Then, the probability distribution $\mu\in\dist{A(q)}$ induces a \emph{shielded distribution} $\overline{\mu}\in\dist{A(q)}$ with $\overline{\mu}:=\normalize{\mu'}$ s.t.\ for all $a\in A(q)$ %
 \begin{subequations}\label{eq:shieldpolicy}
 \small
  \noindent
  \begin{equation}\label{eq:shieldpolicy:eps}
    \mu'(a):=\begin{cases}
      0 & \hspace{-0.2cm}\text{if } \normalize{\mu''}(a)\leq\thres,\\
      \normalize{\mu''}(a) & \text{otherwise}.
    \end{cases}
  \end{equation}
  \begin{equation}\label{eq:shieldpolicy:gamma}
    \mu''(a):= 
    \begin{cases}
      0 & \text{if } \edge{q}{a} \in \safegroup,\\
      \mu(a) - \gamma \cdot \counter{\edge{q}{a}}(\hist) & \text{if } \edge{q}{a} \in \colivegroup,\\
      \mu(a) + \gamma \cdot \counter{\livegroupSingle}(\hist) & \text{if } \edge{q}{a} \in \livegroupSingle,  \livegroupSingle \in \livegroup.
    \end{cases}
  \end{equation}
  \end{subequations}
\noindent
We write $\overline{\mu}=\Stars{}(\mu,\hist,\template,\gamma,\thres)$ to denote that $\overline{\mu}$ is obtained from $\mu$ via~\eqref{eq:shieldpolicy}. 
\end{definition}

The effect of shielding a policy as formalized in \cref{def:stesh} is illustrated in \cref{fig:stesh}. 
Intuitively,~\eqref{eq:shieldpolicy:gamma} ensures that the probability of taking certain actions in state $q$ is adapted via the counters induced by the history of the current run and the enforcement parameter $\gamma$. For $\gamma\sim 1$, these updates are very aggressive and for $\gamma\sim 0$, they are very mild. As the resulting function $\mu''$ is not a probability distribution anymore (as probabilities over $A(q)$ do not sum up to 1), 
we normalize it as in~\eqref{eq:shieldpolicy:eps} to impose the threshold $\thres>0$ to make sure that a live edge is surely taken after a finite number of time steps (dependent on $\gamma$ and~$\policy$).
In the end, we normalize the resulting distribution to obtain the final distribution.

\begin{remark}\label{rem:cornercase}
  We note that if there is an unsafe action $a$ in $\safegroup$ such that the current distribution $\mu$ assigns a probability of $1$ to it, i.e., $\mu(a)=1$, then $\overline{\mu}$ as in \cref{def:stesh} will be not well-defined as it assigns zero probability to all actions. 
  This corner case can be handled by perturbing the distribution $\mu$ slightly, e.g., by adding a small $\varepsilon>0$ to all actions before applying \Stars{}. 
\end{remark}

With the definition of shielded distribution in \cref{def:stesh}, the definition of a \emph{shielded policy} immediately follows.

\begin{definition}\label{def:shieldedPolicy}
 Given any MDP $M$, a strategy template $\template$ interpreted over $M$, a threshold $\thres$, and an enforcement parameter $\gamma >0$, a stochastic policy $\policy$ in $M$ induces the shielded policy $\policy|^{\template,\thres}_{\gamma}:FRuns^M \mapsto \dist{A} $ s.t.\ 
 $\policy|^{\template,\thres}_{\gamma}(\kappa)=\Stars{}(\policy(\hist),\hist,\template,\gamma,\thres).$
\end{definition}

To avoid the corner case discussed in \cref{rem:cornercase}, the definition assumes that the initial policy $\policy$ is stochastic, i.e., $\supp(\policy(\hist q)) = A(q)$ for all histories $\hist q$. This is without loss of generality as any deterministic policy can be converted into a stochastic one as discussed in \cref{rem:cornercase}.
Furthermore, as the resulting shielded policy is dependent on the history of a run, a policy is actually shielded via \Stars{} \emph{online} while generating a \emph{shielded run}, as illustrated in \cref{fig:overview}~(right). We emphasize that \Stars{} never modify the underlying policy and thereby maximize modularity between the nominal policy and constraint enforcement.

\subsection{Correctness of \Stars{}}\label{sec:Correctstesh}
Correctness of \Stars{} follows directly from the fact that they are based on  \emph{winning strategy templates} $\template_\psure$, which implies that the shielded policy $\policy|^{\template_\psure,\thres}_{\gamma}$ satisfies the objective $\Phi$ (almost) surely, if it follows the template. It remains to show that the shielded policy indeed follows the template.
As~\eqref{eq:shieldpolicy:gamma} ensures that the shielded policy assigns zero probability to unsafe edges and that the probability of taking co-live edges is reduced with each visit, the shielded policy will never take an unsafe edge and will eventually avoid co-live edges.
Furthermore, as~\eqref{eq:shieldpolicy:eps} increments the counter for live groups each time the source states are visited without any action from the group being taken, the shielded policy will eventually take an action from the live group.
In total, the shielded policy will follow the template $\template_\psure$ and therefore the shielded run satisfies $\Phi$.

\begin{restatable}{theorem}{restateShieldTemplate}
    \label{thm:shieldtemplate}\label{corollary:shieldedPolicy}
    Given the premises of \cref{def:shieldedPolicy} it holds that $\policy|^{\template_\psure,\thres}_{\gamma}$ follows $\template_\psure$.
    Moreover, if $\template_\psure:=\parityTemp_\psure(M,\Phi)$, then, every $\policy|^{\template_\psure,\thres}_{\gamma}$-run from the winning region of $\Phi$ satisfies $\Phi$ surely ($\sure$)/almost surely ($\asure$). %
\end{restatable}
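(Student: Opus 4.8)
The plan is to show that every $\policy|^{\template_\psure,\thres}_{\gamma}$-run $\rho$ lies in $\runs^{\template_\psure}$, i.e.\ satisfies the three template conditions, and then invoke the fact (stated in the preliminaries) that a winning template is followed only by (almost) surely winning policies. I would verify the three conditions in increasing order of difficulty.

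\textbf{Safety.} By~\eqref{eq:shieldpolicy:gamma}, whenever $\edge{q}{a}\in\safegroup$ we set $\mu''(a)=0$; since normalization only rescales non-negative values and~\eqref{eq:shieldpolicy:eps} only sets more values to zero, $\overline\mu(a)=0$. Hence no $\policy|^{\template_\psure,\thres}_{\gamma}$-run ever takes an edge in $\safegroup$, so the first conjunct of $\runs^{\template_\psure}$ holds. (Here I would cite \cref{rem:cornercase} / the stochasticity assumption of \cref{def:shieldedPolicy} to rule out the degenerate case where all mass sits on unsafe edges.)

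\textbf{Co-liveness.} Fix $\edge{q}{a}\in\colivegroup$ and suppose toward a contradiction that $qa$ occurs infinitely often along $\rho$. Then $\counter{\edge{q}{a}}(\hist)$ over the prefixes $\hist$ of $\rho$ ending at $q$ with the next action about to be $a$ grows without bound. Since $\mu(a)\le 1$, once $\counter{\edge{q}{a}}(\hist) > 1/\gamma$ we get $\mu''(a)\le \mu(a)-\gamma\cdot\counter{\edge{q}{a}}(\hist) < 0$, and after normalization $\overline\mu(a)=0$, contradicting that $a$ is taken at that step. (One subtlety: $\counter{}$ counts occurrences of the \emph{pair} $\edge{q}{a}$; since only finitely many occurrences can precede the first one at which the counter exceeds $1/\gamma$, the edge is taken only finitely often — formalize via a "last time the counter is small" argument.) Thus the second conjunct holds.

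\textbf{Liveness.} This is the main obstacle. Fix $\livegroupSingle\in\livegroup$ and assume $\src(\livegroupSingle)\cap\inf_Q(\rho)\neq\emptyset$; I must show some edge of $\livegroupSingle$ is taken infinitely often. Suppose not: then after some finite prefix no edge of $\livegroupSingle$ is ever taken again, so $\maxpref{\hist}{\livegroupSingle}$ stabilizes and $\counter{\livegroupSingle}(\hist)$ increases by one at each of the infinitely many subsequent visits to $\src(\livegroupSingle)$. At a visit to a state $q\in\src(\livegroupSingle)$ with counter value $k$, every action $a$ with $\edge{q}{a}\in\livegroupSingle$ gets $\mu''(a) = \mu(a) + \gamma k$ (using that $\safegroup\cap\livegroupSingle=\emptyset$ and $\colivegroup\cap\livegroupSingle=\emptyset$, which holds for the templates produced by $\parityTemp_\psure$ — I would flag this as a property of the construction in~\cite{AnandNS23}). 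Since $Q$ and $A$ are finite, there is a uniform bound $c = |A|$ on the number of actions at any state and a uniform lower bound on how large $\gamma k$ must be to force the normalized-then-thresholded mass of the non-$\livegroupSingle$ actions below $\thres$ each: once $\gamma k$ is large enough that, after normalizing $\mu''$, every action $a'$ with $\edge{q}{a'}\notin\livegroupSingle$ has $\normalize{\mu''}(a')\le\thres$, step~\eqref{eq:shieldpolicy:eps} zeroes all of them, so $\overline\mu$ is supported entirely on $\livegroupSingle$-edges and such an edge is taken surely at that step — contradiction. The key quantitative point to make precise is that this threshold on $k$ is finite and can be bounded uniformly in $q$ (using $\mu(a)\le 1$, $|A(q)|\le|A|$, and $\thres>0$), so it is reached after finitely many visits to $\src(\livegroupSingle)$; since there are infinitely many such visits, we reach it. This establishes the third conjunct, hence $\rho\in\runs^{\template_\psure}$.

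\textbf{Conclusion.} Having shown $\policy|^{\template_\psure,\thres}_{\gamma}$ follows $\template_\psure$, the "moreover" part is immediate: by definition $\parityTemp_\psure(M,\Phi)$ returns a surely (resp.\ almost surely) winning template, so every policy following it — in particular the shielded policy — is surely (resp.\ almost surely) winning for $\Phi$ from the winning region, i.e.\ every $\policy|^{\template_\psure,\thres}_{\gamma}$-run from the winning region satisfies $\Phi$ (surely / almost surely). I expect the co-liveness and safety parts to be a few lines each, and the liveness part to require the careful finite-horizon counter bound sketched above; the only external facts needed are the disjointness/well-formedness properties of templates from~\cite{AnandNS23,phalakarn2024winningstrategytemplatesstochastic} and the winning guarantee of $\parityTemp_\psure$.
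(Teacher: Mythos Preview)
Your overall structure mirrors the paper's proof: verify the three template conditions in turn, then invoke the winning guarantee of $\parityTemp_\psure$. The safety and co-liveness arguments are essentially identical to the paper's.

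The liveness argument, however, has a genuine gap. You implicitly assume that at the chosen state $q\in\src(\livegroupSingle)$, every action $a'$ with $\edge{q}{a'}\notin\livegroupSingle$ has a \emph{bounded} value $\mu''(a')$, so that once $\counter{\livegroupSingle}$ is large enough the normalized mass of such $a'$ drops below $\thres$. But $q$ may simultaneously be a source of other live-groups $\livegroupSingle_2,\ldots,\livegroupSingle_l$, and if some $\livegroupSingle_j$ is \emph{also} unsatisfied along $\rho$, then actions $a'$ with $\edge{q}{a'}\in\livegroupSingle_j\setminus\livegroupSingle$ receive the boost $\gamma\cdot\counter{\livegroupSingle_j}(\hist)$, which grows unboundedly as well. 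In that situation your claim that ``$\overline\mu$ is supported entirely on $\livegroupSingle$-edges'' can fail, and no contradiction for $\livegroupSingle$ alone follows. (The disjointness $\safegroup\cap\livegroupSingle=\emptyset$, $\colivegroup\cap\livegroupSingle=\emptyset$ that you invoke does not rule this out; it concerns unsafe/co-live edges, not overlap of live-group sources.)

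The paper closes this gap by grouping all live-groups with source $q$ into those unsatisfied along $\rho$ (indices $1,\ldots,l'$, with $\livegroupSingle_1=\livegroupSingle$) and those satisfied (indices $l'+1,\ldots,l$). For any action $a'$ not in $\bigcup_{i\le l'}\livegroupSingle_i$, its boost comes only from satisfied groups whose counters are reset infinitely often, while the normalizing denominator contains the unboundedly growing $\sum_{i\le l'}\counter{\livegroupSingle_i}$. Hence at some visit to $q$ every such $a'$ drops below $\thres$, forcing an edge from $\bigcup_{i\le l'}\livegroupSingle_i$ to be taken---contradicting that \emph{all} of these groups are unsatisfied. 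Your argument becomes correct once you replace ``$\livegroupSingle$'' by ``the union of unsatisfied live-groups at $q$'' in the final step; as written it only goes through under the extra hypothesis (true for $\parityTemp_\psure$ but not assumed in the first part of the theorem) that each state is the source of at most one live-group.
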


\subsection{Minimal Interference of \Stars{}}\label{sec:MinimalInterference}
Minimal interference of \Stars{} is, unfortunately, less straightforward to formalize. Based on existing notions, we characterise two orthogonal notions: (i) a minimal deviation in the distribution of observed histories, and (ii) a minimal expected average shielding cost measured in the expected number of non-optimal action choices.

\emph{History-based minimal interference} is inspired by a similar notion from~\cite{Elsayed-AlyBAET21} for safety shields: an action must be deactivated after a history $\hist$, if \emph{and only if} there exists a nonzero probability that the safety constraint would be violated in a bounded extension of $\hist$, regardless of the agent's policy. This argument extends to \Stars{} for both safety and co-live templates, ensuring minimal interference in these settings.
However, defining minimal interference for liveness templates is more challenging due to their inherently infinite nature, making bounded violations inapplicable. Instead, we establish minimal interference by showing that for any bounded execution $\hist$ that can be extended to a run that satisfies the liveness template, the probability of observing $\hist$ in the shielded execution remains close to its probability under the nominal policy. 

\begin{restatable}{theorem}{restateMinimalInterference}\label{thm:minimalinterference}
    Given the premises of \cref{corollary:shieldedPolicy} with $\shield{\policy} = \policy|^{\template_\psure,\thres}_{\gamma}$,   
    for any $\varepsilon > 0$ and for all $l \in \mathbb{N}$, there exist  parameters $\gamma,\thres > 0$ s.t.\ for all histories $\hist$ of length $l$ with $\hist \in \pref(\Phi)$, it holds that
    $\Pr_{\shield{\policy}}(\hist) > \Pr_{\policy}(\hist) -  \varepsilon$, where $\pref(\Phi)$ is the set of prefixes of runs satisfying $\Phi$.
\end{restatable}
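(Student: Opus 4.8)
The plan is to fix $\varepsilon > 0$ and $l \in \mathbb{N}$, and to bound, uniformly over all histories $\hist$ of length $l$ with $\hist \in \pref(\Phi)$, the multiplicative distortion that \Stars introduce at each of the (at most $l$) steps of $\hist$. The key observation is that since $\hist \in \pref(\Phi)$ and $\template_\psure$ is a winning template, $\hist$ can be extended to a run in $\runs^{\template_\psure}$; in particular $\hist$ itself uses no unsafe edge, so along $\hist$ the only modifications in \eqref{eq:shieldpolicy:gamma} come from co-live subtractions and live-group additions, and in \eqref{eq:shieldpolicy:eps} from the threshold $\thres$. I would write $\Pr_{\shield{\policy}}(\hist) = \prod_{i=0}^{l-1} \overline{\mu}_i(a_i)$ and $\Pr_{\policy}(\hist) = \prod_{i=0}^{l-1} \mu_i(a_i)$ where $\mu_i = \policy(\hist[0;i])$ and $\overline{\mu}_i = \Stars(\mu_i, \hist[0;i], \template_\psure, \gamma, \thres)$, and argue that each factor satisfies $\overline{\mu}_i(a_i) \ge \mu_i(a_i) - \eta$ for a quantity $\eta$ that can be made arbitrarily small by choosing $\gamma$ and $\thres$ small.

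The core estimate has two parts. First, the pre-normalization value $\mu_i''(a_i)$ deviates from $\mu_i(a_i)$ by at most $\gamma \cdot C$, where $C$ is a bound on the relevant counter value; since all counters along a history of length $l$ are bounded by $l$ (each co-live counter and each live-group counter counts occurrences within a run of length $\le l$), we get $|\mu_i''(a_i) - \mu_i(a_i)| \le \gamma l$, and similarly the total mass $\sum_a \max(\mu_i''(a),0)$ lies in $[1 - \gamma l |A|, 1 + \gamma l |A|]$, so normalization distorts each coordinate by a further controlled factor. Second, the thresholding in \eqref{eq:shieldpolicy:eps} only zeroes out actions whose (normalized) probability is $\le \thres$; since $a_i$ lies on a template-consistent extension, either $a_i$ is a live-group action (whose probability has been pushed \emph{up}, so it is not thresholded for $\thres$ small) or $\mu_i(a_i)$ is bounded below by the smallest positive probability the stochastic policy $\policy$ assigns — but here one must be careful, since $\policy$ is an arbitrary stochastic policy and that minimum could itself be tiny. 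The clean way around this: choose $\thres$ smaller than the minimum, over the finitely many states $q$ and the finitely many "relevant" actions, of a fixed fraction of $\mu(a)$ — but $\mu$ varies with $\hist$. I would instead phrase the bound as: for $\thres < \mu_i(a_i)/2$ (say), the action $a_i$ survives thresholding, and then quantify the loss. Since we only need the statement "there exist $\gamma, \thres > 0$" for the given $\varepsilon, l$, and since along $\pref(\Phi)$ the action actually taken has positive probability under $\policy$, one can absorb the dependence: pick $\thres$ small, then $\gamma$ small relative to $\thres$, and push the product of $l$ near-unit factors within $\varepsilon$ of $\prod \mu_i(a_i)$ via the elementary inequality $\prod(1 - \delta_i) \ge 1 - \sum \delta_i$.

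Concretely, the steps in order: (1) unfold both probabilities as length-$l$ products over the steps of $\hist$; (2) use $\hist \in \pref(\Phi) \subseteq \pref(\runs^{\template_\psure})$ to conclude no unsafe edge is used along $\hist$, so $\mu_i''$ is obtained from $\mu_i$ only by co-live decrements and live-group increments bounded in magnitude by $\gamma l$; (3) bound the normalization constant of $\mu_i''$ within $1 \pm \gamma l |A|$ and hence bound $\normalize{\mu_i''}(a_i)$ below by $\mu_i(a_i) - O(\gamma l |A|)$; (4) choose $\thres$ small enough (relative to the — finitely many, hence bounded-below — positive probabilities that matter for the finitely many histories of length $l$, which form a finite set since $A$ and the step count are finite and $\policy$ can be taken fixed for this bound) that $a_i$ is not thresholded, incurring only a further $O(\thres |A|)$ renormalization loss; (5) combine via $\prod_{i}(\mu_i(a_i) - \eta) \ge \prod_i \mu_i(a_i) - l\eta$ and set $\eta$ so that $l\eta < \varepsilon$, which fixes the required $\gamma, \thres$. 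The main obstacle is step (4): making the threshold argument uniform without assuming a lower bound on $\policy$'s probabilities. The resolution I favor is to exploit that the set of histories of length $l$ is finite (for a fixed MDP), so $\min\{\mu_i(a_i) : i < l,\ \hist \text{ of length } l,\ \hist \in \pref(\Phi)\} =: p_{\min} > 0$ is a well-defined positive constant, and then any $\thres < p_{\min}/2$ works; alternatively one reformulates the theorem's "$\varepsilon$" to be interpreted per-history with history-dependent parameters, but the finiteness argument keeps the single-$(\gamma,\thres)$ form as stated. I would also double-check the edge case flagged in \Cref{rem:cornercase} is excluded here because $\policy$ is assumed stochastic, so $\mu_i(a_i) > 0$ always and $\overline{\mu}_i$ is well-defined.
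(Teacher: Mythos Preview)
Your proposal is correct and follows essentially the same approach as the paper's proof: unfold both probabilities as $l$-step products, use $\hist \in \pref(\Phi)$ to exclude unsafe edges, bound all counters along $\hist$ by $l$ (respectively $|\livegroup|\,l$), control the per-step distortion via small $\gamma$ and small $\thres$ (the latter justified, exactly as you note, by the finiteness of length-$l$ histories), and combine across the $l$ factors. The only cosmetic differences are that the paper carries the transition probabilities $\pr(q_{i+1}\mid q_i,a_i)$ explicitly in the product (harmless, since they are common to both sides) and uses a multiplicative factoring $\Pr_\policy(\hist)\cdot\bigl(1-(\tfrac{1-l\gamma/x}{1+|\livegroup| l\gamma})^l\bigr)$ in place of your additive bound $\prod_i(\mu_i(a_i)-\eta)\ge\prod_i\mu_i(a_i)-l\eta$.
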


\noindent\emph{Minimal shielding costs} are inspired by a similar notion in~\cite{chatterjee2023shielding}, where a cost function is used to measure how much a (deterministic) shield changes the action choices of a (pure) policy. 
A natural extension of this notion to stochastic policies is to define a shielding cost based on the distance between the distributions of the actions taken by the shielded and the nominal policies. Thereby, the shielding cost can also vary based on the history of the run, allowing for a more fine-grained analysis as in~\cite{chatterjee2023shielding}.
To formalize this, we define a history-based cost function $\cost:\fruns \rightarrow [0,W]$ that assigns a cost to the history $\hist$ and the cost of shielding the policy $\policy$ at $\hist$ is defined as $\cost(\hist,\policy,\shield{\policy}) = \cost(\hist)\cdot \distance(\policy(\hist),\shield{\policy}(\hist))$.
This cost captures the interference as the difference between the action distributions of the original and shielded policies based on the cost of shielding at $\hist$.
This can be generalized to the cost of a run $\rho$ as 
$\cost(\rho,\policy,\shield{\policy}) = \limsup_{l\to\infty} \frac{1}{l}\sum_{i=0}^{l-1} \cost(\rho[0;i],\policy,\shield{\policy})$,
capturing the average cost of the difference between both policies over a run.

In the following, 
we show that the expected average cost of the shielded policy is bounded whenever the template considered only contains liveness templates.
These restrictions are needed because the shielded policy may have to stop (or eventually stop) taking certain actions—such as unsafe or co-live ones—to satisfy the $\omega$-regular constraint. 

\begin{restatable}{theorem}{restateMinimalInterferenceWeighted}\label{thm:minimalinterferenceWeighted}
Given the premises of \cref{corollary:shieldedPolicy} with $\shield{\policy} = \policy|^{\template_\psure,\thres}_{\gamma}$, $\template_\psure = (\emptyset,\emptyset,\livegroup)$, and cost function $\cost:\fruns^M \rightarrow [0,W]$, for any $\varepsilon > 0$, suitable $\gamma,\thres > 0$ exist such that $\exp_{\rho\sim\shield{\policy}} \cost(\rho,\policy,\shield{\policy}) < \varepsilon$.
\end{restatable}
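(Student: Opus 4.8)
The plan is to exploit the special structure of the template $\template_\psure = (\emptyset,\emptyset,\livegroup)$: there are no unsafe and no co-live edges, so by~\eqref{eq:shieldpolicy:gamma} the shield only ever \emph{adds} mass $\gamma\cdot\counter{\livegroupSingle}(\hist)$ to live-group edges, never zeroing anything out (modulo the threshold step~\eqref{eq:shieldpolicy:eps}, which I return to below). Consequently $\distance(\policy(\hist),\shield{\policy}(\hist))$ is small exactly when, at $\hist$, either the current state is not a source of any live group, or all live-group counters $\counter{\livegroupSingle}(\hist)$ are small, or $\policy(\hist)$ already places enough mass on a live edge. The first step is therefore to bound the per-step cost pointwise: show that there is a function $\beta(\gamma,k)$ with $\beta(\gamma,k)\to 0$ as $\gamma\to 0$ for each fixed $k$, such that $\distance(\policy(\hist),\shield{\policy}(\hist)) \le \beta\bigl(\gamma, \max_{\livegroupSingle\in\livegroup}\counter{\livegroupSingle}(\hist)\bigr)$, plus a ``threshold term'' accounting for~\eqref{eq:shieldpolicy:eps} that can be made $\le \varepsilon'$ by choosing $\thres$ small relative to the minimum positive probability that the (fixed, stochastic) policy $\policy$ assigns — here I would use that $\policy$ is stochastic with finitely many states, so $\inf_{\hist,a}\policy(\hist)(a)$ over the finitely many $(\last(\hist),a)$ is bounded below, hence $\thres$ can be chosen so that~\eqref{eq:shieldpolicy:eps} never deletes an action that was not already going to be suppressed in the limit.

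The second, and I expect main, step is to control the \emph{typical magnitude} of the live-group counters along a shielded run. The key mechanism, already described informally before \cref{def:counters}, is self-correcting: whenever $\counter{\livegroupSingle}(\hist)$ grows, the shield boosts the probability of $\livegroupSingle$'s edges proportionally, so once the counter reaches roughly $1/\gamma$ the live edge is taken with probability near $1$ and the counter resets. Thus for fixed $\gamma$ the counters are (stochastically) bounded; more quantitatively, I would show that along a $\shield{\policy}$-run the long-run frequency of steps at which some counter exceeds a threshold $K$ decays, uniformly in $\gamma$, so that the Cesàro average $\limsup_l \frac1l\sum_{i<l}\cost(\rho[0;i],\policy,\shield{\policy})$ is dominated by $W\cdot\beta(\gamma,K)$ on the ``good'' steps plus $W$ times a vanishing frequency of ``bad'' steps. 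Concretely: partition steps by the value of the largest active counter; steps with counter value $k$ contribute at most $W\beta(\gamma,k)$ each; and the expected frequency of steps with counter value $\ge k$ is at most some $p_k$ with $\sum_k p_k < \infty$ (geometric-type tail), because from any source visit with counter $k$ the probability of \emph{not} sampling a live edge within the next source visit is $\le 1 - (\policy\text{-mass} + \gamma k)$, which becomes summable. Taking expectations over $\rho\sim\shield{\policy}$ and using bounded convergence then yields $\exp_{\rho\sim\shield{\policy}}\cost(\rho,\policy,\shield{\policy}) \le W\sum_k p_k \beta(\gamma,k)$, and since $\beta(\gamma,k)\to 0$ as $\gamma\to 0$ for each $k$ while $\sum_k p_k\beta(\gamma,k) \le \sum_k p_k < \infty$ is a convergent majorant, dominated convergence in $k$ gives that the whole sum is $<\varepsilon$ for $\gamma$ small enough. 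Combining with the $\thres$-choice from the first step finishes the proof.

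The main obstacle I anticipate is making the counter-tail bound $p_k$ genuinely \emph{uniform in $\gamma$} (so that the order of limits is legitimate): small $\gamma$ makes each per-step correction weaker, so the counters drift higher before self-correcting, and one must check that the extra mass $\gamma k$ still forces a reset at value $k \sim 1/\gamma$, giving tails like $p_k \approx (1-\gamma k)_+ \cdots$ that are summable with a sum that does \emph{not} blow up as $\gamma\to 0$. The clean way is to reparametrize: write the bound in terms of $\gamma k$, note the ``effective threshold'' is $k^\ast \approx 1/\gamma$, and show $\sum_k p_k\beta(\gamma,k)$ is, after this substitution, a Riemann-type sum that converges to $0$ with $\gamma$. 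A secondary subtlety is that $\maxpref{\hist}{\livegroupSingle}$ and hence the counters depend on the whole history and on which live groups share source states; handling several live groups simultaneously requires a union bound over the (finitely many) groups and observing that resetting one group's counter does not reset another's, so one should really track the vector of counters and bound the frequency of the event ``some coordinate exceeds $k$'', which is still controlled by a finite union of the single-group tails.
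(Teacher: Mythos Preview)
Your outline is correct in spirit and tracks the paper's argument closely: bound the per-step $\distance$ in terms of the live-group counter(s) times $\gamma$ (plus a $\thres$-term), then control the distribution of the counters along a shielded run. The paper carries this out by passing to the product MDP $M'$ (states $\times$ counter-tuples), on which $\shield{\policy}$ is stationary, and rewriting the Ces\`aro expectation as a spatial expectation $\exp_{(q,C)\sim d_{\shield{\policy}}}[\,\cdot\,]$; this is exactly your ``partition steps by counter value'' argument, made rigorous via the stationary distribution.

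The one place where you make your life harder than necessary is the ``main obstacle'' you flag, uniformity of the counter tail in $\gamma$. Your bound $\Pr[\text{no live edge sampled}\mid \text{counter}=k]\le 1 - (\policy\text{-mass}+\gamma k)$ is correct but you then worry about summing it. The simpler observation, which the paper uses, is that since $\template_\psure=(\emptyset,\emptyset,\livegroup)$ the shield only \emph{adds} mass to live edges, hence the probability that the counter for $\livegroupSingle$ increments is at most $1-\policy(q,A(q)\cap\livegroupSingle)\le m<1$ with $m$ depending only on the (stochastic) nominal policy $\policy$, \emph{not} on $\gamma$. This gives a geometric tail $p_k\le m^k$ uniformly in $\gamma$, and $\exp[|C|]\le 1/(1-m)^2$ independent of $\gamma$. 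Combined with the pointwise bound $\distance\le |C|\gamma + |A|\thres$ (your $\beta(\gamma,k)\approx k\gamma$), the expected cost is $\lesssim W|A|\gamma/(1-m)^2 + W|A|^2\thres$, which is made $<\varepsilon$ by choosing $\gamma,\thres$ small. No reparametrization or Riemann-sum limit is needed; your dominated-convergence-in-$k$ step becomes a direct product of a $\gamma$-free constant and $\gamma$.
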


We note that the corner case discussed in \cref{rem:cornercase} does not compromise these results. In \cref{thm:minimalinterference}, any history $\hist$ satisfying $\hist \models \pref(\Phi)$ inherently avoids unsafe actions. In \cref{thm:minimalinterferenceWeighted}, the assumption that the strategy template excludes unsafe and co-live actions ensures that the bound on the expected average cost remains valid.

\subsection{Dynamic Adaptations of \Stars{}}\label{sec:incremental}%
So far, we have considered a shielding scenario for a \emph{static} parity objective $\Phi$. 
However, a major strength of strategy templates is their \emph{efficient compositionality} and \emph{fault-tolerance}, which allow for further dynamic adaptations of \Stars{}. 

\emph{Compositionality} facilitates the incremental integration of multiple $\omega$-regular specifications into \Stars{}. By using the existing algorithm \textsc{ComposeTemplate} from~\cite[Alg.4]{AnandNS23} we can compute \Stars for \emph{generalized parity constraints} of the form $\Phi = \land_{i=1}^k \Phi_i$, where each $\Phi_i$ represents a parity constraint over $G_\psure^M$. Crucially, these objectives $\Phi_i$ may not be available all at once but might arrive incrementally over time, leading to the need to update the applied shield at runtime. As \textsc{ComposeTemplate}  simply combines strategy templates for all objectives into a single (non-conflicting) template, \cref{thm:shieldtemplate} and \cref{thm:minimalinterference} also apply in this case, as long as the run is in the combined winning region of all objectives during the update. 

\emph{Fault-tolerance} ensures that \Stars{} can handle the occasional or persistent unavailability of actions correctly. Concretely, persistent faults are addressed by marking actions as unsafe and resolving conflicts as needed~ (see~\cite[Alg.5]{AnandNS23}), while occasional faults are handled by temporarily excluding the unavailable actions from the template~ (see~\cite[Sec.5.2]{AnandNS23}).

\begin{remark}\label{rem:incremental}
A common assumption in robotic applications is that (incrementally arriving) \emph{liveness} specifications are satisfiable from every safe node in the workspace~\cite{GundanaKressGazitRobots21,KantarosPappasEtAllRobots21, KallurayaPappasKantarosRobots2023,PacheckKressGazitRobots23,ZiyangEtAlRobots24} -- most robotic systems can simply invert their path by suitable motions to return to all relevant positions in the workspace.
Using the common decomposition of $\omega$-regular objectives $\varphi$ into a safety part 
$\varphi_s$ and a liveness part $\varphi_\ell$, one can restrict the newly arriving objectives to  liveness obligations only. In this case, incremental synthesis never leads to an decreased winning  region in such robot applications. This is in fact the case in the incremental  instances considered in the \benchmark benchmark used for evaluation.
\end{remark}

\subsection{Optimality of \Stars{} in Rewardful MDPs}\label{sec:optimality}
While we already formalized \emph{correctness and minimal interference} of \Stars{}, now we 
strengthen this result further, i.e., we show that whenever nominal policies have been computed 
to optimize a given reward function (under certain assumptions), 
\Stars{} produce a shielded policy which achieves a reward which is $\epsilon$-close to the nominal one while additionally guiding the agent to (almost) surely satisfy an $\omega$-regular correctness specification. 

An optimal policy over an MDP is typically computed (e.g.\ via reinforcement learning (RL)) by associating transitions with \emph{reward functions} which are typically Markovian and assign utility to state-action pairs. Formally, a \emph{rewardful MDP} is denoted as a tuple $(M,r)$ where $M$ is an MDP equipped with a reward function $r : Q \times A \mapsto\R$. 
Such an MDP under a policy $\policy$ determines a sequence of random rewards $r(X_i,Y_i)$ for $i \geq 0$, where $X_i$ and $Y_i$ are the random variables denoting the $i^{th}$ state and $i^{th}$ action, respectively.
Given a rewardful MDP $(M,r)$ with initial state $\qinit$ and a policy $\policy$, we define the \emph{discounted reward} and the \emph{average reward} 
    \begin{align*}
    &\textstyle
     \discreward{\lambda}{\qinit}{\policy}:=\lim_{N \to\infty} \exp^{\qinit}_{\policy} \left( \sum_{0 \leq i \leq N} \lambda^i r(X_i,Y_i) \right)
     \\
        &\textstyle
     \avgreward{\qinit}{\policy}:=\limsup_{N \to\infty} \frac{1}{N} \exp^{\qinit}_{\policy} \left( \sum_{0 \leq i \leq N} r(X_i,Y_i) \right).\label{equ:average}
    \end{align*}

For discounted rewards, the impact of obtained rewards decreases with time. Therefore, the optimal reward achievable by a policy over a given MDP 
significantly depends on the bounded (initial) executions possible over this MDP. Hence, the $\epsilon$-closeness follows directly from \cref{thm:minimalinterference}.

For average rewards, the optimal average reward is equivalently impacted by rewards collected over the entire (infinite) length of runs compliant with the policy. This implies, that a policy can only satisfy an $\omega$-regular property (almost) surely \emph{and} optimize the average reward, if it can \enquote{switch} between their satisfaction by alternating infinitely often between finite intervals which satisfy either one. 
This, however, is only possible if the underlying MDP is `nice' enough to allow for this alternation.
In particular, it is known that optimal average rewards is usually obtained by taking the maximum over the achievable rewards in each \emph{good-end components} of the MDP (see~\cite{AlmagorKV16} for details). 
Therefore, the $\epsilon$-closeness of the average reward can be guaranteed when the MDP is a good-end component for the specification.

\begin{theorem}\label{thm:optimality}
  Given the premises of \cref{corollary:shieldedPolicy} with $\shield{\policy} = \policy|^{\template_\psure,\thres}_{\gamma}$ and $\win^{\psure}_\Phi = Q$, for every $\varepsilon > 0$, with suitable parameters $\thres,\gamma > 0$ it holds that
    $\discreward{\lambda}{\qinit}{\shield{\policy}} > \discreward{\lambda}{\qinit}{\policy} - \varepsilon$.
  Furthermore, if the MDP is a good-end component, then with suitable parameters, it holds that
    $\avgreward{\qinit}{\shield{\policy}} > \avgreward{\qinit}{\policy} - \varepsilon$.
\end{theorem}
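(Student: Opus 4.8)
The plan is to handle the discounted and average parts separately; both reduce to the interference bounds already in hand --- \cref{thm:minimalinterference} for the former, \cref{thm:minimalinterferenceWeighted} for the latter --- together with standard facts about rewards on finite MDPs. Write $r_{\max}:=\max_{q\in Q,\,a\in A}\abs{r(q,a)}<\infty$. One preliminary observation does most of the work: since $\win^{\psure}_\Phi=Q$, every finite run $\hist$ of $M$ lies in $\pref(\Phi)$, because from $\last(\hist)\in\win^{\psure}_\Phi$ there is a (sure, resp.\ almost sure) winning policy, so some infinite run extending $\hist$ realizes the parity objective. In particular every history reachable under $\policy$ or $\shield{\policy}$ is in $\pref(\Phi)$, so \cref{thm:minimalinterference} applies to all of them; and since $\policy$ is stochastic, $\supp\big(\Pr_{\shield{\policy}}(\cdot)\big)\subseteq\supp\big(\Pr_{\policy}(\cdot)\big)$ on runs of any fixed length.

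\textbf{Discounted reward.} Fix $\varepsilon>0$ and choose $N$ with $\tfrac{\lambda^{N+1}}{1-\lambda}r_{\max}<\varepsilon/3$, so that for either policy the contribution of steps $>N$ is at most $\varepsilon/3$ in absolute value. Since the discounted reward of a single run up to step $N$ lies in $[-\tfrac{r_{\max}}{1-\lambda},\tfrac{r_{\max}}{1-\lambda}]$, the two finite-horizon expectations differ by at most $\tfrac{r_{\max}}{1-\lambda}\sum_{\abs{\hist}=N}\abs{\Pr_{\policy}(\hist)-\Pr_{\shield{\policy}}(\hist)}$. Apply \cref{thm:minimalinterference} with horizon $l:=N$ and accuracy $\varepsilon'>0$ (to be fixed) to get $\gamma,\thres$ with $\Pr_{\shield{\policy}}(\hist)>\Pr_{\policy}(\hist)-\varepsilon'$ for every length-$N$ history (all of which are in $\pref(\Phi)$). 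As both marginals are probability distributions on the $K_N$ runs of length $N$ (finitely many), $\sum_{\hist}\big(\Pr_{\policy}(\hist)-\Pr_{\shield{\policy}}(\hist)\big)^{+}\le K_N\varepsilon'$, and $\sum_{\hist}\abs{\Pr_{\policy}(\hist)-\Pr_{\shield{\policy}}(\hist)}$ is twice that. Choosing $\varepsilon'$ small in terms of $N,\lambda,r_{\max}$ makes the finite-horizon discrepancy $<\varepsilon/3$; summing the three bounds gives $\discreward{\lambda}{\qinit}{\shield{\policy}}>\discreward{\lambda}{\qinit}{\policy}-\varepsilon$.

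\textbf{Average reward.} Here I would first use the good-end-component hypothesis to bring the template into the form $\template_\psure=(\emptyset,\emptyset,\livegroup)$ demanded by \cref{thm:minimalinterferenceWeighted}: in a good-end component nothing must be permanently forbidden (it is strongly connected and its dominant colour is even, so revisiting the relevant states infinitely often already wins), hence the winning obligation is purely a liveness one. Instantiating \cref{thm:minimalinterferenceWeighted} with the constant cost $\cost\equiv 1$ and accuracy $\varepsilon'>0$ yields $\gamma,\thres$ with $\exp_{\rho\sim\shield{\policy}}\!\big(\limsup_{l}\tfrac1l\sum_{i<l}\distance(\policy(\rho[0;i]),\shield{\policy}(\rho[0;i]))\big)<\varepsilon'$, i.e.\ along a shielded run the long-run expected fraction of steps at which the shield actually perturbs the action distribution is $<\varepsilon'$. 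I would then split a shielded run into maximal \emph{nominal stretches}, on which $\shield{\policy}$ plays exactly $\policy$, separated by perturbed steps of total density $<\varepsilon'$ (absorbing an $\varepsilon'$-term for steps on which the distributions differ only slightly). A perturbed step costs at most $2r_{\max}$ relative to the nominal choice; on a nominal stretch reward accrues as under $\policy$; and since $M$ is a good-end component (hence communicating) and $\policy$ optimises the average reward, the gain of $\policy$ is state-independent with bounded bias, so a long nominal stretch from any state accrues reward at rate $\avgreward{\qinit}{\policy}$ up to a vanishing transient. Summing over the stretches gives $\avgreward{\qinit}{\shield{\policy}}\ge(1-\varepsilon')\avgreward{\qinit}{\policy}-\varepsilon' r_{\max}-o(1)>\avgreward{\qinit}{\policy}-\varepsilon$ once $\varepsilon'$ is small.

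\textbf{Main obstacle.} The discounted part is routine finite-horizon truncation. The real difficulty is the average part: unlike discounting, a rare perturbation is not automatically negligible, since it diverts the run onto a different trajectory whose long-run reward need not match the nominal one. The plan copes with this only by leaning on the good-end-component structure --- state-independence of the optimal gain, which is why that hypothesis (and the implicit assumption that $\policy$ is reward-optimal) is essential --- together with the vanishing interference density from \cref{thm:minimalinterferenceWeighted}. I expect the technical crux to be the bookkeeping around this step: bounding the transient (bias) incurred after each interference, handling the interchange of $\limsup$ and expectation inherent in $\avgreward{\cdot}{\cdot}$, and making the reduction of the winning template to live groups on a good-end component precise.
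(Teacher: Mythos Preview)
Your discounted argument is essentially the paper's: truncate at a finite horizon, apply \cref{thm:minimalinterference} to every length-$N$ history (all of which lie in $\pref(\Phi)$ since $\win^{\psure}_\Phi=Q$), and bound the tail geometrically.

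For the average reward the paper takes a shorter route, and your sketch has a real gap. The paper does not decompose runs at all; it passes to the finite extended MDP $M'=M\times(\text{bounded counter tuples})$, on which both $\policy$ and $\shield{\policy}$ are stationary, and applies the average-reward \emph{policy difference lemma}:
\[
\avgreward{\qinit}{\policy}-\avgreward{\qinit}{\shield{\policy}}
\;=\;\exp_{(q,C)\sim d_{\shield{\policy}}}\sum_{a\in A(q)}\bigl(\policy(q,a)-\shield{\policy}((q,C),a)\bigr)\,\biasact(q,a),
\]
where $\biasact$ is the action-bias of $\policy$. Since $\policy$ is stochastic on an SCC its induced chain is irreducible, so $\biasact$ exists and is bounded --- you do \emph{not} need $\policy$ to be reward-optimal; that assumption is neither in the statement nor in the proof. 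The right-hand side is then at most $\|\biasact\|_\infty$ times the expected total-variation distance under $d_{\shield{\policy}}$, which is exactly the quantity bounded inside the proof of \cref{thm:minimalinterferenceWeighted}. No pathwise analysis, no $\limsup$/expectation interchange. The reduction of the template to the form $(\emptyset,\emptyset,\livegroup)$ on a good end component is proved as a separate structural lemma about $\parityTemp_\psure$; it is not immediate from ``revisiting the relevant states already wins''.

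The gap in your plan is the phrase ``nominal stretches on which $\shield{\policy}$ plays exactly $\policy$''. At every state that is the source of a live group with positive counter the shield \emph{does} perturb the distribution, so there need be no step --- let alone long stretches --- with $\shield{\policy}(\hist)=\policy(\hist)$. \cref{thm:minimalinterferenceWeighted} bounds the expected \emph{average magnitude} of the perturbation, not the density of unperturbed steps; your parenthetical ``absorbing an $\varepsilon'$-term'' is precisely the missing argument. To salvage the decomposition you would have to introduce, at each step, an optimal coupling of the two action distributions so that the sampled actions disagree with probability $\distance(\cdot,\cdot)$; then ``perturbed steps'' become coupling failures of expected density $<\varepsilon'$, and your bias-per-restart bookkeeping goes through. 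That is workable, but strictly more effort than the one-line policy-difference identity, which already packages your ``transient after each interference'' intuition.
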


\begin{remark}\label{rem:WequalQ}
 We remark that the assumption $\win^{\psure}_\Phi = Q$ in \cref{thm:optimality} is not restrictive for two main reasons. First, if $\win^{\psure}_\Phi \subsetneq Q$, we can use $\win^{\psure}_\Phi$ as an additional constraint in existing safe reinforcement learning frameworks. For instance, one can use \emph{preemptive safety shielding} proposed in~\cite{alshiekh2018safe}. This allows to learn an optimal policy within $\win^{\psure}_\Phi$ which directly allows to transfer the results from \cref{thm:optimality}. 
 Second, we recall the discussion of \cref{rem:incremental} to note that including the safety-part of the objective into the learning process does not harm the incremental adaptation of \Stars{} when new (liveness) specifications arrive.
\end{remark}

\section{Experiments}
\label{sec:experiments}
We empirically evaluate STARs on three benchmarks introduced in \Cref{sec:introLuna} to demonstrate their effectiveness in: (i) enforcing $\omega$-regular (including liveness) properties at runtime, (ii) enabling dynamic specification updates without retraining, and (iii) scaling to complex environments.
To showcase this, we have implemented our shielding algorithm \applyStars{} (as depicted in~\cref{fig:overview}) in a Python-based prototype tool \tool{} (Monitoring and Adaptive Runtime Guide). %
The experiments were 
run on a 32-core Debian machine with an Intel Xeon E5-V2 CPU (3.3 GHz) and up to 256 GB of RAM.
Recordings of some simulations are available at 
{\textcolor{blue}{\url{http://anonymous.4open.science/w/MARGA}}}.

\begin{figure*}[t]
\centering
	\begin{subfigure}[t]{0.32\textwidth}
		\includegraphics[width=1.1\textwidth]{./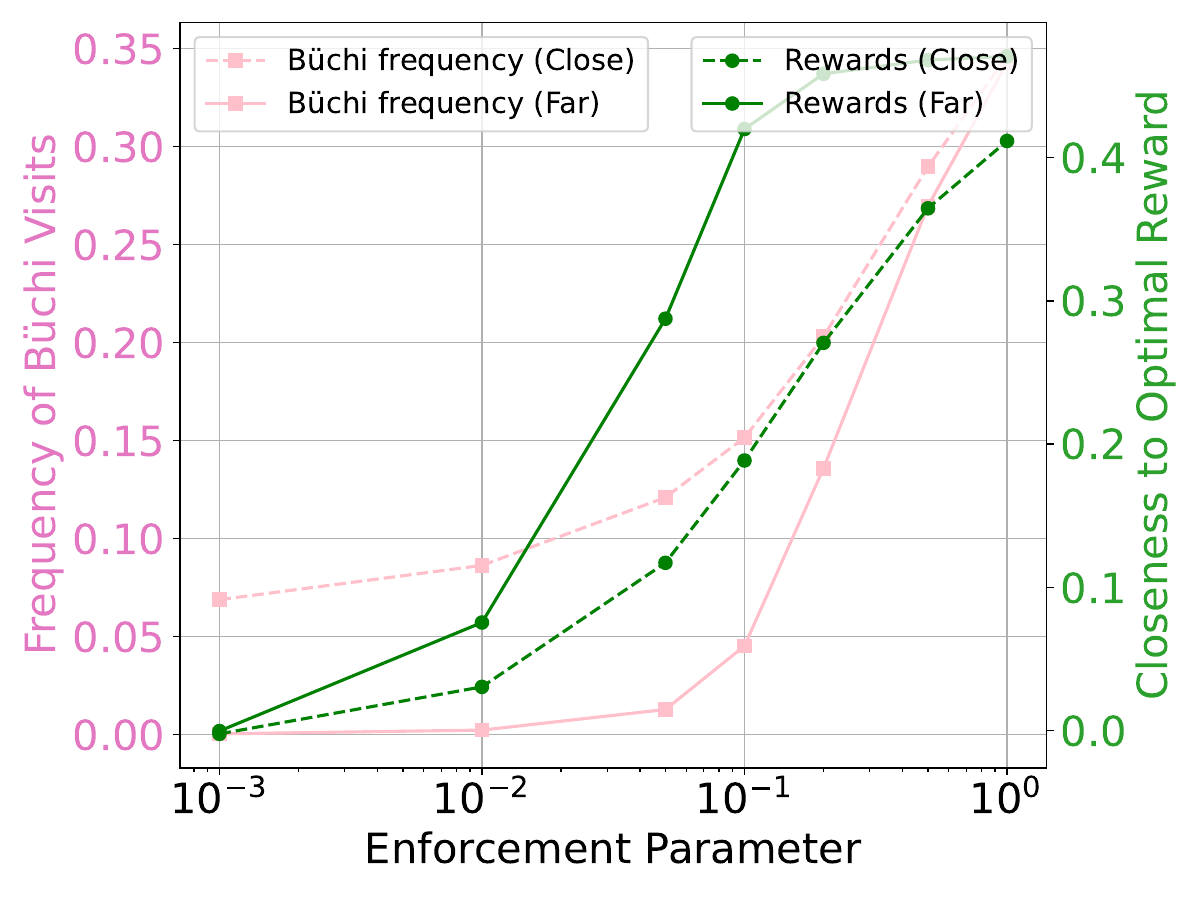}
		\caption{}
		\label{fig:enforcementParameterTrend}
	\end{subfigure}
	\hfill
	\begin{subfigure}[t]{0.32\textwidth}
		\includegraphics[width=1.2\textwidth]{./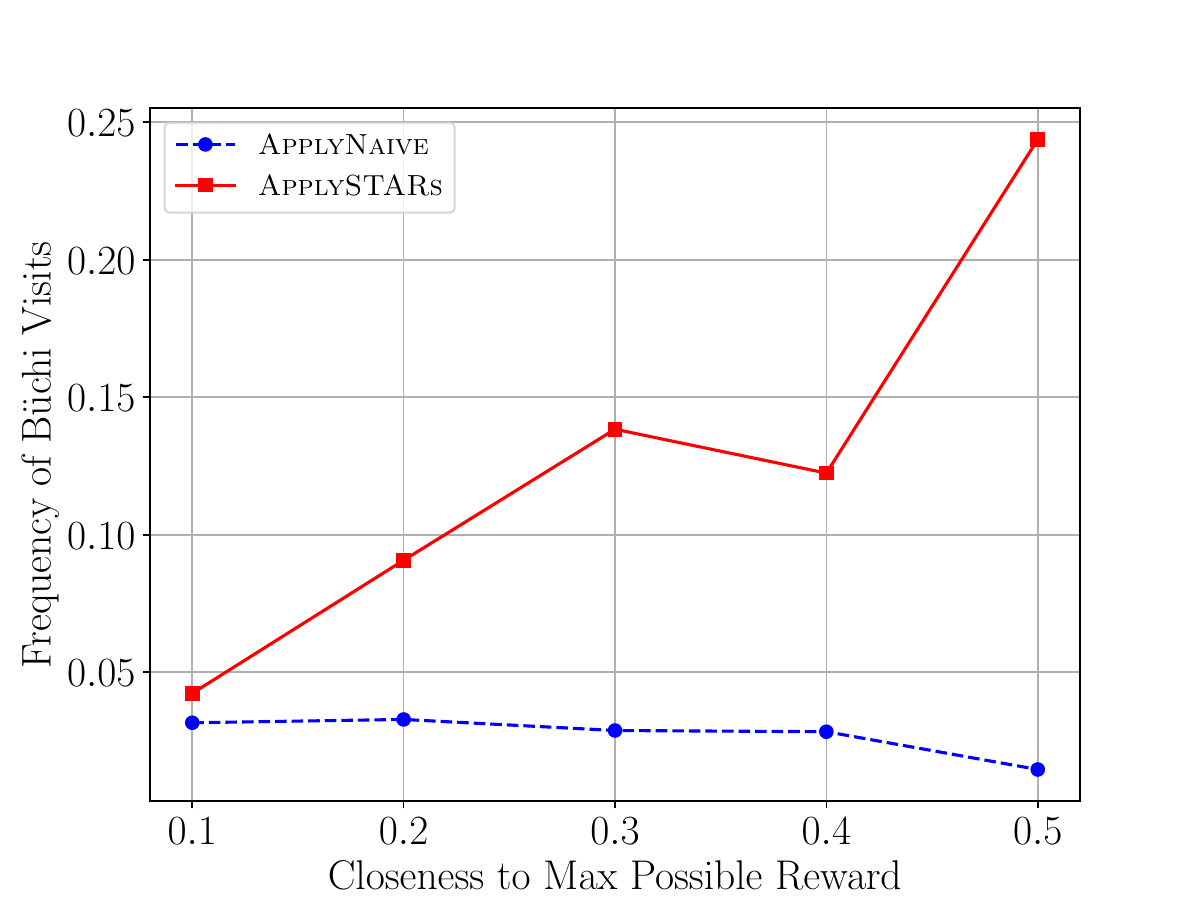}
		\caption{}
		\label{fig:thresholdVsBuechi}
	\end{subfigure}
	\hfill
	\begin{subfigure}[t]{0.25\textwidth}
		\includegraphics[width=\textwidth]{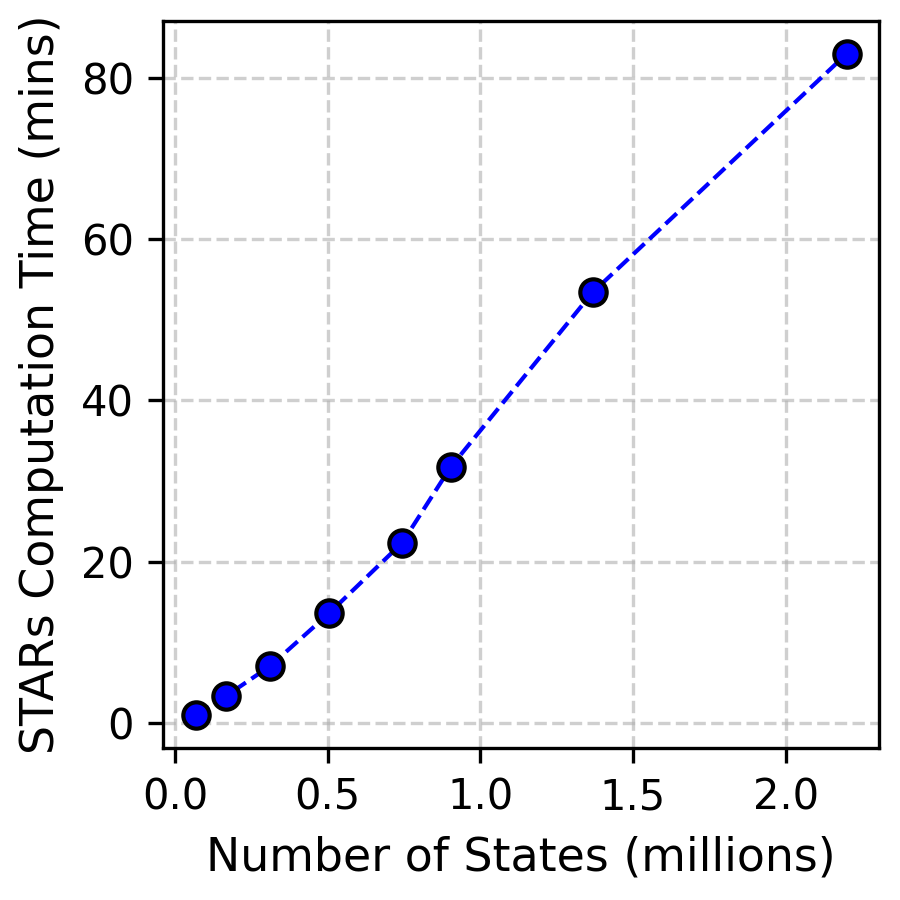}
		\caption{}
		\label{fig:overcooked}
	\end{subfigure}
	\vspace{-0.3cm}
	\caption{Evaluation summary. (a) Effect of $\gamma$ on the \buchi frequency and average reward for \applyStars in \benchmark{} (b) \buchi frequency vs. average reward for \applyStars and \applyNaive (c) Scalability of \applyStars on Overcooked-AI.}
	\label{plot:experiments}
\end{figure*}
\begin{figure}
	\begin{subfigure}[t]{0.4\textwidth}
		\includegraphics[width=\textwidth]{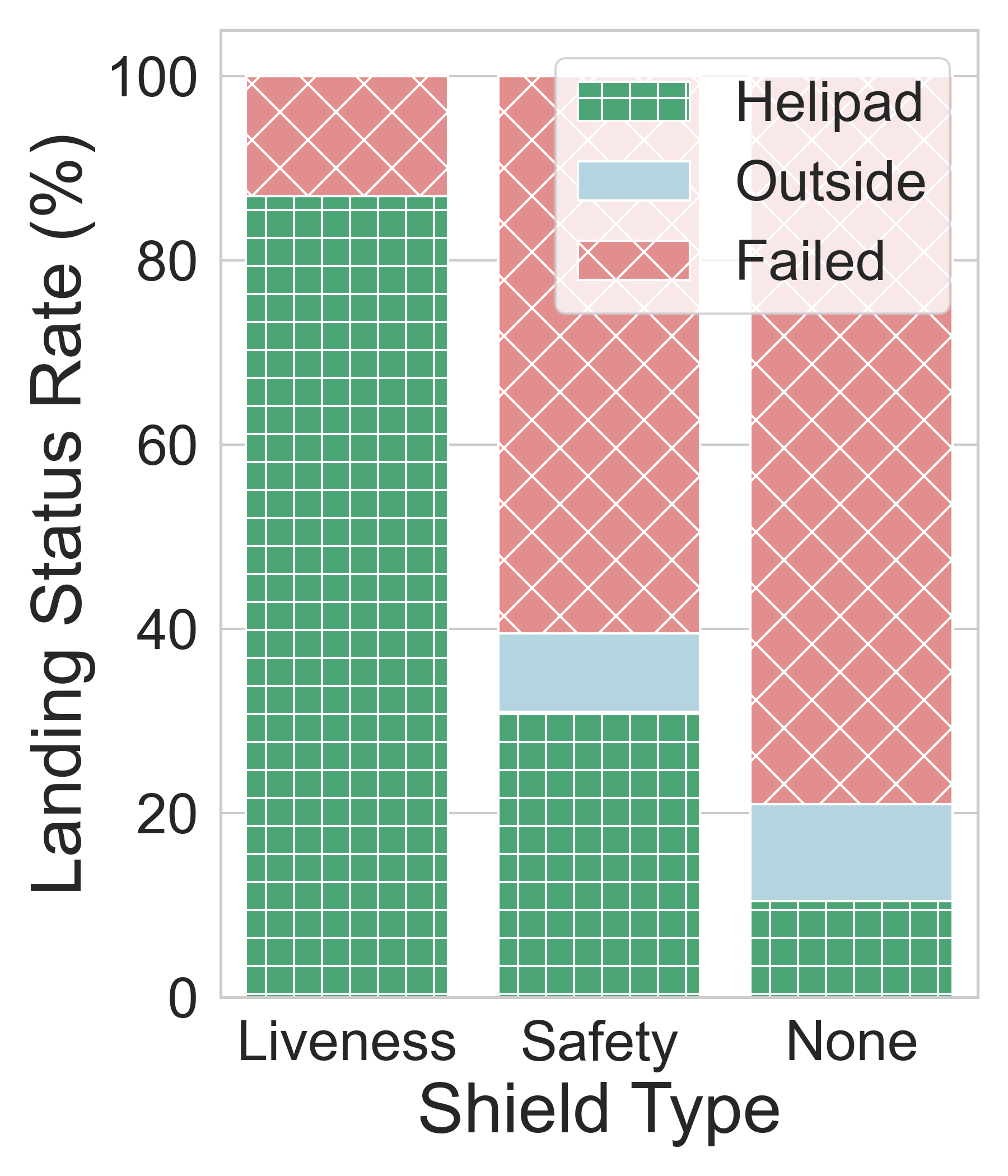}
		\caption{}
		\label{fig:subfig3}
	\end{subfigure}
	\hfill
	\begin{subfigure}[t]{0.4\textwidth}
		\includegraphics[width=\textwidth]{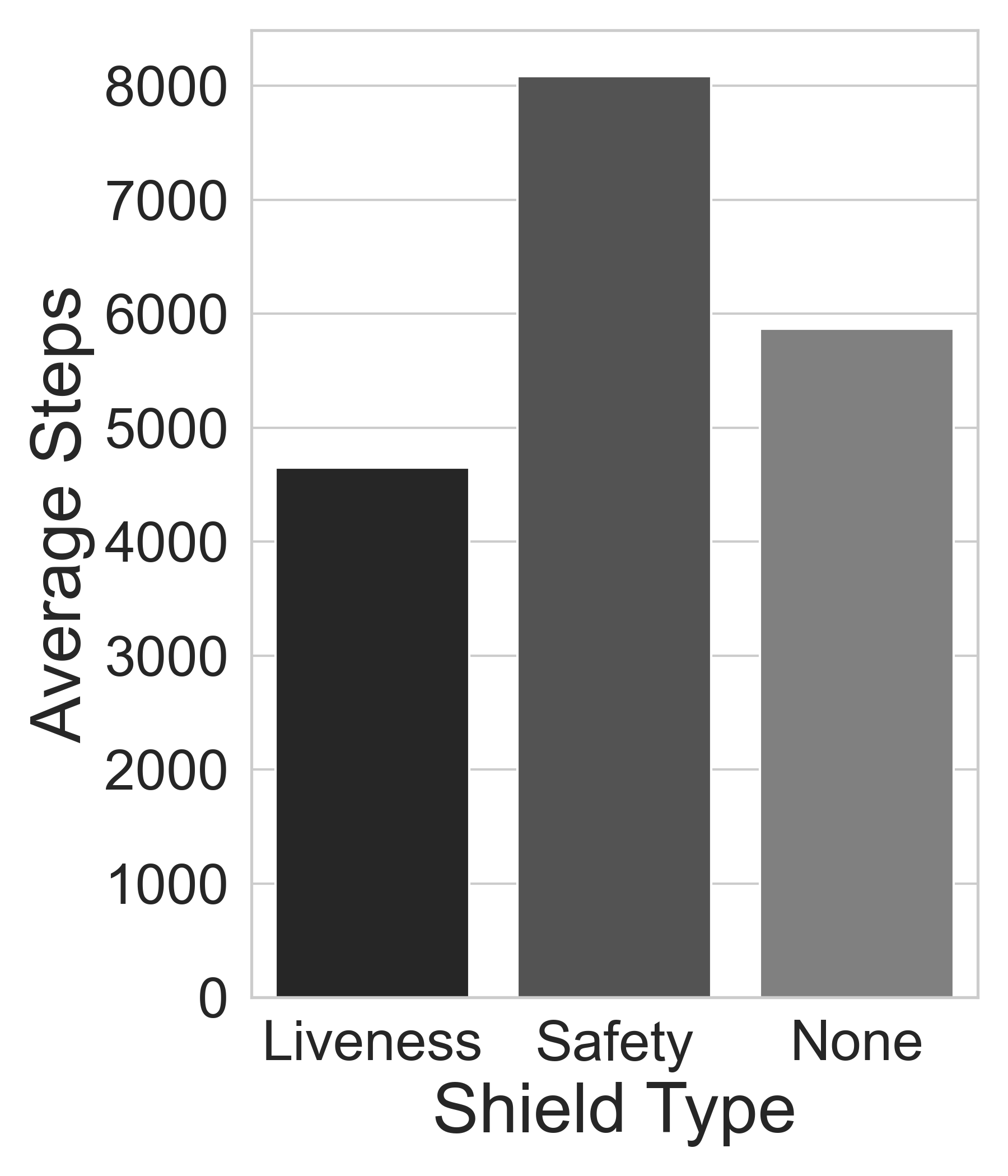}
		\caption{}
		\label{fig:subfig4}
	\end{subfigure}
	\vspace{-0.3cm}
	\caption{(a) Rate of different landing outcomes in LunarLander (b) Average steps to successfully land in LunarLander.}
	\label{fig:LunarLander}
\end{figure}

\subsection{\benchmark{} Benchmarks}
We evaluate our tool on a benchmark suite \benchmark{} for guiding a robot in a grid world (see \cref{screenshot:botunshielded,screenshot:botShielded}). As noted in \cref{sec:introLuna}, the robot follows a policy $\policy$ trained to maximize average reward on randomly generated grids and then shielded with dynamic safety and liveness specifications (e.g., goal visiting, obstacle avoidance) using our tool. These specifications are programmatically modified to simulate deployment-time constraint changes.

Unfortunately, the closest liveness shielding tool from~\cite{chatterjee2023shielding} is not publicly available. Moreover, our evaluation focuses on dynamic interference via online tuning of the enforcement parameter $\gamma$, a feature not supported by existing methods. We also discuss a comparison against a naïve shielding baseline.

\smallskip\noindent\textbf{Experimental Setup.}
We generate random square grids (with sizes from 5 to 13) by sampling walls and reward zones. The reward region is placed at an $\ell_1$-distance between min_dist·size and max_dist·size from every cell in the Büchi region (\buchiRegion). We construct 383 instances: 189 \far (min_dist = 0.7, max_dist = 0.9) and 194 \close (min_dist = 0.1, max_dist = 0.2) instances.
We start with a policy that maximizes average reward without the \buchi objective, then \applyStars is applied. For each instance, we measure the number of \buchi region visits and the average reward over 100,000 steps from a random initial state.

\smallskip\noindent\textbf{Tuning $\gamma$.}
We analyze the trade-off between the frequency of visits to the \buchi region (\buchi frequency) and the average reward achieved by \applyStars. To achieve higher average rewards, the robot must spend more time in the high payoff region (\payoffRegion). We evaluate this trade-off by systematically varying the enforcement parameter $\gamma$ in \applyStars and, for each value, measure the average \buchi frequency and average reward across all instances that achieve an average reward within $\varepsilon$-close (for $\varepsilon\in\{0.1, \ldots, 0.5\}$) of the maximum possible for that instance. This evaluation is performed separately for the \far and \close categories. As the robot needs to remain longer in \payoffRegion{} to increase the average reward, the distance between \buchiRegion and \payoffRegion{} becomes a key factor: as the distance increases,
$\gamma$ needs to be smaller to attain a given closeness $\varepsilon$ to the average reward.

This theoretical dependence is supported by \cref{fig:enforcementParameterTrend} which shows the \buchi frequency (pink) and the proximity to the maximum 
average reward (green) attained by \applyStars for a given enforcement parameter $\gamma$, over  instances 
from \far (dashed) and \close (solid), respectively. 
We observe that as the enforcement parameter increases, the 
\buchi frequency increases while the average reward gets further away from the optimal for both classes of instances. As expected, these trends have a higher slope on \far instances.

\begin{remark}
We report only on experiments with optimal average reward policies, as discounted rewards pose less challenge for shielding. Discounted reward policies depend on initial trace segments, while $\omega$-regular objectives can be satisfied regardless of any finite prefix (if started in the winning region). This enables a trivial shielding strategy in the \benchmark benchmark: use a low $\gamma$ initially, then increase it later. While this yields high performance for both objectives, it is only possible because \Stars support dynamic $\gamma$ updates at runtime—a key advantage over existing methods.
\end{remark}

\smallskip\noindent\textbf{Quality of Shielded Policies.}\label{sec:naive}
Given the fact that \cref{thm:optimality} restricts attention to \emph{good end components} (i.e. the maximum color in the end component is even), we note that any stochastic policy $\policy$ satisfies $\Phi$ almost surely within each good end component $\widetilde{Q}$ already without any shielding. This is due to the fact that under stochastic policies all edges have positive probability of being sampled and, therefore, infinite runs reach all states in the SCC almost surely. As the maximum color in $\widetilde{Q}$ is even, all runs satisfy $\Phi$ almost- surely. 
In practice, however, the frequency with which even color vertices are seen is extremely low. 
While one might suggest that perturbing the nominal policy $\policy$ might increase the frequency of visiting even color states, this is actually not the case, as this perturbed policy would explore the \emph{entire state space} more aggressively.
We call the algorithm that implements this perturbation $\applyNaive$.

In contrast, \Stars modify probabilities in a targeted fashion. This (i) avoids visiting odd-color vertices which are not optimal, and (ii) allows to tune the desired frequency in which even color vertices are visited via the enforcement parameter. 
This can be formalized using the notion of \emph{frequency} of a run $\run$ visiting a set $T$ of states which can be defined as $\freq(\run,T) = \limsup_{l \to \infty} \frac{1}{l} \abs{\{i \in [0;l] \mid \run[i] \in T\}}$. With this definition, 
the following theorem\footnote{
Note that this shows the existence of such parameters only for the case of surely satisfying $\Phi$. For the case of almost-sure satisfaction, the frequency would also depend on the transition probabilities of the MDP and hence, we cannot guarantee the existence of such parameters for every $\delta$, while the same intuition still holds.}
ensures that the frequency of a run $\run$ visiting even color states can be increased by tuning the enforcement parameter $\gamma$.
\begin{restatable}{theorem}{restateFrequency}\label{thm:frequency}
    Given the premises of \cref{thm:optimality} with $\shield{\policy} = \policy|^{\template_\sure,\thres}_{\gamma}$ and $T$ being the set of even color states in the objective $\Phi$, for every frequency $0<\delta\leq \frac{1}{\abs{Q}}$, there exists parameters $\thres,\gamma > 0$ such that for every run $\run\sim\shield{\policy}$, it holds that $\freq(\run,T) \geq \delta$.
\end{restatable}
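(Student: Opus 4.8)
The plan is to reduce the statement to a \emph{uniform bound on the return time to $T$}: I would show that for suitable $\gamma,\thres$ there is a constant $B=B(\gamma,\thres)\in\N$ such that every $\shield{\policy}$-run visits $T$ at least once in every window of $B$ consecutive steps. From this $\freq(\run,T)\ge 1/B$ is immediate, so it only remains to make $B\le 1/\delta$, which is exactly what the regime $1/\delta\ge\abs{Q}$ permits. Hence the work splits into (i) producing such a $B$ from the shield mechanism, and (ii) controlling its size through the parameters.

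First I would read off from \cref{def:stesh} that the shield enforces live groups \emph{promptly}. Fix a state $q$ and a live group $\livegroupSingle$ with $q\in\src(\livegroupSingle)$. By \eqref{eq:shieldpolicy:gamma} the live action's entry in $\mu''$ is increased by $\gamma\cdot\counter{\livegroupSingle}(\hist)$, so after normalisation every non-live action has probability at most $1/(\gamma\,\counter{\livegroupSingle}(\hist))$; hence as soon as $\counter{\livegroupSingle}(\hist)\ge 1/(\gamma\thres)$, rule \eqref{eq:shieldpolicy:eps} zeroes out all non-live actions and an action of $\livegroupSingle$ is forced -- and choosing $\thres<1/\max_q\abs{A(q)}$ with $\gamma$ large keeps the boosted live action itself above the threshold, so \cref{rem:cornercase} does not interfere. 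Thus with $N:=\lceil 1/(\gamma\thres)\rceil$ every $\shield{\policy}$-run is \emph{$N$-prompt}: whenever the sources of a live group have been visited $N$ times since that group last fired, one of its actions is taken. Moreover, by \cref{thm:shieldtemplate} $\shield{\policy}$ follows $\template_\sure$, so every run uses unsafe edges never and co-live edges only finitely often; since $\freq$ is a $\limsup$ of averages it ignores finite prefixes, so I may pass to a suffix that stays in a safe sub-region and uses no co-live edge, on which the restriction of $\template_\sure$ remains surely winning for $\Phi$ and only live groups are active.

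The second step -- turning $N$-promptness into the bound $B$ -- is the technical heart and the step I expect to be the main obstacle. The qualitative fact is easy: since $\template_\sure=\parityTemp_\sure(M,\Phi)$ is surely winning and $\win^{\sure}_\Phi=Q$, no infinite run following $\template_\sure$ can avoid $T$ forever, for otherwise only odd colours would occur infinitely often, contradicting that the run satisfies $\textsc{Parity}[c]$; together with finiteness of the MDP a pumping argument (a long enough $T$-avoiding, $N$-prompt, template-following finite run repeats a configuration of state plus live-group counters, and the intervening loop iterates into an infinite $T$-avoiding template-following run) already gives \emph{some} finite $B$. The delicate point is to keep $B$ of order $\abs{Q}$ rather than exponential in the number of live groups, so that the whole range $\delta\le 1/\abs{Q}$ is covered. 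For this I would take $\gamma$ large and $\thres$ small so that $N=1$, and use the concrete recursive structure of the parity template of~\cite{AnandNS23}: in the recurrent sub-region $R$ in which the suffix eventually remains, surely-winningness forces $\max\{c(q)\mid q\in R\}$ to be even (so the states attaining it lie in $T$), and one checks that the live groups localise exactly the progress towards these states, so that each forced firing strictly decreases a well-founded progress measure of range at most $\abs{R}\le\abs{Q}$; hence $T$ is revisited within $\abs{Q}$ steps.

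Finally, given $0<\delta\le 1/\abs{Q}$, pick $\gamma$ large and $\thres$ small enough that $B(\gamma,\thres)\le\abs{Q}\le 1/\delta$, and conclude that every $\shield{\policy}$-run visits $T$ at least once per $1/\delta$ steps, i.e.\ $\freq(\run,T)\ge\delta$. The crux is the sharp estimate in the third paragraph: proving $B\le\abs{Q}$ rather than merely $B<\infty$ rests on the precise progress structure of $\parityTemp_\sure(M,\Phi)$ and on the observation -- which fails without sure winning, cf.\ the footnote accompanying the statement -- that the even-colour states reachable inside the recurrent region genuinely belong to $T$.
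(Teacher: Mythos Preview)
Your approach matches the paper's: both derive a uniform counter bound (your $N=\lceil 1/(\gamma\thres)\rceil$, the paper's $K_{\thres,\gamma}=\max(1,(1/\thres-1)/\gamma)$) that forces each live group to fire within a bounded number of source visits, and then exploit the layered attractor structure of $\parityTemp_\sure$ --- which the paper makes explicit as a partition $Q_0=T,Q_1,\dots,Q_n$ with live group $H_i$ consisting of the edges from $Q_i$ into $Q_{i-1}$ --- to bound the return time and obtain $\freq(\rho,T)\ge 1/(\abs{Q}\cdot K_{\thres,\gamma}^{n-1})$, after which tuning $\gamma,\thres$ gives the claim. One simplification you overlooked: under the premises actually used in the paper's proof $\Phi$ is B\"uchi with $\safegroup=\colivegroup=\emptyset$, so your suffix-passing step for co-live edges, while not wrong, is unnecessary.
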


This theoretical result is supported by \cref{fig:thresholdVsBuechi} where we compare our shielding approach \applyStars with the naive shielding approach \applyNaive (discussed above) on our \benchmark{} benchmark suite. %
\cref{fig:thresholdVsBuechi} plots the 
average \buchi frequency (y-axis) for all instances that obtained an average reward that is $\varepsilon$-close to the maximal possible reward in that instance, where $\varepsilon\in\{0.1, 0.2, 0.3, 0.4, 0.5, 0.6\}$ (x-axis). The red line 
represents the averages for the robot shielded by \applyStars, and the dashed blue line represents the same for the one shielded by \applyNaive. We observe that 
\applyStars can maintain a similar average reward as \applyNaive, while ensuring a much higher \buchi 
frequency. In addition, \applyNaive does not allow to increase the \buchi frequency beyond a very low level. On the other 
hand, by sacrificing on the average reward, \applyStars allows attaining very high \buchi 
frequency.

\subsection{Overcooked-AI Benchmarks}
\label{sec:overcooked}

We assess the scalability of our shield computation via the Overcooked-AI environment~\cite{carroll2019utility}, a widely used benchmark for cooperative multi-agent reinforcement learning.  Here, autonomous agents are trained to repeatedly perform cooking tasks. We use LTL specifications to encode additional recipe requirements of produced dishes. \Stars{} are used to enforce the (additional) production of soups satisfying these requirements infinitely often. 

From a shielding perspective, Overcooked-AI is very similar to \benchmark as it is based on a known finite MDP and the LTL recipe objective gets translated into a Büchi objective on the two-player game graph extracted from the known MDP. Therefore, the insights about tuning the enforcement parameter and improving shielding quality discussed in \cref{sec:naive} identically apply to Overcooked-AI. Here 'far' and 'close' instances (cf. Fig.~\ref{fig:enforcementParameterTrend}) however relate recipes rather then grid states. If additional recipe constraints align with the recipe the agent policy was learned on no shielding is needed and both types of dishes are produced with a similar frequency. If recipes are complementary, $\gamma$ can be used to balance the frequency of produced dishes. 

The main difference between \benchmark and Overcooked-AI is the size of the naturally arising game arenas which is why we choose Overcooked-AI benchmarks to demonstrate the scalability of \applyStars. Towards this goal, we utilize with the \texttt{cramped\_room} layout with increasing sizes. Within this layout, two agents can move in four directions and perform actions such as picking up ingredients, serving soups, and placing items on counters. %
We construct the game graph naively by enumerating all possible states and actions, resulting in graphs ranging from 68000 to 2.2 million states across different layout sizes. We compute \Stars for each instance, with an overview of state counts and computation times presented in \cref{fig:overcooked}. These results demonstrate that the synthesis of \Stars scales to million-state environments with practical one-time computation time cost (about 1 hour) prior to deployment. We suspect that improved game graph extractions that cluster states with similar shielding requirements can further improve the scalability of \applyStars and is an interesting direction for future work.

\subsection{LunarLander Benchmarks}
We finally evaluate \applyStars on the LunarLander benchmark~\cite{brockman2016openai}. 
The standard environment is modified to have more height and to have a randomly positioned 
helipad. We also modify the reward to not depend on landing on the pad\footnote{The details of the exact modifications to the standard environment are described in the appendix.}.

\smallskip\noindent\textbf{Experimental Setup.}
We train a baseline policy using standard proximal policy optimization (PPO) for 50000 time steps, which ensures that the lander touches down safely---though not necessarily on the helipad. Then, to apply \Stars, we introduce a $60 \times 60$ grid which discretizes only the $x/y$ coordinates of the hidden 8-dimensional MDP of the lunar lander. We further reinterpret the lander actions to move along those grid cells, which vastly simplifies the actual dynamics of the lunar lander. We extract a game from this grid equipped with a safety (preventing the lander from leaving the environment) and a liveness (steering toward the helipad) objective and use it to synthesize STARs. We 
then \applyStars on the trained policy. For comparison, we also evaluate a safety shield which solely prevents the lander from leaving the environment or landing outside the helipad region.

\smallskip\noindent\textbf{Results.}
We randomly generated 200 seeds to initiate the environment, and compare the result of policies being (i) unshielded, (ii) shielded with safety shield, and (iii) shielded with \Stars{}. The results are summarized in \cref{fig:LunarLander}. The results show that the unshielded policy lands on the helipad in $10.5\%$ cases, while the safety shielded policy lands on the helipad in $31\%$ cases, and our shielded policy lands on the helipad in $87\%$ cases. Also, the average steps to land on the helipad are 5868 for unshielded, 8082 for safety shielded, and 4650 for liveness shielded policies showing that liveness steers the lander toward the helipad more quickly. This showcases the effectiveness of \Stars{} in ensuring liveness while preserving the learnt policy to not crash the lander.

In particular, this behavior is observed even though the actual dynamics of the lander are very complex, leading to quite different abstract behavior as captured by our small game abstraction used to compute \Stars. This showcases the applicability of \Stars to complex high-dimensional environments. Combined with the scalability results on Overcooked-AI we believe that a future scaling of \Stars towards industrial scale benchmarks is possible.

\label{beforebibliography}
\newoutputstream{pages}
\openoutputfile{main.pg}{pages}
\addtostream{pages}{\getpagerefnumber{beforebibliography}}
\closeoutputstream{pages}

 \bibliography{main}
 
 \newpage
 \appendix

\section{Additional Related Work}
\paragraph*{Permissive Strategies.}
While the synthesis of permissive strategies for $\omega$-regular objectives has received substantial attention in recent years~\cite{bernet2002:permissiveStrategies,bouyer2011:measuringpermissiveness,FremontSeshiImprovisation,Klein2015:mostGeneralController,AnandNS23}, and strategy templates~\cite{AnandNS23,AnandNS24} have been applied to various problems in reactive synthesis~\cite{SchmuckHDN24,AnandMNS23,NayakS24,AnandSN24,NayakEGJS23,phalakarn2024winningstrategytemplatesstochastic}, these techniques have, to the best of our knowledge, not been used in the context of shielding.

\section{Additional Preliminaries}\label{sec:prelims}
\paragraph*{Finite Memory Policies.}
Let $\mem$ be a set called \emph{memory}. A policy $\policy$ with memory $\mem$ is represented as a tuple $(\mem, m_0, \alpha, \beta)$ where $m_0 \in \mem$ is the initial memory value, $\alpha : \mem \times Q \mapsto\mem$ is the memory update function, and $\beta: \mem \times Q \mapsto\dist{A}$ is the function prescribing the distribution over the next set of available actions. A policy $\policy$ is said to be a \textit{finite memory} policy if $\mem$ is a finite set. It is called \emph{stationary} if $\mem = \emptyset$, i.e., the choice of action only depends on the state. 
Given a finite run (or history) $\hist$, a state $q$ and an action $a \in A(q)$, $\policy(\hist q, a) = \pr(a|\hist q)$ denotes the probability that $\policy$ assigns for choosing the action $a$ from state $q$ with history $\hist$. If $\policy$ is stationary, we will write $\policy(q, a)$ instead of $\policy(\hist q, a)$.
Given a random variable $f : \runs^M \mapsto \R$, we denote by $\exp^q_{M^\policy}(f)$ the expectation of $f$ over the runs of $M$ originating at state $q$ that follow $\policy$. We instead write $\exp^q_{\policy}(f)$ when $M$ is clear from the context.

\paragraph*{Stochastic Graph Games.}
A \emph{stochastic game graph} is a tuple $G = (Q = \Qsys \cup \Qenv \cup \Qavg, E)$ where $(Q,E)$ is a finite directed graph. For every state $q \in Q$, we denote the set of all available edges from $q$ as $E(q)$ and assume $|E(q)|>0$ for all $q\in Q$. Further, for $\player\in\{\sys,\env,\avg\}$, we define $E_{\player}= \{(q,q') \in E \mid q \in Q_{\player}\}$. 

A stochastic game involves three players: `system' ($\bigcirc$), `environment' ($\square$), and `random' ($\triangle$).  They take turns moving a token along states, forming a path. When the token is at a state in $\Qsys$ (resp. $\Qenv$), the system (resp.\ environment) player chooses one of its successors to move the token. At a state in $\Qavg$, the random player moves the token to one of its successors following a known or unknown probability distribution, selecting uniformly at random. Stochastic game graphs are often called $2\thalf$-player game graphs. If $\Qavg = \emptyset$, $\Qenv = \emptyset$, or $\Qenv=\Qavg=\emptyset$, they reduce to $2$-player, $1\thalf$-player, and $1$-player game graphs, respectively. Game graphs without a random player are called \emph{deterministic}.

Given a stochastic game graph $G$, a \emph{run} (or play) $\rho$ over $G$ is an infinite sequence of states $q_0 q_1 \ldots\in Q^\omega$. 
We write $\inf_Q(\rho)$ (resp.\ $\inf_E(\rho)$) to denote the set of all states (resp. edges) which occur infinitely often along $\rho$.
We collect all runs over $G$ in the set $\runs^G$. 
A \emph{strategy} for player $\player\in\{\sys,\env,\avg\}$ over $G$ is a function $\policy_\player : Q^* \times Q_\player \to \mathcal{D}(Q)$ that describes a probability distribution over next available moves to the successor states based on the history of the current run. 
Given a system player strategy $\policy$, a run $\rho$ is said to comply with $\policy$, i.e., be a $\policy$-run, if $q_{i+1} \in \supp(\policy(q_0\ldots q_i))$ holds for all $q_i \in \Qsys$ along $\rho$. Given a measurable set of infinite runs $P \subseteq \runs^G$, $\Pr_{\strat}[P]$ is the probability that a $\strat$-run belongs to $P$. We use $\runs^{G^\policy}$ to denote the set of all $\policy$-runs over $G$.

\paragraph*{MDPs and Stochastic Games.}
A standard reduction from an MDP with an $\omega$-regular objective to a stochastic parity game involves treating the randomness by a second player, either as an adversarial environment player (for sure satisfaction) or as a random player (for almost-sure satisfaction). Note that, in both cases, the game can be constructed without access to the probabilities $\Delta$ of the MDP.
This is formalized as follows.
\setcounter{definition}{3}
\begin{definition}
 Given an MDP $M=\langle Q, A, \Delta, \qinit\rangle$ we define the $2$-player (resp. $1\thalf$-player) game graph induced by $M$ as the tuple $G_\sure^M = (\Qsys \cup \Qenv, E_\sys\cup E_\env)$ (resp. $G^M = (\Qsys \cup \Qavg, E_\sys\cup\ E_\avg)$) s.t.\ for $\player\in\{\env,\avg\}$:
 \begin{compactitem}
  \item $\Qsys:=Q$,\quad $Q_\player:=\{q_\player^a\mid a\in A(q) \text{ and } q\in Q\}$,
  \item $E_\sys:=\{(q,q^a_\player)|a\in A(q)\}$, and $E_\player:=\{(q^a_\player,q')|\pr(q'|q,a)>0\}$.
 \end{compactitem}
\end{definition}

We remark that assuming $\Phi$ to be directly defined over $G^M$ is not restrictive. Any $\omega$-regular property $\varphi$ with propositions interpretable as subsets over $Q$ can be converted into a parity game $(G_\varphi,\Phi_\varphi)$ which can be combined with $G^M$ through a simple product.

\paragraph*{Conflict-Freeness of Strategy Templates.}
Note that the existing algorithms for computing strategy templates~\cite{AnandNS23,phalakarn2024winningstrategytemplatesstochastic} always produce \emph{conflict-free} strategy templates.
A strategy template $\strat$ is called \emph{conflict-free} if,
for every state $q \in Q$, there exists an action $a \in A(q)$ such that $(q,a) \not\in \safegroup$ and $(q,a) \not\in \colivegroup$, and for every source $q$ of live-group $H \in \livegroup$, there exists an action $a$ in $H$ such that $(q,a) \not\in \safegroup$ and $(q,a) \not\in \colivegroup$.
This is crucial, as it allows us to obtain a strategy easily from a strategy template (see~\cite{AnandNS23} for details).
For simplicity, throughout the paper, we assume that strategy templates are conflict-free.

\section{Proof of Results in Dynamic $\omega$-Regular Shielding}
\subsection{Correctness of \Stars{}}
We first restate the correctness of \Stars{} as in Theorem~1 of the main text and then prove it.
\begin{restatable}{theorem}{restateShieldTemplate}
    \label{thm:shieldtemplate}\label{corollary:shieldedPolicy}
    Given any MDP $M$, a strategy template $\template$ interpreted over $M$, a threshold $\thres$, and enforcement parameters $\gamma ,\theta > 0$, the shielded policy $\policy|^{\template_\psure,\thres}_{\gamma}$ follows $\template_\psure$.
    Moreover, if $\template_\psure:=\parityTemp_\psure(M,\Phi)$, then, every $\policy|^{\template_\psure,\thres}_{\gamma}$-run from the winning region of $\Phi$ satisfies $\Phi$ surely/almost surely (depending on $\psure$).
\end{restatable}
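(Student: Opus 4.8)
The plan is to reduce the whole statement to the single assertion that every run of $M$ compatible with $\policy|^{\template_\psure,\thres}_{\gamma}$ lies in $\runs^{\template_\psure}$, i.e.\ that the shielded policy \emph{follows} $\template_\psure$. Granting this, the ``moreover'' part is immediate: by the cited algorithms~\cite{AnandNS23,phalakarn2024winningstrategytemplatesstochastic}, $\parityTemp_\psure(M,\Phi)$ returns a surely (resp.\ almost-surely) \emph{winning} strategy template for $\Phi$, and by the very definition of a winning strategy template every policy that follows it -- in particular $\policy|^{\template_\psure,\thres}_{\gamma}$ -- is surely (resp.\ almost-surely) winning for $\Phi$ from the winning region. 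So the actual work is: fix an arbitrary $\policy|^{\template_\psure,\thres}_{\gamma}$-run $\rho$, write $\template_\psure=(\safegroup,\colivegroup,\livegroup)$, and verify the three defining conditions of $\runs^{\template_\psure}$ in turn. Throughout, $\overline{\mu}$ denotes the shielded distribution at the current state obtained via~\eqref{eq:shieldpolicy}; it is well defined since conflict-freeness always leaves an action with $\mu''>0$, and the degenerate case of \cref{rem:cornercase} is excluded because $\policy$ is assumed stochastic.

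\emph{Safety and co-liveness} are a direct unfolding of \cref{def:stesh}. If $\edge{q}{a}\in\safegroup$, then~\eqref{eq:shieldpolicy:gamma} gives $\mu''(a)=0$, hence $\normalize{\mu''}(a)=0\le\thres$, hence $\mu'(a)=0$ and $\overline{\mu}(a)=0$; thus the shielded policy never assigns positive probability to an unsafe pair, so no such pair can occur in $\rho$. If $\edge{q}{a}\in\colivegroup$, then $\mu''(a)=\mu(a)-\gamma\cdot\counter{\edge{q}{a}}(\hist)\le 1-\gamma\cdot\counter{\edge{q}{a}}(\hist)$, which is $\le 0$ once $\counter{\edge{q}{a}}(\hist)\ge\lceil 1/\gamma\rceil$, so $\overline{\mu}(a)=0$ from then on; since $\counter{\edge{q}{a}}$ only increases when $\edge{q}{a}$ is taken and never decreases, $\edge{q}{a}$ is used at most $\lceil 1/\gamma\rceil$ times along $\rho$, hence $\edge{q}{a}\notin\inf_{Q\times A}(\rho)$.

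\emph{Liveness} is the technical heart, and the key step is a local lemma about the threshold: whenever a live-group counter $\counter{H'}(\hist)$ exceeds a constant $K=K(\gamma,\thres,|A|)$ at one of that group's source states $q'$ (say $K>1/(\gamma\thres)$), the shielded distribution $\overline{\mu}$ at $q'$ is supported only on actions of live-groups sourced at $q'$. Indeed, by conflict-freeness there is a pure live action $a'$ at $q'$ with $\edge{q'}{a'}\in H'$, $\edge{q'}{a'}\notin\safegroup\cup\colivegroup$, so $\mu''(a')\ge\gamma\counter{H'}(\hist)>1/\thres$, while any action $b$ that is \emph{not} a live action sourced at $q'$ has $\mu''(b)\le\mu(b)\le 1$, whence $\normalize{\mu''}(b)\le\mu''(b)/\mu''(a')<\thres$ and $b$ is zeroed by~\eqref{eq:shieldpolicy:eps}. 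Now I argue by contradiction: suppose some $H\in\livegroup$ with $\src(H)\cap\inf_Q(\rho)\ne\emptyset$ is \emph{starved}, i.e.\ $H\cap\inf_{Q\times A}(\rho)=\emptyset$; let $\mathcal{B}$ be the finite nonempty set of starved live-groups. After the last action of each $H'\in\mathcal{B}$, $\maxpref{\hist}{H'}$ stabilises, so $\counter{H'}$ then counts visits to $\src(H')$ and diverges. Pick a recurrently visited $q^\ast\in\src(H^\ast)$ with $H^\ast\in\mathcal{B}$; since $\counter{H^\ast}\to\infty$ there, the lemma forces $\overline{\mu}$ at $q^\ast$ to be eventually supported only on live actions, so by the pigeonhole principle some live-group has its actions taken infinitely often at $q^\ast$. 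Moreover, an action surviving the threshold at a visit where $\counter{H^\ast}$ has size $m$ must itself have counter $\Omega(\thres m)$, so combining this with a running-maximum (``record time'') argument over the finitely many live-groups -- which is the delicate bookkeeping -- yields that some group of $\mathcal{B}$ must, in fact, have its actions taken infinitely often, contradicting $\mathcal{B}\ne\emptyset$. Hence $\mathcal{B}=\emptyset$, $\rho\in\runs^{\template_\psure}$, and the shielded policy follows $\template_\psure$.

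The main obstacle is exactly this last step: because distinct live-groups may share source states and inflate one another's counters, one cannot bound each counter in isolation, and a clean finish needs the auxiliary statement that in any $\policy|^{\template_\psure,\thres}_{\gamma}$-run \emph{every} live-group counter stays uniformly bounded (which upgrades ``the threshold eventually zeroes the competing mass'' into ``the threshold forces a live action within a bounded number of visits''), proved by a potential/pigeonhole argument over $\livegroup$ built on the local lemma above. Safety, co-liveness, well-definedness of $\overline{\mu}$, and the ``moreover'' statement are, by comparison, routine consequences of \cref{def:stesh}, \cref{def:counters}, and the winning-template property of $\parityTemp_\psure$.
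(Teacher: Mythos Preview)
Your reduction of the ``moreover'' clause to the winning-template property, and your treatment of the unsafe and co-live cases, are exactly what the paper does. The divergence is entirely in the live-group argument, and there your proof has a genuine gap.

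Your local lemma is correct but too weak: it only shows that once some counter at $q^\ast$ is large, the \emph{non-live} actions are zeroed. That leaves open the possibility that at every visit to $q^\ast$ the sampled action belongs to a live-group $H'\notin\mathcal{B}$ (a \emph{non-starved} group also sourced at $q^\ast$), and your observation that such an $H'$ must then have counter $\Omega(\thres m)$ does not by itself produce a contradiction --- $H'$'s counter may simply grow large and then reset, infinitely often, without $H'$ ever being starved. You acknowledge this and defer to an unproven auxiliary statement (uniform boundedness of all live-group counters); that statement is plausible and, for templates produced by $\parityTemp_\psure$, is in fact proved later in the paper (\cref{claim:boundedcounter}) under the structural fact that each state sources at most one live-group --- but the theorem as stated is for \emph{arbitrary} conflict-free templates, where several groups may share a source, and you give no argument for that case.

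The paper sidesteps this entirely by a sharper local argument. At the recurrent state $q$, it lists \emph{all} live-groups $H_1,\dots,H_l$ sourced at $q$ and partitions them into those the run fails ($i\le l'$, counters diverge) and those it satisfies ($i>l'$, counters reset infinitely often). Then for any action $a'$ not in $\bigcup_{i\le l'}H_i$ one bounds
\[
\normalize{\mu''}(a')\;\le\;\frac{\policy(\hist q,a')+\gamma\sum_{i>l'}\counter{H_i}(\hist q)}{1+\gamma\sum_{i=1}^{l}\counter{H_i}(\hist q)},
\]
and since the numerator is controlled by counters that reset infinitely often while the denominator contains the diverging starved counters, at some visit this drops below $\thres$ for every such $a'$ simultaneously, forcing an action from one of $H_1,\dots,H_{l'}$ --- an immediate contradiction. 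The key idea you are missing is to compare starved against non-starved \emph{live-group} mass at the fixed state, rather than live against non-live mass; this removes the need for any global counter-boundedness lemma or ``record time'' bookkeeping.
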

\begin{proof}
    Let $\shield{\policy}:=\policy|^{\template_\psure,\thres}_{\gamma}$ (for notational convinience) and $\run$ a $\shield{\policy}$-run of $M$. We show that $\run$ satisfies the template $\template_\psure = (\safegroup,\colivegroup,\livegroup)$.
    As probability of an unsafe edge $\edge{q}{a}\in\safegroup$ is set to zero by $\shield{\policy}$, the safety template $\safegroup$ is satisfied.
    
    Let's assume that $\run$ does not satisfy the co-live template $\colivegroup$. Then, there exists a co-live edge $e= \edge{q}{a}$ that appears in $\run$ infinitely many times. 
    Let $\hist q$ be a finite prefix of $\run$ such that $a$ has been sampled from $q$ more than $1/\gamma$ times. Then, $\counter{\edge{q}{a}}(\hist q) > 1/\gamma$, which means $\policy(\hist q, a) - \gamma \cdot \counter{\edge{q}{a}}(\hist q) < 0$. Consequently, the probability of choosing $e$ after history $\hist q$ under $\shield{\policy}$ becomes zero. Hence, $\run$ can not visit $q$ more than $1/\gamma$ times, which contradicts the assumption that $\run$ visits $(q,a)$ infinitely many times. Thus, $\run$ satisfies the co-live template $\colivegroup$.

    Next, suppose $\run$ does not satisfy the live-group template $\livegroup$. Then, there exists a live group $\livegroupSingle \in \livegroup$ such that $\run$ visits the source states of $\livegroupSingle$ infinitely many times but does not sample any action from $\livegroupSingle$ infinitely many times. 
    Let $q \in\src(\livegroupSingle)$ be a state that is visited infinitely many times by $\run$, and let $q$ be a source state for live-groups $\livegroupSingle_1,\ldots,\livegroupSingle_l$ (with $\livegroupSingle_1 = \livegroupSingle$).
    Suppose $\run$ does not satisfy the live-group template $\livegroupSingle_i$ for all $i\leq l'$ and satisfies $\livegroupSingle_i$ for all $l'<i\leq l$. 
    Note that $l' > 1$ as $\run$ does not satisfy $\livegroupSingle = \livegroupSingle_1$. 
    Then, for every history $\hist q$ of $\run$ after which $\run$ does not contain any co-live edge and
    for every action $a'\in A' = \{a'\in A(q)\mid \edge{q}{a'}\not\in\bigcup_{i=1}^{l'}\livegroupSingle_i\}$, we have:
    \[ \shield{\policy}(\hist q, a') \leq \frac{\policy(\hist q, a) + \gamma \cdot \sum_{i=l'}^l \counter{\livegroupSingle_i}(\hist q)}{1 + \gamma \cdot \sum_{i=1}^l \counter{\livegroupSingle_i}(\hist q)}. \]
    Furthermore, there exists a history $\hist'$ of $\run$ after which $\run$ (does not use any co-live edge and) visits $q$ infinitely many times but never samples an action from any of the live-groups $\livegroupSingle_1,\ldots,\livegroupSingle_{l'}$.
    From that point on, the counter $\counter{\livegroupSingle_{i}}$ for $i\leq l'$ is incremented unboudedly, whereas the counter $\counter{\livegroupSingle_{i}}$ for $i\geq l'$ is reset to zero infinitely many times.
    Consequently, there exists a history $\hist q$ of $\run$ (that is an extension of $\hist'$) such that for every $a'\in A'$, we have:
    \[\frac{\policy(\hist q, a') + \gamma \cdot \sum_{i=l'}^l \counter{\livegroupSingle_i}(\hist q)}{1 + \gamma \cdot \sum_{i=1}^l \counter{\livegroupSingle_i}(\hist q)} < \thres. \]
    By construction, $\shield{\policy}(\hist q, a') = 0$ for every $a'\in A'$, and hence, $\shield{\policy}$ has to sample an action from $A\setminus A'$ at history $\hist q$. 
    As all actions in $A\setminus A'$ are from the live-groups $\bigcup_{i=1}^{l'}\livegroupSingle_i$, this contradicts the assumption that $\run$ does not sample from $\bigcup_{i=1}^{l'}$ after $\hist'$.
    Thus, $\run$ satisfies the live-group template $\livegroup$.   
    Therefore, $\shield{\policy}$ follows the template $\template_\psure = (\safegroup,\colivegroup,\livegroup)$.

    Finally, if $\template_\psure = \parityTemp_\psure(M,\Phi)$, then, by the properties of winning strategy templates~\cite{AnandNS23}, it holds that every $\shield{\policy}$-run from the winning region of $\Phi$ satisfies $\Phi$ surely/almost surely (depending on $\psure$).
\end{proof}

\subsection{Minimal Interference of \Stars{}}
We now restate the two minimal interference properties of \Stars{} as in Theorem~2 and Theorem~3 of the main text and then prove them.
\begin{restatable}{theorem}{restateMinimalInterference}\label{thm:minimalinterference}
    Given the premises of \cref{corollary:shieldedPolicy} with $\shield{\policy} = \policy|^{\template_\psure,\thres}_{\gamma}$,   
    for any $\varepsilon > 0$ and for all length $l \in \mathbb{N}$, there exist  parameters $\gamma,\thres > 0$ such that for all histories $\hist$ of length $l$ with $\hist \models \pref(\Phi)$, it holds that
    $\Pr_{\shield{\policy}}(\hist) > \Pr_{\policy}(\hist) -  \varepsilon.$
\end{restatable}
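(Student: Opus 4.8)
The plan is to fix a history $\hist = \qinit a_0 q_1 \cdots q_l \in \pref(\Phi)$ and reduce the claim to an $l$-fold product of per-step estimates. Write $T := \prod_{i<l}\pr(q_{i+1}\mid q_i,a_i)$; since \Stars only reshape the action distribution and leave the MDP transitions untouched, $\Pr_{\policy}(\hist) = T\cdot\prod_{i<l}\policy(\hist[0;i])(a_i)$ and $\Pr_{\shield\policy}(\hist) = T\cdot\prod_{i<l}\shield\policy(\hist[0;i])(a_i)$ with the \emph{same} $T\in[0,1]$, so it suffices to prove $\prod_{i<l}\shield\policy(\hist[0;i])(a_i) \ge \prod_{i<l}\policy(\hist[0;i])(a_i) - \varepsilon$. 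A preliminary observation is that the histories of interest avoid all unsafe edges of $\template_\psure$: we read $\hist\in\pref(\Phi)$ as a prefix of a run that satisfies $\Phi$ while following $\template_\psure = \parityTemp_\psure(M,\Phi)$ (equivalently, a run in $\runs^{\template_\psure}$), and by definition of $\runs^{\template_\psure}$ such runs never traverse an edge in $\safegroup$. Hence at every step of $\hist$ only the co-live and live-group branches of~\eqref{eq:shieldpolicy:gamma} are triggered for the action $a_i$ that is actually taken.

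Next I would control one shielding step. Write $\mu := \policy(\hist[0;i])$ and let $\mu'',\mu',\overline\mu$ be as in \cref{def:stesh}. Since every counter of \cref{def:counters} for a history of length $\le l$ is bounded by $l$, we get $\mu''(a_i) \ge \mu(a_i) - \gamma l$ (because $(q_i,a_i)\notin\safegroup$) and $\max(\mu''(a),0) \le \mu(a) + \gamma l$ for every $a$, hence $\sum_a\max(\mu''(a),0) \le 1 + c_0\gamma$ for a constant $c_0$ depending only on $l$ and $|A|$. Because there are only finitely many histories of length at most $l$ and $\policy$ is stochastic, the quantity $p_{\min} := \min\{\policy(\hist[0;i])(a_i)\}$, taken over all such $\hist\in\pref(\Phi)$ and all $i<l$, is strictly positive. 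Therefore, for $\gamma$ small enough, $\normalize{\mu''}(a_i) \ge (\mu(a_i)-\gamma l)/(1+c_0\gamma) \ge \mu(a_i) - c_1\gamma$ with $c_1$ again depending only on $l$ and the fixed graph, and in particular $\normalize{\mu''}(a_i) \ge p_{\min}/2$. Picking any $\thres < p_{\min}/2$ then ensures that~\eqref{eq:shieldpolicy:eps} does not zero out $a_i$, and since $\sum_a\mu'(a) \le \sum_a\normalize{\mu''}(a) = 1$ the final normalization only increases $a_i$, so $\overline\mu(a_i) = \shield\policy(\hist[0;i])(a_i) \ge \normalize{\mu''}(a_i) \ge \policy(\hist[0;i])(a_i) - c_1\gamma$.

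Finally I would aggregate over the $l$ steps. For $\gamma < p_{\min}/c_1$ each factor $x_i := \policy(\hist[0;i])(a_i) - c_1\gamma$ lies in $(0,1]$, so a standard telescoping estimate gives $\prod_{i<l}\policy(\hist[0;i])(a_i) - \prod_{i<l}x_i \le l c_1\gamma$; together with the per-step bound $\shield\policy(\hist[0;i])(a_i)\ge x_i \ge 0$ this yields $\prod_{i<l}\shield\policy(\hist[0;i])(a_i) \ge \prod_{i<l}\policy(\hist[0;i])(a_i) - l c_1\gamma$. Choosing $\gamma < \min\{p_{\min}/c_1,\ \varepsilon/(l c_1)\}$ and $\thres < p_{\min}/2$ closes the argument, and since $c_0,c_1,p_{\min}$ depend only on $l$, $\policy$ and the (fixed) graph, the same pair $(\gamma,\thres)$ works uniformly over all length-$l$ histories in $\pref(\Phi)$.

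The main obstacle I expect is not any single inequality but this \emph{uniformity}: one pair $(\gamma,\thres)$ must at once keep $a_i$ above the threshold at every step, damp the distortion introduced by the two normalizations in~\eqref{eq:shieldpolicy}, and survive an $l$-fold product — for \emph{every} length-$l$ history in $\pref(\Phi)$ simultaneously. Finiteness of $Q$ and $A$ (hence of the relevant set of histories, and positivity of $p_{\min}$) is precisely what makes this uniform choice possible. The one point that needs care with the definitions is the reading of $\pref(\Phi)$ used above, namely prefixes of template-compatible runs, since for a history that genuinely traverses an unsafe edge the shielded probability of that history is $0$ and no bound of the stated form can hold.
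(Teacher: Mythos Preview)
Your proposal is correct and follows essentially the same route as the paper's proof: factor $\Pr(\hist)$ into transition terms (unchanged by the shield) and action terms, observe that prefixes in $\pref(\Phi)$ avoid edges in $\safegroup$, bound all counters by quantities depending only on $l$, derive a per-step lower bound on $\shield\policy(\hist[0;i])(a_i)$ of the form $\policy(\hist[0;i])(a_i)-O(\gamma)$ after normalization, and then aggregate over the $l$ factors. The paper carries out the aggregation multiplicatively (bounding $\Pr_\policy(\hist)\cdot\bigl(1-((1-l\gamma/x)/(1+|\livegroup|l\gamma))^l\bigr)$) where you telescope additively; both are equivalent up to bookkeeping. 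Your treatment of uniformity via the global $p_{\min}>0$ (using stochasticity of $\policy$ and finiteness of length-$l$ histories) and your explicit handling of the thresholding step~\eqref{eq:shieldpolicy:eps} are in fact cleaner than the paper, which simply writes ``by taking small enough $\thres$'' and asserts that its final expression ``is independent of the choice of history'' without spelling out the minimum over histories.
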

\begin{proof}
    Let us fix an $\varepsilon>0$ and a length $l$.
    Let $\hist = q_0a_0\ldots q_l\in \fruns^M$ be a history such that $\hist \models \pref(\Phi)$.
    Clearly, for edges $\edge{q}{a} \in \safegroup$, $\edge{q}{a}$ cannot appear in $\hist$ as then any extension of $\hist$ will not satisfy $\Phi$. 

    First, let $\policy(\hist[0;i], a_{i}) = x_i$ and $\Pr(q_{i+1}| q_{i}, a_{i}) = y_i$.
    Then it holds that
        $\Pr_{\policy}(\hist)
        = \prod_{0 \leq i \leq l-1} x_iy_i$.
    Note that whenever $x_i = 0$ for any $i\leq l$, then $\Pr_{\policy}(\hist) = 0$ and hence, $\Pr_{\shield{\policy}}(\hist) > \Pr_{\policy}(\hist) -  \varepsilon$ trivially holds.
    Let us therefore now consider the case where $x_i > 0$ for all $i\leq l$.

    Now, for every $i\leq l$, we have $\sum_{e\in\colivegroup} \counter{e}(\hist[0;i]) \leq l$ and $\sum_{\livegroupSingle\in\livegroup} \counter{\livegroupSingle}(\hist[0;i]) \leq \abs{\livegroup}l$.
    Hence, if $\shield{\policy}(\hist[0;i],a_{i}) = x_i'$, 
    by taking small enough $\thres$,
    it holds that
    \[x_i' \geq \frac{x_i-l\cdot\gamma}{1+\abs{\livegroup}l\cdot\gamma}.\]
    Thus, we have:
    \[\Pr_{\shield{\policy}}(\hist) 
        = \prod_{0 \leq i \leq l-1} x_i'y_i 
        \geq \prod_{0 \leq i \leq l-1} \frac{x_i-l\cdot\gamma}{1+\abs{\livegroup}l\cdot\gamma}\cdot y_i.\]
    For $x = \min_{0 \leq i \leq l-1} x_i$, we have:
    \begin{align*}
        \Pr_{\policy}(\hist) - \Pr_{\shield{\policy}}(\hist)
        &\leq \prod_{0 \leq i \leq l-1} x_iy_i - \prod_{0 \leq i \leq l-1} \frac{x_i-l\cdot\gamma}{1+\abs{\livegroup}l\cdot\gamma}\cdot y_i\\
        &= \Pr_{\policy}(\hist) \cdot \left(1-\prod_{0 \leq i \leq l-1} \frac{1-\frac{l\cdot\gamma}{x_i}}{1+\abs{\livegroup}l\cdot\gamma}\right)\\
        &\leq \Pr_{\policy}(\hist) \cdot \left(1- \left(\frac{1-\frac{l\cdot\gamma}{x}}{1+1\abs{\livegroup}l\cdot\gamma}\right)^l\right)
    \end{align*}
    It now follows that by fixing an appropriate value for $\gamma$, one can bound the above expression by $\varepsilon$. 
    As the above expression is independent of the choice of history $\hist$, we can conclude that for every history $\hist$ of length $l$ such that $\hist \models \pref(\Phi)$, it holds that: $\Pr_{\shield{\policy}}(\hist) > \Pr_{\policy}(\hist) - \varepsilon$.
\end{proof}

\begin{restatable}{theorem}{restateMinimalInterferenceWeighted}\label{thm:minimalinterferenceWeighted}
Given the premises of \cref{corollary:shieldedPolicy} with $\shield{\policy} = \policy|^{\template_\psure,\thres}_{\gamma}$ such that $\template_\psure = (\emptyset,\emptyset,\livegroup)$, and a cost function $\cost:\fruns^M \rightarrow [0,W]$, for any $\varepsilon > 0$, there exist parameters $\gamma,\thres > 0$ such that the following holds: $\exp_{\rho\sim\shield{\policy}} \cost(\rho,\policy,\shield{\policy}) < \varepsilon$.
\end{restatable}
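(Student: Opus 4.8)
The plan is to reduce the statement to controlling the long-run average of the live-group counters, scaled by $\gamma$. Because the template here is $\template_\psure=(\emptyset,\emptyset,\livegroup)$, the shield never zeroes out an action for being unsafe or co-live: the \emph{only} interference comes from the additive boosts $\gamma\,\counter{\livegroupSingle}$ applied at source states of live groups and from the sub-threshold pruning, and I expect both to have vanishing average as $\gamma,\thres\to 0$. I would then get the expectation bound past the outer $\limsup$ by proving the cost bound \emph{almost surely}, so the expectation inherits the same deterministic constant.

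First I would establish a per-step bound. Since $\cost\le W$, $\cost(\rho[0;i],\policy,\shield{\policy})\le W\cdot\distance(\policy(\rho[0;i]),\shield{\policy}(\rho[0;i]))$. Writing $q=\last(\rho[0;i])$ and following \eqref{eq:shieldpolicy:gamma} then \eqref{eq:shieldpolicy:eps}, the shielded distribution $\overline{\mu}$ is obtained from $\mu=\policy(\rho[0;i])$ by adding $\gamma\,\counter{\livegroupSingle}(\rho[0;i])$ to every $a$ with $\edge{q}{a}\in\livegroupSingle$ and renormalizing --- moving total probability mass at most $\gamma\abs{A}\sum_{\livegroupSingle\in\livegroup:\,q\in\src(\livegroupSingle)}\counter{\livegroupSingle}(\rho[0;i])$ --- and then by zeroing the at most $\abs{A}$ actions below $\thres$ and renormalizing again, moving mass at most $\abs{A}\thres$. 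Hence the deviation at step $i$ is at most $\abs{A}\thres+\sum_{\livegroupSingle\in\livegroup:\,q\in\src(\livegroupSingle)}\min\bigl(1,\gamma\abs{A}\,\counter{\livegroupSingle}(\rho[0;i])\bigr)$, and since $\livegroup$ is finite it suffices to bound, for each fixed $\livegroupSingle\in\livegroup$, the quantity $X_\livegroupSingle:=\limsup_{l\to\infty}\tfrac1l\sum_{\substack{i<l\\ \last(\rho[0;i])\in\src(\livegroupSingle)}}\min\bigl(1,\gamma\abs{A}\,\counter{\livegroupSingle}(\rho[0;i])\bigr)$.

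Next I would analyze $\counter{\livegroupSingle}$ along its \emph{excursions}: the maximal stretches between consecutive uses of a $\livegroupSingle$-action (i.e.\ between resets of the counter). Along an excursion with $L$ visits to $\src(\livegroupSingle)$ the counter runs through $0,1,\dots,L-1$, so the excursion's contribution is $\sum_{j=0}^{L-1}\min(1,\gamma\abs{A}\,j)\le\tfrac12\gamma\abs{A}\,L^2$ (an elementary estimate) while it spans at least $L$ time steps. By \cref{thm:shieldtemplate} the shielded policy follows $\template_\psure$, so every excursion is finite; moreover the boost makes a reset progressively more likely --- at counter value $c$ the shielded probability of using a $\livegroupSingle$-action at the current source state is bounded below by a positive, nondecreasing function of $c$ that reaches a constant once $\gamma\abs{A}\,c\gtrsim 1$. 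Consequently, conditionally on the past, $\Pr[L\ge c]\le e^{-\Theta(\gamma\abs{A}\,c^{2})}$ for $c\lesssim 1/(\gamma\abs{A})$ and geometrically small beyond, which also gives $\exp[L^2]\le O\bigl(1/\sqrt{\gamma\abs{A}}\bigr)\exp[L]$ (using only that $\Pr[L\ge c]$ is nonincreasing with this Gaussian-type decay). A renewal--reward argument over excursions then yields $X_\livegroupSingle\le\exp[\text{contribution}]/\exp[\text{time steps}]\le\tfrac12\gamma\abs{A}\,\exp[L^2]/\exp[L]=O\bigl(\sqrt{\gamma\abs{A}}\bigr)$ almost surely (and $X_\livegroupSingle=0$ if $\src(\livegroupSingle)$ is visited only finitely often).

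Summing over the finitely many live groups, $\limsup_{l\to\infty}\tfrac1l\sum_{i<l}\cost(\rho[0;i],\policy,\shield{\policy})\le W\bigl(\abs{A}\thres+O\bigl(\abs{\livegroup}\sqrt{\gamma\abs{A}}\bigr)\bigr)$ almost surely; picking $\thres$ and then $\gamma$ small makes the right-hand side $<\varepsilon$, and since this is a deterministic almost-sure upper bound on the nonnegative quantity inside the expectation, $\exp_{\rho\sim\shield{\policy}}\cost(\rho,\policy,\shield{\policy})<\varepsilon$. The hard part will be the renewal argument made uniform over \emph{all} admissible (history-dependent, possibly non-stationary) nominal policies: an adversarial $\policy$ can keep $\counter{\livegroupSingle}$ small for a long time, so one must argue that it is ultimately the shield's own boost $\gamma c$ --- not the nominal probability on $\livegroupSingle$ --- that forces the reset, then control $\exp[L^2]$ by $O\bigl(1/\sqrt{\gamma\abs{A}}\bigr)\exp[L]$, and finally upgrade the per-excursion estimates to an almost-sure long-run statement so that the bound survives the outer $\limsup$ in $\cost(\rho,\policy,\shield{\policy})$. (If the nominal policy assigns probability bounded away from $0$ to $\livegroupSingle$-edges at their sources --- e.g.\ for finite-memory policies --- excursions are geometrically short and the same reasoning gives the stronger rate $O(\gamma)$.)
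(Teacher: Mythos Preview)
Your route differs substantially from the paper's. The paper first reduces to a \emph{stationary} nominal policy (taking the product of $M$ with the memory of $\policy$), then observes that $\shield{\policy}$ is stationary in the extended MDP obtained by further taking the product with the (bounded, by \cref{claim:boundedcounter}) counter tuples. The expected cost then becomes $W\cdot\exp_{(q,C)\sim d_{\shield{\policy}}}\bigl[\distance(\policy(q),\shield{\policy}(q,C))\bigr]$ for the stationary distribution $d_{\shield{\policy}}$, and the same per-step estimate you derive, $\distance\le |C|\gamma+|A|\thres$, reduces everything to bounding $\exp_{d_{\shield{\policy}}}[|C|]$. Here the paper exploits stationarity in a way you do not: since $\policy$ is stationary and stochastic over finitely many states, the probability of sampling a live action at any source state is bounded below by some $1-m>0$ that is \emph{independent of $\gamma,\thres$}. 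Hence $|C|$ has a geometric tail with rate $m$, $\exp[|C|]\le 1/(1-m)^2$, and the final bound is linear in $\gamma$. Your parenthetical last remark is exactly this regime, and there your argument collapses to the paper's.

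Your excursion/renewal analysis is more ambitious --- it tries to cover arbitrary history-dependent $\policy$, relying only on the shield's boost $\gamma c$ rather than on a uniform lower bound for the nominal live-action probability --- and you rightly identify this as the hard part. The step I would flag concretely is $\exp[L^2]\le O\bigl(1/\sqrt{\gamma|A|}\bigr)\exp[L]$: this does \emph{not} follow from the tail bound $\Pr[L\ge c]\le e^{-\Theta(\gamma c^2)}$ alone. For instance, a distribution with $L=M$ with probability $p$ and $L=1$ otherwise, where $M\sim\sqrt{\log(1/p)/\gamma}$, satisfies that tail yet has $\exp[L^2]/\exp[L]\approx M$, which can be pushed up to order $K_{\thres,\gamma}$. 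In the adaptive setting the nominal policy can condition on realized excursion lengths, so the cycles are not i.i.d.\ and a plain renewal--reward identity is unavailable; to carry your plan through you would need a martingale/concentration argument controlling $\limsup_N\bigl(\sum_n L_n^2\bigr)/\bigl(\sum_n L_n\bigr)$ almost surely, uniformly over adversarial $\policy$. The paper sidesteps all of this by the stationarity reduction, at the price of tacitly restricting to finite-memory nominal policies.
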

\begin{proof}
    First let us show that the tuple of counter values can be bounded based on the parameters $\thres$ and $\gamma$. As $\template_\psure = (\emptyset,\emptyset,\livegroup)$, we only have counters for the live-groups in $\livegroup$. Hence, the following claim formalizes the bound on the counter values.
    
    \begin{claim}\label{claim:boundedcounter}
        For parameters $K_{\thres,\gamma} = \max\left(1,\frac{1/\thres-1}{\gamma}\right)$, every counter value $\counter{\livegroupSingle}(\hist) \leq K_{\thres,\gamma}$ for every history $\hist$ and live-group $\livegroupSingle\in\livegroup$.
    \end{claim}
    \begin{claimproof}
    By the construction of $\parityTemp_\psure$, as in~\cite[Alg. 3]{AnandNS23}, every state $q$ can be a source state for at most one live-group $\livegroupSingle\in\livegroup$. 
    Hence, for a history $\hist$ ending in such a state $q$, for every action $a\in A(q)\setminus\livegroupSingle$, we have 
    \[\shield{\policy}(\hist, a) \leq \frac{\policy(\hist,a)}{1+\counter{\livegroupSingle}(\hist)\cdot\gamma} \leq \frac{1}{1+\counter{\livegroupSingle}(\hist)\cdot\gamma}.\]
    Hence, whenever $\counter{\livegroupSingle}(\hist)\geq K_{\thres,\gamma}$, we have $\shield{\policy}(\hist, a) \leq \thres$ and hence, by definition,
    $\shield{\policy}(\hist, a) = 0$.
    Therefore, for $\counter{\livegroupSingle}(\hist) \geq K_{\thres,\gamma}$, probability of sampling an action not in $\livegroupSingle$ becomes zero and hence, an action from $\livegroupSingle$ will be sampled and hence, $\counter{\livegroupSingle}(\hist)$ will be reset to zero.
    So, the counter value $\counter{\livegroupSingle}(\hist) \leq K_{\thres,\gamma}$ for every history $\hist$.
    \end{claimproof}

    Now, w.l.o.g, let us assume that $\policy$ is a stationary policy in $M$ as otherwise we can take the product of $M$ with the memory set of $\policy$ to ensure that $\policy$ is stationary in the product MDP.
    Furthermore, $\shield{\policy}$ is a stationary policy w.r.t.\ the extended MDP $M' = \langle Q', A, \Delta', \qinit'\rangle$
    obtained by taking product of the MDP with the tuples of counter values.
    Since $\policy$ is a stationary stochastic policy in $M$, it is also a stationary stochastic policy in $M'$.

    Now, given an $\varepsilon>0$ and a cost function $\cost: \fruns^M \rightarrow [0,W]$, it holds that $\cost(\hist) \leq W$ for every history $\hist$.
    Hence, the following holds for stationary distribution $d_{\shield{\policy}}$ of $\shield{\policy}$ in $M'$:
    \begin{align*}
        \exp_{\rho\sim\shield{\policy}}\cost(\rho,\policy,\shield{\policy}) &= \exp_{\rho\sim\shield{\policy}} \left[\limsup_{l\to\infty} \frac{1}{l}\sum_{i=0}^{l-1} \cost(\rho[0;i])\cdot \distance(\policy(\rho[0;i]),\shield{\policy}(\rho[0;i]))\right] \\
        &= W\cdot \exp_{\rho\sim\shield{\policy}} \left[\limsup_{l\to\infty} \frac{1}{l}\sum_{i=0}^{l-1} \distance(\policy(\rho[0;i]),\shield{\policy}(\rho[0;i]))\right]\\
        &= W\cdot \limsup_{l\to\infty} \frac{1}{l}\cdot  \exp_{\rho\sim\shield{\policy};\abs{\rho} = l} \left[\sum_{i=0}^{l-1} \distance(\policy(\rho[0;i]),\shield{\policy}(\rho[0;i]))\right]\\
        &= W\cdot \limsup_{l\to\infty} \frac{1}{l}\cdot\sum_{i=0}^{l-1} \exp_{(q,C)\sim d_{\shield{\policy}}} \left[\distance(\policy(q),\shield{\policy}((q,C)))\right]\\
        &= W\cdot \exp_{(q,C)\sim d_{\shield{\policy}}} \left[\distance(\policy(q),\shield{\policy}((q,C)))\right].
    \end{align*}
    Evaluating the total variation distance $\distance(\policy(q),\shield{\policy}((q,C)))$ gives us the following:
    \begin{equation}\label{eq:cost_tvdist}
        \exp_{\rho\sim\shield{\policy}}\cost(\rho,\policy,\shield{\policy}) = \dfrac12W\cdot \exp_{(q,C)\sim d_{\shield{\policy}}} \sum_{a\in A(q)} \abs{\policy(q,a) - \shield{\policy}((q,C),a)}.
    \end{equation}
    Then, as $\template_\psure = (\emptyset,\emptyset,\livegroup)$, 
    if $\policy''((q,C),a)$ is the distribution obtained 
    before bounding the probabilities by $\thres$, then the following holds: 
    \begin{align*}
        \policy''((q,C),a) \geq \frac{\policy(q,a)}{1+\abs{C}\gamma} \quad\text{and}\quad \policy''((q,C),a) \leq \frac{\policy(q,a)+\abs{C}\gamma}{1+\abs{C}\gamma},
    \end{align*}
    where $\abs{C}$ denotes the sum of the counters in $C$.

    \noindent
    This means, after bounding the probabilities by $\thres$, if $\policy''((q,C),a) < \thres$, then $\shield{\policy}((q,C),a) = 0 = \policy''((q,C),a)-\thres$, and if some probabilities are bounded by $\thres$, then $\shield{\policy}((q,C),a) \leq \policy''((q,C),a) + \abs{A(q)}\cdot\thres$.
    Hence, it holds that:
    \begin{align*}
        \shield{\policy}((q,C),a)\geq \policy''((q,C),a)-\thres &\quad\text{and}\quad \shield{\policy}((q,C),a) \leq \policy''((q,C),a) + \abs{A(q)}\cdot\thres\\
        \Rightarrow\shield{\policy}((q,C),a)\geq \frac{\policy(q,a)}{1+\abs{C}\gamma}-\thres &\quad\text{and}\quad \shield{\policy}((q,C),a) \leq \frac{\policy(q,a)+\abs{C}\gamma}{1+\abs{C}\gamma} + \abs{A(q)}\cdot\thres\\
        \Rightarrow\shield{\policy}((q,C),a)\geq \frac{\policy(q,a)}{1+\abs{C}\gamma}-\abs{A(q)}\cdot\thres &\quad\text{and}\quad \shield{\policy}((q,C),a) \leq \frac{\policy(q,a)+\abs{C}\gamma}{1+\abs{C}\gamma} + \abs{A(q)}\cdot\thres.
    \end{align*}
    Therefore, we have:
    \begin{align*}
        \abs{\policy(q,a) - \shield{\policy}((q,C),a)} 
        &\leq \max\left\{\policy(q,a) - \shield{\policy}((q,C),a),\quad \shield{\policy}((q,C),a) - \policy(q,a)\right\}\\
        &\leq \max\Big\{\policy(q,a)\left(\frac{\abs{C}\gamma}{1+\abs{C}\gamma}\right), \frac{\abs{C}\gamma}{1+\abs{C}\gamma}(1-\policy(q,a))\Big\}+\abs{A(q)}\cdot\thres\\
        &\leq \abs{C}\gamma + \abs{A}\thres.
    \end{align*}
    Hence, the expected cost in \eqref{eq:cost_tvdist} can be bounded as follows:
    \begin{align*}
        \exp_{\rho\sim\shield{\policy}}\cost(\rho,\policy,\shield{\policy}) 
        &\leq \dfrac12W\cdot \exp_{(q,C)\sim d_{\shield{\policy}}} \sum_{a\in A(q)} (\abs{C}\gamma + \abs{A}\cdot\thres)\\
        &\leq \dfrac12W\cdot \sum_{a\in A} \exp_{(q,C)\sim d_{\shield{\policy}}} [\abs{C}\gamma + \abs{A}\cdot\thres]\\
        &\leq \dfrac12W\cdot \abs{A}^2\cdot\thres + W\abs{A} \gamma \cdot \exp_{(q,C)\sim d_{\shield{\policy}}}[\abs{C}]\\
    \end{align*}
    From the construction of the shielded policy $\shield{\policy}$, we know that $\abs{C}$ increases only if the current state $q$ is a source state of some live edge in a live-group $\livegroupSingle \in \livegroup$ and none of the live edges in $\livegroupSingle$ are sampled.
    From such a state $(q,C)$ with high enough counter value $\counter{\livegroupSingle}(C)$, the probability of sampling an action that is not in $\livegroupSingle$ can be expressed in terms of $\policy(q,A(q)\cap \livegroupSingle) = \sum_{a\in A(q)\cap \livegroupSingle}\policy(q,a)$ as follows:
    \begin{align*}
        \Pr[(q,C)\rightarrow C+1] &= 1-\sum_{a\in A(q)\cap \livegroupSingle} \shield{\policy}((q,C),a)\\
        &\leq 1 - \frac{\policy(q,A(q)\cap \livegroupSingle)+\counter{\livegroupSingle}(C)}{1+\counter{\livegroupSingle}(C)\cdot \gamma}\\
        &\leq \frac{1-\policy(q,A(q)\cap \livegroupSingle)}{1+\counter{\livegroupSingle}(C)\cdot \gamma} \\
        &\leq 1-\policy(q,A(q)\cap \livegroupSingle).
    \end{align*}
    If $\policy(q,A(q)\cap \livegroupSingle) = 1$, then $\Pr[(q,C)\rightarrow C+1] = 0$. 
    Hence, if $\Pr[(q,C)\rightarrow C+1] > 0$, then the probability $\Pr[(q,C)\rightarrow C+1]$ can be bounded by 
    \[ m = \max \{1 - \policy(q,A(q)\cap \livegroupSingle) \mid q\in Q, \livegroupSingle\in\livegroup, A(q)\cap \livegroupSingle\neq \emptyset, \policy(q,A(q)\cap \livegroupSingle) < 1\}.\]
    Note that $m>0$ as $\policy$ is a stochastic policy and $m<1$ by the above construction.
    Hence, the probability that the counter sum $\abs{C}$ increases can be bounded by $m$. Therefore, the expected value of $\abs{C}$ can be bounded as follows:
    \begin{align*}
        \exp_{(q,C)\sim d_{\shield{\policy}}}[\abs{C}] 
        &\leq \sum_{i=0}^{\infty} i \cdot m^i
        = \frac{m}{(1-m)^2}
        \leq \frac{1}{(1-m)^2}.
    \end{align*}
    Therefore, the expected cost in \eqref{eq:cost_tvdist} can be bounded as follows:
    \begin{align*}
        \exp_{\rho\sim\shield{\policy}}\cost(\rho,\policy,\shield{\policy}) 
        \leq \dfrac12W\cdot \abs{A}^2\cdot\thres + W\abs{A} \gamma \cdot \frac{1}{(1-m)^2}.
    \end{align*}
    By fixing $\thres$ and $\gamma$ appropriately, we can ensure that the above expression is less than $\varepsilon$.
\end{proof}

\section{Maintaining Optimal Rewards while Shielding}
In this section, we formalize the problem and results discussed in Section 3.5 of the main text in details, which is to maintain optimal rewards while shielding a policy $\policy$.

\label{sec:optimalshield}

We restate the two rewards we consider in section 3.5 of the main text. 
Given a rewardful MDP $(M,r)$ with initial state $\qinit$ and a policy $\policy$, we define the \emph{discounted reward} and the \emph{average reward} via 
\begin{subequations}
    \begin{align}
    &\textstyle
     \discreward{\lambda}{\qinit}{\policy}:=\lim_{N \to\infty} \exp^{\qinit}_{\policy} \left( \sum_{0 \leq i \leq N} \lambda^i r(X_i,Y_i) \right)~\text{with }\lambda \in [0, 1],~\text{and}\label{eq:discounted}\\
        &\textstyle
     \avgreward{\qinit}{\policy}:=\limsup_{N \to\infty} \frac{1}{N} \exp^{\qinit}_{\policy} \left( \sum_{0 \leq i \leq N} r(X_i,Y_i) \right).\label{equ:average}
    \end{align}
    \end{subequations}
For any reward function, we define the \emph{optimal reward} to be the maximum reward achievable by a policy $\policy$. We call every policy that achieves the optimal reward an \emph{optimal policy}. A policy $\policy$ is \emph{$\varepsilon$-optimal} if it achieves a reward that is at least $\varepsilon$ less than the optimal reward.

Now, we show the optimality of shielded policies as stated in Theorem~4 of the main text by considering the discounted and average rewards separately as in the following two subsections.
\setcounter{theorem}{4}

\subsection{Discounted Rewards}\label{sec:discounted}
When policies are trained to maximize a discounted reward, as formalized in~\eqref{eq:discounted}, the impact of obtained rewards decreases with time. Therefore, the optimal reward achievable by a policy over a given MDP 
significantly depends on the bounded (initial) executions possible over this MDP. As \cref{thm:minimalinterference} shows that the probability of observing a bounded execution remains close to its probability under the original policy, the $\varepsilon$-optimality of a shielded optimal policy can be obtained as a direct consequence of \cref{thm:minimalinterference} when discounted rewards are used during training.
In particular, the discounted reward of the shielded policy remains close to the discounted reward of the original policy as formalized below.

\begin{restatable}{theorem}{restateDiscountedOptimal}\label{thm:discountedoptimalshield}
    Given the premises of \cref{corollary:shieldedPolicy} with $\shield{\policy} = \policy|^{\template_\psure,\thres}_{\gamma}$ and $\win^{\psure}_\Phi = Q$, for every $\varepsilon > 0$, there exist parameters $\thres,\gamma > 0$ such that the following holds:
    $\discreward{\lambda}{\qinit}{\shield{\policy}} > \discreward{\lambda}{\qinit}{\policy} - \varepsilon$.
\end{restatable}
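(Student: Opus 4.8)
The plan is to derive the $\varepsilon$-optimality of the shielded discounted reward directly from the minimal-interference bound of \cref{thm:minimalinterference}, by splitting the infinite discounted sum into a finite prefix of length $l$ (controlled by the discount factor $\lambda$) and an exponentially small tail, and then using the closeness of prefix probabilities to bound the prefix contribution.

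First I would observe that rewards are bounded: since $Q$ and $A$ are finite, $|r(q,a)| \leq R_{\max}$ for some constant $R_{\max}$. Given $\varepsilon > 0$, pick $l \in \mathbb{N}$ large enough that the tail $\sum_{i > l} \lambda^i R_{\max} = \lambda^{l+1} R_{\max}/(1-\lambda) < \varepsilon/4$; this bounds the tail contribution for \emph{both} $\policy$ and $\shield{\policy}$ simultaneously. It remains to compare the finite-horizon discounted rewards $\sum_{0 \leq i \leq l} \lambda^i \exp r(X_i, Y_i)$ under the two policies. The key point is that this finite-horizon expected reward can be written as $\sum_{\hist} \Pr_{\bullet}(\hist)\, \lambda^{|\hist|-1} r(\last(\hist))$, a finite sum over histories $\hist$ of length at most $l+1$ that are realizable under the respective policy. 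Since $\win^{\psure}_\Phi = Q$, every history realizable under the \emph{nominal} policy $\policy$ lies in $\pref(\Phi)$ (it can be extended to a winning run because every state is winning and, by \cref{corollary:shieldedPolicy}, the shielded continuation from that point satisfies $\Phi$); hence \cref{thm:minimalinterference} applies and gives $\Pr_{\shield{\policy}}(\hist) > \Pr_{\policy}(\hist) - \varepsilon'$ for every such $\hist$, with $\varepsilon'$ as small as we like by choosing $\gamma, \thres$ appropriately (note the number of relevant histories is bounded by $(|Q||A|)^{l+1}$, a constant once $l$ is fixed).

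The remaining work is bookkeeping: the finite-horizon reward under $\shield{\policy}$ is at least $\sum_{\hist} \Pr_{\shield{\policy}}(\hist) \lambda^{|\hist|-1} r(\last(\hist))$ restricted to histories realizable under $\policy$ (all other terms only add, since we may also need to argue they cannot subtract too much — here using again that shielded-only histories contribute reward bounded by $R_{\max}$ times their total probability, which is small). Choosing $\varepsilon'$ so that $(|Q||A|)^{l+1} \cdot \varepsilon' \cdot \sum_i \lambda^i R_{\max} < \varepsilon/2$ then yields $\discreward{\lambda}{\qinit}{\shield{\policy}} > \discreward{\lambda}{\qinit}{\policy} - \varepsilon$, and in particular if $\policy$ is optimal then $\shield{\policy}$ is $\varepsilon$-optimal while additionally satisfying $\Phi$ surely/almost surely by \cref{corollary:shieldedPolicy}.

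The main obstacle I anticipate is handling the sign of the rewards and the histories that are realizable under $\shield{\policy}$ but \emph{not} under $\policy$ (these arise because the shield can redistribute probability onto live-group actions that the nominal policy might have assigned tiny probability to, though since $\policy$ is stochastic this set is actually empty for the support — the real subtlety is the threshold $\thres$ cutting off some actions): one must check that the comparison of expectations is not spoiled by terms of indefinite sign. The clean way around this is to bound $|\discreward{\lambda}{\qinit}{\shield{\policy}} - \discreward{\lambda}{\qinit}{\policy}|$ directly by $\sum_{i=0}^{l} \lambda^i R_{\max} \cdot \|\text{(law of }(X_i,Y_i)\text{ under }\shield\policy) - \text{(law under }\policy)\|_1 + (\text{tail})$, and then bound each total-variation-type term at horizon $i$ in terms of $\Pr_{\shield\policy}(\hist)$ versus $\Pr_\policy(\hist)$ over the (finitely many) length-$i$ histories using \cref{thm:minimalinterference} together with the fact that, since $\win^{\psure}_\Phi = Q$, $\pref(\Phi)$ contains every prefix arising under $\policy$, so no probability mass "escapes" to non-$\Phi$-prefixes under $\policy$ and the shield never needs to divert mass away from a realizable history for safety reasons.
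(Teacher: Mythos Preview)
Your proposal is correct and follows essentially the same route as the paper's proof: both truncate the discounted sum at a finite horizon using the geometric tail bound, rewrite the finite-horizon reward as a sum over histories, invoke $\win^{\psure}_\Phi = Q$ to conclude that every history lies in $\pref(\Phi)$, and then apply \cref{thm:minimalinterference} together with the finite (exponential in the horizon) history count. Your explicit attention to the sign of rewards and to histories whose probabilities may increase under the shield is in fact more careful than the paper's own argument, which tacitly relies on the observation that a one-sided bound on every history of a fixed length, combined with both measures summing to one, yields a two-sided total-variation bound.
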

\begin{proof}
    Let us first compute a bound on the length of the runs that are significant to get an $\varepsilon$-optimal reward.
    Let $r_{max}$ be the maximum reward in the MDP $M$, and let $ \discreward{\lambda,k}{\qinit}{\policy'}$ be the expected bounded discounted reward of policy $\policy'$ for the first $k$ steps:
    \[ \discreward{\lambda,k}{\qinit}{\policy'} = \sum_{1 \leq i \leq k} \lambda^i \exp^{\qinit}_{\policy}(r(X_i,Y_i))\]
    Now, let's choose $B$ such that $\lambda^B \cdot r_{max} < (1-\lambda)\cdot\varepsilon/2$.
    Note that such a $B$ exists as $\lambda^B \rightarrow 0$ when $B \rightarrow \infty$.
    This gives us the following for any policy~$\policy'$:
    \begin{align*}
        \discreward{\lambda}{\qinit}{\policy'}
        &= \lim_{N \mapsto\infty} \sum_{0 \leq i \leq N-1} \lambda^i \exp^{\qinit}_{\policy}(r(X_i,Y_i))\\
        &= \discreward{\lambda,B}{\qinit}{\policy'} + \lim_{N \mapsto\infty} \sum_{B+1 \leq i \leq N} \lambda^i \exp^{\qinit}_{\policy}(r(X_i,Y_i))\\
        &\leq  \discreward{\lambda,B}{\qinit}{\policy'} + \lim_{N \mapsto\infty} \sum_{B+1 \leq i \leq N} \lambda^i \cdot r_{max}\\
        &=  \discreward{\lambda,B}{\qinit}{\policy'} + \frac{\lambda^B \cdot r_{max}}{1-\lambda}\\
        &<  \discreward{\lambda,B}{\qinit}{\policy'} + \frac{\varepsilon}{2}
    \end{align*}

    Now, let $r(\hist) = \sum_{0 \leq i \leq \abs{\hist}-1} \lambda^i r(X_i,Y_i)
    $ be the reward of a run $\hist$. Then, we can rewrite $\discreward{\lambda,k}{\qinit}{\policy'}$ in terms of the expected reward of $k$-length runs as follows:
    \[\discreward{\lambda,k}{\qinit}{\policy'} = \sum_{\hist \in \fruns^M; \abs{\hist} = k} \exp^{\qinit}_{\policy}(\prob{\hist}) \cdot r(\hist)\]
    Hence, the difference between bounded discounted reward for optimal policy $\policy$ and shielded policy $\shield{\policy}$ is the following:
    \begin{align*}
        \discreward{\lambda,B}{\qinit}{\policy} - \discreward{\lambda,B}{\qinit}{\shield{\policy}}
        &= \sum_{\hist \in \fruns^M; \abs{\hist} = B} (\exp^{\qinit}_{\policy}(\prob{\hist}) - \exp^{\qinit}_{\shield{\policy}}(\prob{\hist})) \cdot r(\hist)\\
        &=\sum_{\hist \in \fruns^M; \abs{\hist} = B} (\Pr_{\shield{\strat}}(\hist) - \Pr_{\strat}(\hist)) \cdot r(\hist)\\
        &\leq \abs{Q}^B\cdot (\Pr_{\shield{\strat}}(\hist) - \Pr_{\strat}(\hist)) \cdot r(\hist).
    \end{align*}
    As $\win^{\psure}_\Phi = Q$, every history $\hist\models \pref(\Phi)$.
    Hence, using \cref{thm:minimalinterference}, for length $B$, there exists parameters $\thres,\gamma>0$ such that we can bound the above term by $\varepsilon/2$. Using the property of bound $B$, this gives us:
    \begin{align*}
        &\discreward{\lambda,B}{\qinit}{\policy} - \discreward{\lambda,B}{\qinit}{\shield{\policy}} < \varepsilon/2\\
        \implies &\discreward{\lambda,B}{\qinit}{\shield{\policy}} + \varepsilon/2 > \discreward{\lambda,B}{\qinit}{\policy} > \discreward{\lambda}{\qinit}{\policy} - \varepsilon/2\\
        \implies &\discreward{\lambda,B}{\qinit}{\shield{\policy}} > \discreward{\lambda}{\qinit}{\policy} -\varepsilon.
    \end{align*}
    As $\discreward{\lambda}{\qinit}{\shield{\policy}} \geq \discreward{\lambda,B}{\qinit}{\shield{\policy}}$, we have that $\discreward{\lambda}{\qinit}{\shield{\policy}} > \discreward{\lambda}{\qinit}{\policy} -\varepsilon$.
\end{proof}

\subsection{Average Rewards}\label{sec:average}
We now consider the scenario that policies where trained to maximize the average reward, as formalized in \eqref{equ:average}. In contrast to policies which optimize a discounted reward, optimal average reward policies do not put special emphasis on bounded (initial) executions. On the contrary, the optimal average reward is equivalently impacted by rewards collected over the entire (infinite) length of runs compliant with the policy. This implies, that a policy can only satisfy an $\omega$-regular property (almost) surely \emph{and} optimize the average reward, if it can \enquote{switch} between their satisfaction by alternating infinitely often between finite intervals which satisfy either one. This, however, is only possible if the underlying MDP is `nice' enough to allow for this alternation. In particular, to retain the modularity of shielding with \Stars, we demand to be able to shield a policy over an MDP \enquote{blindly}, i.e.\ without assuming access to the reward-structure over $M$ or knowledge of the actual policy. We therefore demand, in addition to $\win^{\psure}_\Phi = Q$ assumed in \cref{thm:discountedoptimalshield} (further discussed in Rem.~3 in the main text), that $\win^\psure_\Phi$ is a strongly connected component (SCC).
With this assumption, we achieve $\varepsilon$-optimality of the average reward for the special class of Büchi objectives as a consequence of \cref{thm:minimalinterferenceWeighted}.

\begin{restatable}{theorem}{restateAvgOptimal}\label{thm:optimalshield}
    Given the premises of \cref{corollary:shieldedPolicy} with $\shield{\policy} = \policy|^{\template_\psure,\thres}_{\gamma}$, B\"uchi objective $\Phi$, and SCC $\win^{\psure}_\Phi = Q$, for every $\varepsilon > 0$, there exist $\thres,\gamma > 0$ such that 
    $\avgreward{\qinit}{\shield{\policy}} > \avgreward{\qinit}{\policy} - \varepsilon$ holds.
\end{restatable}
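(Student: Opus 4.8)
The plan is to bound the gap in average reward by a constant multiple of the expected total-variation interference of the shield, and then to drive the latter below $\varepsilon$ via \cref{thm:minimalinterferenceWeighted}. First I would note that, since $\Phi$ is a B\"uchi objective and $\win^{\psure}_\Phi = Q$, the template $\template_\psure = \parityTemp_\psure(M,\Phi)$ has no unsafe edges (every state is winning) and no co-live edges (the B\"uchi template construction of~\cite{AnandNS23} introduces only live-groups), i.e.\ $\template_\psure = (\emptyset,\emptyset,\livegroup)$, so \cref{thm:minimalinterferenceWeighted} is applicable. As in its proof, I may assume $\policy$ is stationary, and then $\shield{\policy} = \policy|^{\template_\psure,\thres}_{\gamma}$ is stationary over the \emph{finite} extended MDP $M' = \langle Q',A,\Delta',\qinit'\rangle$ formed by attaching the live-group counter tuple $C$ to the state (finiteness uses the counter bound from the proof of \cref{thm:minimalinterferenceWeighted}). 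Lifting $\policy$ to $M'$ (it simply ignores $C$), and using that $\win^{\psure}_\Phi = Q$ is an SCC --- together with the fact that, for $\thres$ below the least positive probability assigned by $\policy$, the shielded policy retains the full support of $\policy$ at zero-counter states --- both $\policy$ and $\shield{\policy}$ induce unichain Markov chains on $M'$. Hence $\avgreward{\qinit}{\policy}$ and $\avgreward{\qinit}{\shield{\policy}}$ are genuine limits, equal to the expected per-step reward under the respective stationary distributions $d_\policy$ and $d_{\shield{\policy}}$.

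The core step is the average-reward performance-difference identity. Let $h^\policy$ denote the bias of $\policy$ on $M$ and set $A^\policy(q,a) := r(q,a) + \sum_{q'}\pr(q'|q,a)\,h^\policy(q') - h^\policy(q) - \avgreward{\qinit}{\policy}$; the Poisson equation for $\policy$ gives $\sum_a \policy(q,a)\,A^\policy(q,a) = 0$. As the counter influences neither the rewards nor the $q$-dynamics under $\policy$, the bias of $\policy$ lifts unchanged from $M$ to $M'$, and a standard argument using stationarity of $d_{\shield{\policy}}$ yields
\[
  \avgreward{\qinit}{\shield{\policy}} - \avgreward{\qinit}{\policy} \;=\; \exp_{(q,C)\sim d_{\shield{\policy}}}\Bigl[\,\textstyle\sum_{a\in A(q)}\bigl(\shield{\policy}((q,C),a) - \policy(q,a)\bigr)\,A^\policy(q,a)\Bigr].
\]
The crucial observation is that $S := \max_{q,a}\abs{A^\policy(q,a)}$ is bounded by $\max_{q,a}\abs{r(q,a)}$ plus twice the span of $h^\policy$, hence is a \emph{finite constant depending only on $(M,r)$ and $\policy$} --- in particular independent of $\gamma$ and $\thres$ --- precisely because $M$ is strongly connected and $\policy$ is stochastic. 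Therefore
\[
  \bigl|\avgreward{\qinit}{\shield{\policy}} - \avgreward{\qinit}{\policy}\bigr| \;\le\; S\cdot \exp_{(q,C)\sim d_{\shield{\policy}}}\Bigl[\textstyle\sum_{a}\abs{\shield{\policy}((q,C),a) - \policy(q,a)}\Bigr] \;=\; 2S\cdot \exp_{(q,C)\sim d_{\shield{\policy}}}\bigl[\distance(\policy(q),\shield{\policy}((q,C)))\bigr].
\]

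It remains to apply \cref{thm:minimalinterferenceWeighted} with the constant cost function $\cost\equiv 1$: the argument in its proof establishes $\exp_{\rho\sim\shield{\policy}}\cost(\rho,\policy,\shield{\policy}) = \exp_{(q,C)\sim d_{\shield{\policy}}}\bigl[\distance(\policy(q),\shield{\policy}((q,C)))\bigr]$, and the theorem provides $\gamma,\thres>0$ making this quantity smaller than $\varepsilon/(2S)$; substituting into the last display gives $\avgreward{\qinit}{\shield{\policy}} > \avgreward{\qinit}{\policy} - \varepsilon$, as required. I expect the main obstacle to be isolating the parameter-independent constant $S$: the naive route of bounding $\lVert d_\policy - d_{\shield{\policy}}\rVert_1$ directly would incur the mixing time of the chain induced by $\shield{\policy}$, which degrades as $\gamma,\thres\to 0$ (the counter bound, and thus $\abs{Q'}$, grows without bound), making such an estimate circular; the performance-difference identity sidesteps this because it refers only to the bias of the \emph{fixed} policy $\policy$, whose span is a fixed constant. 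A secondary technical point needing care is the unichain-ness of $\shield{\policy}$ on $M'$ (so that $d_{\shield{\policy}}$ is well-defined), which I would derive from the SCC hypothesis together with $\template_\psure$ being a winning template for the B\"uchi objective.
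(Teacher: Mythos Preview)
Your proposal is correct and follows essentially the same route as the paper: both observe that for a B\"uchi objective with $\win^{\psure}_\Phi=Q$ the template has only live-groups, pass to the finite counter-extended MDP, invoke the average-reward performance-difference (policy-difference) lemma to bound $\avgreward{\qinit}{\policy}-\avgreward{\qinit}{\shield{\policy}}$ by a parameter-independent bias/advantage bound times $\exp_{(q,C)\sim d_{\shield{\policy}}}\sum_a|\policy(q,a)-\shield{\policy}((q,C),a)|$, and then appeal to the argument of \cref{thm:minimalinterferenceWeighted} to make this small. Your use of the advantage $A^\policy$ and the paper's use of the action-bias $\biasact$ are just the two standard sides of the same identity; your extra remarks on unichain-ness and on why a direct $\lVert d_\policy-d_{\shield{\policy}}\rVert_1$ bound would be circular are welcome elaborations but do not change the underlying strategy.
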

\begin{proof}
    First, note that $\template = (\safegroup,\colivegroup,\livegroup)$ such that $\safegroup = \colivegroup = \emptyset$ as $\win^{\psure}_\Phi = Q$ and $\Phi$ is a B\"uchi objective~\cite{AnandNS23}.
    By \cref{claim:boundedcounter}, we know that each counter value is bounded by $K_{\thres,\gamma}$.
    Now, as in the proof of \cref{thm:minimalinterferenceWeighted}, w.l.o.g, let us assume that $\policy$ is a stationary policy in $M$ and hence, one can show that both $\shield{\policy}$ and $\policy$ are stationary policies w.r.t.\ the extended MDP $M' = \langle Q', A, \Delta', \qinit'\rangle$ obtained by taking product of the MDP with the tuples of bounded counter values.
    Furthermore, as $Q$ is an SCC, using an extension of the well-known policy difference lemma~\cite{Langford} for average rewards (see~\cite{Zhang0010R21,Puterman} for more details), the difference in average rewards of $\policy$ and $\shield{\policy}$ can be expressed as\footnote{We use $x\sim \distr$ to denote that $x$ is sampled from distribution $\distr$.}: 
    \begin{align}\label{eq:avgrewarddiff}
        \Delta\avgreward{}{} = \avgreward{\qinit}{\policy} - \avgreward{\qinit}{\shield{\policy}} &= \exp_{\substack{(q,C)\sim d_{\shield{\policy}} \\ a\sim\shield{\policy}}} \Big[\biasstate((q,C)) - \biasact((q,C),a) \Big],
    \end{align}
    where $d_{\shield{\policy}}$ is the stationary distribution of $\shield{\policy}$ in $M'$; $\biasact$ and $\biasstate$ are the action-bias and state-bias functions of $\policy$ defined as:
    \begin{align*}
        \biasact((q,C),a) = \biasact(q,a) 
        &= \exp_{\run\sim\policy} \Big[\sum_{i=0}^\infty r(\run[i]) - \avgreward{\qinit}{\policy} \given \run[0] = (q,a)\Big],\\
        \biasstate((q,C)) = \biasstate(q) 
        &= \exp_{\run\sim\policy} \Big[\sum_{i=0}^\infty r(\run[i]) - \avgreward{\qinit}{\policy} \given \run[0] \in q \times A(q)\Big].
    \end{align*}
    Then, \eqref{eq:avgrewarddiff} can be rewritten as:
    \begin{align*}
        \Delta\avgreward{}{}
        &= \exp_{(q,C)\sim d_{\shield{\policy}}} \sum_{a\in A(q)} \shield{\policy}((q,C),a) \cdot \Big[\biasstate((q,C)) - \biasact((q,C),a) \Big]\\
        &= \exp_{(q,C)\sim d_{\shield{\policy}}} \sum_{a\in A(q)} \shield{\policy}((q,C),a) \cdot \Big[\sum_{a'\in A(q)} \policy(q,a') \biasact(q,a') - \biasact(q,a) \Big]\\
        &= \exp_{(q,C)\sim d_{\shield{\policy}}} \sum_{a'\in A(q)} \policy(q,a') \biasact(q,a') - \sum_{a\in A(q)} \shield{\policy}((q,C),a) \cdot \biasact(q,a)\\
        &= \exp_{(q,C)\sim d_{\shield{\policy}}} \sum_{a\in A(q)} (\policy(q,a) - \shield{\policy}((q,C),a)) \cdot \biasact(q,a).
    \end{align*}
    As it is knows that $\abs{\biasact}$ is bounded by some $B$ for such policy $\policy$~\cite{Puterman}, we have the following:
    \begin{align*}
        \Delta\avgreward{}{}
        &\leq B \cdot \exp_{(q,C)\sim d_{\shield{\policy}}} \sum_{a\in A(q)} \abs{\policy(q,a) - \shield{\policy}((q,C),a)}.
    \end{align*}
    As the above expression is similar to the expected cost in \eqref{eq:cost_tvdist}, using similar arguments as in the proof of \cref{thm:minimalinterferenceWeighted}, using appropriate values for $\thres$ and $\gamma$, we can ensure that the above expression is less than $\varepsilon$.
\end{proof}

Unfortunately, the $\varepsilon$-optimality of shielded policies established for Büchi objectives in \cref{thm:optimalshield} does not directly generalize to parity conditions. To `blindly' shield for the latter, we require the following definition.

\begin{definition}
 Let $M$ be an MDP and $c: Q \rightarrow [0;d]$ be a coloring function induced by a parity condition $\Phi$ over $M$. Let $\widetilde{Q}\subseteq Q$ be an SCC. We say $\tilde{Q}$ is $\psure$-\textsf{good} w.r.t.\ $\Phi$ if $q \in \{\widetilde{Q}\mid c(q) \text{ is odd}\}$ implies $q \in \win^{\psure}_{\varphi_\textsf{good}}$, where
$\varphi_\textsf{good}:= \{\rho \in \runs^M|_{\widetilde{Q}} \mid \exists n \geq 0: c(\rho[n]) > c(q) \text{ and is even}\},$
i.e., from every odd state in $\widetilde{Q}$, the system player can surely/almost surely visit a higher even state in $\widetilde{Q}$.
\end{definition}

It is known that the optimal average reward achievable over an MDP $M$ while (almost) surely satisfying a parity condition $\Phi$ reduces to (i) finding all $\psure$-\textsf{good} SCC's of $M$ w.r.t.\ $\Phi$, (ii) computing the optimal average reward of each $\psure$-\textsf{good} SCC, and (iii) enforcing reaching a $\psure$-\textsf{good} SCC with the highest achievable optimal average reward (see \cite{AlmagorKV16} for details). 
As a consequence, the results of \cref{thm:optimalshield} carry over to parity objectives if $Q$ is a $\psure$-\textsf{good} SCC. 

In addition, within $\psure$-\textsf{good} SCCs, the full expressive power of strategy templates is not required—adding co-live templates $D$ does not yield additional winning strategies. This is because for any state $q$, if $q'$ is a state with the maximal color reachable (almost) surely from $q$, then $c(q')$ must be even; otherwise, by definition of a $\psure$-\textsf{good} SCC, a higher even-colored state would be (almost) surely reachable. Consequently, winning strategy templates over $\psure$-\textsf{good} SCCs only require live-groups to reach maximum (even) color states from each state. It follows that $\parityTemp_\psure(M|_{\widetilde{Q}}, \Phi)$ contains only live-groups and unsafe edges, as in the Büchi case. 
This is formalized in the following lemma.
\begin{lemma}\label{lemma:templateGoodSCC}
    Let $M$ be an MDP and $\Phi = \textsc{Parity}[c]$ be a parity objective such that $G^M = (Q,E)$ is the corresponding $\psure$-game graph.
    If $Q$ is a $\psure$-good SCC, then the strategy template $\template_\psure = \parityTemp_\psure(M,\Phi)$ is such that $\template_\psure = (\safegroup,\colivegroup,\livegroup)$ with $\safegroup = \colivegroup = \emptyset$.
\end{lemma}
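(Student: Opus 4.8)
Write $\template_\psure = (\safegroup,\colivegroup,\livegroup)$ for the output of $\parityTemp_\psure(M,\Phi)$. Recall from~\cite{AnandNS23} that this algorithm computes the (recursively defined) winning region, marks an edge unsafe (i.e.\ places it in $\safegroup$) exactly when it leads out of a winning (sub‑)region, and then peels off the colours one at a time; an edge is added to $\colivegroup$ \emph{only} while the algorithm processes a (sub‑)arena whose largest colour is \emph{odd}, whereas processing an even largest colour contributes only live‑groups --- this is precisely the B\"uchi situation. Hence the plan is to establish two things: (i) $\win^\psure_\Phi = Q$ and, more generally, every arena visited in the recursion is won in its entirety, so that $\safegroup = \emptyset$; and (ii) no non‑empty sub‑arena with an odd largest colour is ever reached in the recursion on a $\psure$-good SCC, so that $\colivegroup = \emptyset$, leaving $\template_\psure = (\emptyset,\emptyset,\livegroup)$ exactly as in the B\"uchi case.

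The key observation, which is the one already sketched informally before the lemma, is the following. For $q\in Q$ let $R^\psure(q)$ be the set of states to which the system player can force a visit from $q$ (surely if $\psure=\sure$, with positive probability under some strategy if $\psure=\asure$); note $q\in R^\psure(q)$ and $R^\psure(q')\subseteq R^\psure(q)$ whenever $q'\in R^\psure(q)$, by stitching strategies. Put $m(q):=\max\{c(q')\mid q'\in R^\psure(q)\}$ and pick a state $q^\ast$ realising it. If $c(q^\ast)$ were odd, then $q^\ast\in\win^\psure_{\varphi_\textsf{good}}$ by the definition of a $\psure$-good SCC, so the system could force from $q^\ast$ a visit to a state of even colour strictly above $c(q^\ast)$; composing with the strategy reaching $q^\ast$ contradicts the maximality of $m(q)$. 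Hence $m(q)$ is even for every $q$. From this I would derive $\win^\psure_\Phi=Q$ by a standard parity‑game argument: the only obstruction to winning a parity game everywhere is a sub‑arena trapped by the opponent with an odd dominant colour, and the colour observation rules this out (any such trap would contain a state of the trapped odd colour $\ell$ visited infinitely often from which, by $\psure$-goodness, the system forces a visit to an even colour $>\ell$ inside $Q$, contradicting that the opponent confines the play to colours $\le\ell$ eventually). Consequently no edge ever leaves a winning (sub‑)region and $\safegroup=\emptyset$.

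For $\colivegroup=\emptyset$ I would argue by induction on the recursion depth, carrying the invariant that every sub‑arena $W$ produced by $\parityTemp_\psure$ starting from $Q$ is a (finite disjoint union of) ``$\psure$-good'' pieces with respect to the truncated colouring, and in particular has an even largest colour whenever non‑empty. The crucial step is the even case: when the largest colour of $W$ is some even $d$, the algorithm removes the system attractor of $C_d:=\{q\in W\mid c(q)=d\}$ and recurses on the remainder $W'$. Every $q\in W$ with $c(q)=d-1$ is, by $\psure$-goodness of $Q$, a state from which the system forces a visit to a state of even colour $>d-1$; all such colours are $\ge d$ and hence belong to the region already removed (at this level or at earlier ones), so --- using that $W$ itself was obtained by removing system attractors and that attractor strategies can be composed --- $q$ lies in a removed attractor and $q\notin W'$. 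Thus $W'$ contains no state of the odd colour $d-1$, its largest colour is again even (or $W'=\emptyset$), and one checks it remains $\psure$-good with the truncated colouring. Therefore the odd‑colour branch of $\parityTemp_\psure$ --- the only source of co‑live edges --- is never executed on a non‑empty arena, giving $\colivegroup=\emptyset$ and $\template_\psure=(\emptyset,\emptyset,\livegroup)$.

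The main obstacle, and the only place real care is needed, is the inductive step just above: formulating precisely the invariant that each recursive sub‑arena inherits the $\psure$-good structure with the truncated colouring, and handling the subtlety that an attractor strategy witnessing $\varphi_\textsf{good}$ in the full game $Q$ may route through states removed at earlier recursion levels --- one has to argue this \emph{forces} the state in question into the removed region as well, which is exactly what makes $W'$ free of the odd colour $d-1$. The colour observation and $\win^\psure_\Phi=Q$ (Steps used above) are routine once that invariant is in place.
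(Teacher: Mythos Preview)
Your proposal is correct and follows essentially the same route as the paper's proof: both show the maximal colour must be even, then argue inductively that after removing the system attractor of the top even colour $d$ no states of the odd colour $d-1$ survive, so the odd branch of $\parityTemp_\psure$ is never reached and $\colivegroup=\emptyset$. You are in fact more explicit than the paper on the one subtle point---that the remainder $G'$ inherits the $\psure$-good property needed for the induction hypothesis to apply---which the paper's proof invokes without verification.
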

\begin{proof}
    First, let us consider the case when $\psure = \sure$.
    Let $\template_\sure = (\safegroup,\colivegroup,\livegroup)$ be the strategy template obtained by the procedure $\parityTemp_\sure(M,\Phi)$. 
    Note that as $Q$ is a $\sure$-good SCC, by results in~\cite{AlmagorKV16}, $\win^{\sure}_\Phi = Q$. As the procedure $\parityTemp_\sure$ marks all edges going out of $\win^{\sure}_\Phi$ as unsafe, it follows that $\safegroup = \emptyset$.
    We only need to show that $\colivegroup = \emptyset$.

    Let $\Phi = \textsc{Parity}[c]$ be a parity objective with coloring function $c: Q \rightarrow [0;d]$. Let $P_i = \{q \in Q \mid c(q) = i\}$ be the set of states with color $i$.
    Let us first show that the maximal color $d$ can only be even.
    Suppose $d$ is odd, then there exists a state $q$ with $c(q) = d$.
    As $Q$ is a $\sure$-good SCC, it holds that $q \in \win^{\sure}_\varphi$, where
    \[\varphi = \{\rho \in \runs^M \mid \exists n \geq 0: c(\rho[n]) > d \text{ and is even}\}.\]
    However, as $d$ is the maximum color, there are no states with $c(q) > d$, and hence, $\varphi = \emptyset$. So, $\win^{\sure}_\varphi = \emptyset$, which is a contradiction to the assumption that $q \in \win^{\sure}_\varphi$.

    Now, let us show that $\colivegroup = \emptyset$ using induction on $d$, i.e., the maximal color in $c$.
    For the base case, if $d = 0$, then the statement holds trivially.
    
    Suppose the statement holds for $d = 2k$ for some $k \geq 0$.
    Now, let us consider the case when $d = 2k+2$.
    Consider the procedure $\parityTemp$ given in \cite[Algorithm 3]{AnandNS23} that computes the strategy template $\template_\sure$.
    As $d$ is even, the procedure $\parityTemp$ starts by computing $A = \mathtt{attr}^0(P_d)$ (in line 14) which is the set of states from which system player can force the play to visit a state with color $d$.
    If $A = Q$, then the procedure terminates (in line 15) and returns the output of a procedure $\textsc{ReachTemplate}$~(provided in \cite[Algorithm 1]{AnandNS23}), which only returns a live group template. Hence, $\colivegroup = \emptyset$.
    Suppose $A \neq Q$, then the procedure $\parityTemp$ is called recursively on the subgame $G'$ obtained by removing the states in $A$ (in line 17).
    As $A = \mathtt{attr}^0(P_d) \supseteq P_d$, the maximal color in $G'$ is at most $d-1$.
    If there is a state $q$ in $G'$ with color $d-1$ (which is odd), then as $Q$ is a $\sure$-good SCC, it holds that $q \in \win^{\sure}_{\varphi'}$, where
    \[\varphi' = \{\rho \in \runs^M \mid \exists n \geq 0: c(\rho[n]) > d-1 \text{ and is even}\}.\]
    Hence, system player can force the run from $q$ to visit a state with color $d$, and hence, $q\in\mathtt{attr}^0(P_d) = A$. This contradicts the assumption that $q$ is a node in $G'$.
    Therefore, there is no state in $A$ with color $d-1$, and hence, the maximal color in $G'$ is at most $d-2$. By induction hypothesis, the procedure $\parityTemp$ returns a template with $\colivegroup = \emptyset$.
    
    For the case when $\psure = \asure$, the procedure $\parityTemp$ uses Algorithm 8 provided in \cite{phalakarn2024winningstrategytemplatesstochastic}.
    This procedure first converts the $1\thalf$-player game graph $G^M$ to a $2$-player game graph $G'$ and then uses the procedure $\parityTemp$ provided in \cite[Algorithm 3]{AnandNS23} to compute the strategy template.
    Using similar arguments as above, we can show that the procedure $\parityTemp$ provided in \cite[Algorithm 3]{AnandNS23} for the $2$-player game graph $G'$ returns a template with $\colivegroup = \emptyset$.
\end{proof}

With the above lemma, the following result is a direct corollary of \cref{thm:optimalshield}.

\begin{corollary}\label{cor:optimalshieldparity}
    Given the premises of \cref{corollary:shieldedPolicy} with $\shield{\policy} = \policy|^{\template_\psure,\thres}_{\gamma}$, parity objective $\Phi$, and $\psure$-\textsf{good} SCC $\win^{\psure}_\Phi = Q$, for every $\varepsilon > 0$, there exist $\thres,\gamma > 0$ such that 
    $\avgreward{\qinit}{\shield{\policy}} > \avgreward{\qinit}{\policy} - \varepsilon$ holds.
\end{corollary}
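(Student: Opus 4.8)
The plan is to obtain the statement as an essentially immediate consequence of \cref{thm:optimalshield}, once we establish that the template produced for a $\psure$-\textsf{good} SCC has the same simple shape as in the B\"uchi case. Concretely, the first step is to invoke \cref{lemma:templateGoodSCC}: since $Q$ is a $\psure$-\textsf{good} SCC for the parity objective $\Phi$, the template $\template_\psure = \parityTemp_\psure(M,\Phi)$ satisfies $\safegroup = \colivegroup = \emptyset$, i.e.\ $\template_\psure = (\emptyset,\emptyset,\livegroup)$. Hence the shielded policy $\shield{\policy} = \policy|^{\template_\psure,\thres}_{\gamma}$ only ever reweights live-group edges, which is precisely the regime of \cref{thm:minimalinterferenceWeighted} and of the proof of \cref{thm:optimalshield}.

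The second step is to replay the argument of \cref{thm:optimalshield} with this $\template_\psure$. Inspecting that proof, the B\"uchi hypothesis was used \emph{only} to derive $\safegroup = \colivegroup = \emptyset$; every subsequent ingredient is insensitive to the coloring. In particular: (i) \cref{claim:boundedcounter} still bounds every live-group counter by $K_{\thres,\gamma}$, since its proof relies on properties of $\parityTemp$ rather than on the number of colors; (ii) one passes to the extended MDP $M' = \langle Q', A, \Delta', \qinit'\rangle$ formed by the product of $M$ with the (now bounded) counter tuples, on which both $\policy$ and $\shield{\policy}$ are stationary; (iii) since $Q$ is an SCC and the counter component is bounded, $\shield{\policy}$ induces a stationary distribution $d_{\shield{\policy}}$ on $M'$, so the average-reward policy difference lemma applies and yields
\[
\avgreward{\qinit}{\policy} - \avgreward{\qinit}{\shield{\policy}} = \exp_{(q,C)\sim d_{\shield{\policy}}} \sum_{a\in A(q)} (\policy(q,a) - \shield{\policy}((q,C),a)) \cdot \biasact(q,a);
\]
and (iv) boundedness of the bias function $\biasact$ by some $B$ reduces the right-hand side to $B$ times an expression of the form in \eqref{eq:cost_tvdist}, which the proof of \cref{thm:minimalinterferenceWeighted} drives below $\varepsilon$ for suitable $\thres,\gamma>0$. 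Chaining these bounds gives $\avgreward{\qinit}{\shield{\policy}} > \avgreward{\qinit}{\policy} - \varepsilon$.

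The only point that genuinely needs verification — and the mild obstacle here — is that no step after the template-structure reduction secretly relies on the two-color coloring: the $\psure$-\textsf{good} hypothesis is consumed entirely by \cref{lemma:templateGoodSCC}, whereas the plain SCC hypothesis is what guarantees strong connectivity of the reachable part of $M'$ (hence a well-defined $d_{\shield{\policy}}$ and applicability of the average-reward policy difference lemma), and boundedness of $\biasact$ is a standard fact for stationary policies on communicating MDPs. Once these routine checks are in place, the corollary follows.
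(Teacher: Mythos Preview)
Your proposal is correct and matches the paper's approach exactly: the paper states the corollary as a direct consequence of \cref{thm:optimalshield} once \cref{lemma:templateGoodSCC} yields $\safegroup = \colivegroup = \emptyset$. Your additional verification that the B\"uchi hypothesis in the proof of \cref{thm:optimalshield} is used solely to obtain this template shape (and nowhere else) is accurate and makes the reduction explicit, which the paper leaves implicit.
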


\begin{remark}
We note that Rem.~3 in the main text directly transfers to \cref{thm:optimalshield} and \cref{cor:optimalshieldparity}. That is, we can restrict the training of an optimal average reward policy to a ($\psure$-\textsf{good}-)SCCs and shield the resulting policy with \Stars therein.   
If we would like to maximize the state space over which our shield is applicable,
we can only restrict learning to $\win^{\psure}_\Phi$, as in the discounted reward case. We can then compute separate \Stars for every $\psure$-\textsf{good} SCC of a given MDP $M$ and an additional \Stars synthesized for the objective to reach some $\psure$-\textsf{good} SCC. However, in order to ensure that the resulting shielded policy is optimal in the above sense, we would need to enforce reaching the $\psure$-\textsf{good} SCC with the highest achievable reward. Determining this SCC would, however, need access to the reward structure of the MDP $M$ which we assume not to have. Hence, shielding \enquote{blindly} in this case would still be minimally interfering in the sense of \cref{thm:minimalinterference} but not necessarily optimal in the sense of \cref{cor:optimalshieldparity}. 
\end{remark}

\section{Proof for the Quality of Shielded Policies}\label{sec:naive}

Here we restate and prove the Theorem~5 of the main text, which shows that the frequency of visiting even color states can be increased by tuning the enforcement parameter $\gamma$.
\begin{restatable}{theorem}{restateFrequency}\label{thm:frequency}
    Given the premises of \cref{thm:optimalshield} with $\shield{\policy} = \policy|^{\template_\sure,\thres}_{\gamma}$ and $T$ being the set of even color states in the objective $\Phi$, for every frequency $0<\delta\leq \frac{1}{\abs{Q}}$, there exists parameters $\thres,\gamma > 0$ such that for every run $\run\sim\shield{\policy}$, it holds that $\freq(\run,T) \geq \delta$.
\end{restatable}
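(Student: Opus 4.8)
Here is how I would approach the proof.

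\smallskip
\noindent\textbf{Idea.} The plan is to pick the shield parameters $\thres,\gamma$ so aggressively that, at every non-accepting state, the shield is \emph{forced} to play a ``progress'' action of the reach-template underlying $\template_\sure$. This makes the distance to the accepting set $T$ of the B\"uchi objective $\Phi$ strictly decrease at every step the run spends outside $T$, so $T$ is hit in every window of $\abs{Q}$ consecutive steps, which yields $\freq(\run,T)\ge 1/\abs{Q}\ge\delta$ at once.

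\smallskip
\noindent\textbf{Step 1: unpack the template.} Under the premises of \cref{thm:optimalshield} we have $\win^{\sure}_\Phi=Q$, $Q$ an SCC, and $\Phi$ B\"uchi with accepting set $T\subseteq Q$, so (as in the proof of \cref{thm:optimalshield}, cf.\ \cref{lemma:templateGoodSCC}) $\template_\sure=\parityTemp_\sure(M,\Phi)=(\safegroup,\colivegroup,\livegroup)$ with $\safegroup=\colivegroup=\emptyset$ and $\livegroup$ the attractor-based reach-template towards $T$ from \textsc{ReachTemplate}~\cite[Alg.~1]{AnandNS23}. I would record the structural facts I need from that construction: since $\win^{\sure}_\Phi=Q$ every state has a finite rank in the attractor decomposition towards $T$, giving $\mathrm{rk}\colon Q\to[0;N]$ with $N\le\abs{Q}-1$ and $\mathrm{rk}^{-1}(0)=T$; and for each $k\in[1;N]$ there is exactly one $H^{(k)}\in\livegroup$ with $\src(H^{(k)})=\mathrm{rk}^{-1}(k)$ such that every $\edge{q}{a}\in H^{(k)}$ has $\supp(\Delta(q,a))\subseteq\bigcup_{j<k}\mathrm{rk}^{-1}(j)$ (a progress move from a rank-$k$ state surely reaches a smaller rank); these exhaust $\livegroup$, and every $q\in Q\setminus T$ is the source of exactly one $H^{(k)}$. (If $T=Q$ there are no live-groups and $\freq(\run,T)=1$ trivially, so assume $T\subsetneq Q$.)

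\smallskip
\noindent\textbf{Step 2: force progress moves, then descend.} I would pick $\thres$ with $0<\thres<1/(\abs{A}+1)$ and $\gamma:=1/\thres$, so that $K_{\thres,\gamma}=1$ in \cref{claim:boundedcounter}. Then whenever a history $\hist$ ends in a state $q\in Q\setminus T$ with $k=\mathrm{rk}(q)$, we have $\counter{H^{(k)}}(\hist)\ge 1$ (the current visit to $q\in\src(H^{(k)})$ is counted), so by the computation in the proof of \cref{claim:boundedcounter} the shielded policy assigns probability $0$ to every action $a$ with $\edge{q}{a}\notin H^{(k)}$, i.e.\ it must play a progress move; and $\thres<1/(\abs{A}+1)$ guarantees that at least one progress action retains positive mass after the thresholding of \cref{def:stesh} and that $\overline{\mu}$ is well-defined everywhere (at accepting states the intermediate distribution $\mu''$ of \cref{def:stesh} equals $\mu$ with $\max_a\mu(a)\ge 1/\abs{A}>\thres$), so the corner case of \cref{rem:cornercase} never arises. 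Hence for any $\run=q_0a_0q_1a_1\dots\sim\shield{\policy}$ and any $i$ with $q_i\notin T$ we get $\mathrm{rk}(q_{i+1})<\mathrm{rk}(q_i)$; iterating, some $q_{i+j}$ with $0\le j\le\mathrm{rk}(q_i)\le\abs{Q}-1$ lies in $T$. So every block of $\abs{Q}$ consecutive states of $\run$ meets $T$, whence $\abs{\{i\in[0;l]\mid q_i\in T\}}\ge\lfloor(l+1)/\abs{Q}\rfloor$ for all $l$, and therefore $\freq(\run,T)=\limsup_{l\to\infty}\tfrac1l\abs{\{i\in[0;l]\mid q_i\in T\}}\ge 1/\abs{Q}\ge\delta$ for every $\shield{\policy}$-run, as required.

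\smallskip
\noindent\textbf{Main obstacle.} The delicate part is pinning down that \textsc{ReachTemplate} really produces one live-group \emph{per attractor level} with the progress property above, rather than a single coarse live-group collecting all progress edges (such a template would be followed by a run that merely oscillates between two adjacent ranks forever, hence could not be winning for B\"uchi, but one must invoke \cite{AnandNS23} to rule it out here). Tied to this is the exact counting convention in \cref{def:counters}: the tight bound $1/\abs{Q}$ hinges on $\counter{H^{(k)}}$ already being $\ge 1$ on the very first visit to a rank-$k$ state since its last $H^{(k)}$-move, so that a progress move is forced with no delay; a one-step delay would only give $\freq(\run,T)\ge 1/(2\abs{Q})$ and would force the weaker hypothesis $\delta\le 1/(2\abs{Q})$.
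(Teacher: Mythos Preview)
Your approach shares the paper's skeleton: both unpack $\template_\sure$ into one live-group per attractor level (via \textsc{ReachTemplate}) and then invoke the counter bound $K_{\thres,\gamma}$ from \cref{claim:boundedcounter}. The difference is tactical. You fix the extreme choice $K_{\thres,\gamma}=1$ and argue that the rank strictly decreases at every non-accepting step, yielding $\freq\ge 1/\abs{Q}$ in one shot; the paper instead keeps $K_{\thres,\gamma}$ generic, argues recursively that $K_{\thres,\gamma}$ visits to layer $Q_i$ force one visit to $Q_{i-1}$, and arrives at the parametric bound $\freq(\run,T)\ge 1/\bigl(\abs{Q}\cdot K_{\thres,\gamma}^{\,n-1}\bigr)$ before specialising $\thres,\gamma$. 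Your shortcut is perfectly sufficient for the bare existence statement, while the paper's form is more informative because it exposes the enforcement--frequency trade-off that the experimental section actually exploits. The counter-convention issue you flag as the ``main obstacle'' is real and shared: the paper's ``within $K_{\thres,\gamma}$ many visits'' phrasing implicitly adopts exactly your reading that the current visit already counts. Do note, however, that your argument is more fragile to this point than the paper's---if the counter were $0$ on the first visit, the rank-descent does not merely slip by one step (the first move from each layer would then be unconstrained and the adversary could \emph{raise} the rank), so your direct descent collapses entirely, whereas the paper's recursive bound would only loosen to $(K_{\thres,\gamma}{+}1)^{n-1}$ in the denominator.
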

\begin{proof}
    Since $\Phi$ is a \buchi objective, by the construction of $\parityTemp_\sure$, as in \cite[Alg. 3]{AnandNS23}, $\template_\sure = (\emptyset,\emptyset,\livegroup = \{\livegroupSingle_i \mid i \in [1;n]\})$ such that $\win^{\sure}_\Phi  = Q$ can be 
    partitioned into $n$ groups $\{Q_i\}_{i=0}^n$ with $Q_0 = T$ and $\livegroupSingle_i$ being the edges from $Q_i$ to $Q_{i-1}$.
    Intuitively, the indices of the groups measure the closeness to the even color states $T$ and the live-groups are the edges that are used to ensure that a run progresses from higher index groups to lower index groups leading to the final group $T$.
    Furthermore, by \cref{claim:boundedcounter}, we know that each counter value of the live-groups is bounded by $K_{\thres,\gamma}$.
    As visiting $Q_i$ ensures visiting the source states of the live-group $\livegroupSingle_i$, it must hold that within $K_{\thres,\gamma}$ many visits to the group $Q_i$, the counter value of the live-group $\livegroupSingle_i$ is reset to zero, i.e., an edge from $\livegroupSingle_i$ is sampled at least once.
    Hence, applying the above argument recursively, we can show that within $K_{\thres,\gamma}^{n+1}$ many visits to the group $Q_n$ will ensure visiting the group $Q_{n}$ at least $K_{\thres,\gamma}^{n}$ times, which in turn ensures that the group $Q_{n-1}$ is visited at least $K_{\thres,\gamma}^{n-1}$ times and so on until the group $Q_0 = T$ is visited at least once.
    Finally, as every cycle of length $\abs{Q}$ visits some group at least twice, for every run $\run$ sampled from $\shield{\policy}$, it holds that
    \begin{align*}
        \freq(\run,T) &\geq \frac{1}{\abs{Q}\cdot K_{\thres,\gamma}^{n-1}}
         = \frac{1}{\abs{Q}}\cdot\min\left(1,\left(\frac{\gamma}{1/\thres-1}\right)^{n-1}\right).
    \end{align*}
    Hence, with appropriate values for $\thres$ and $\gamma$, we can ensure that the above expression is at least $\delta$.
\end{proof}

Note that \cref{thm:frequency} (Thm.~5 in the main text)  shows the existence of such parameters only for the case of surely satisfying $\Phi$. For the case of almost-sure satisfaction, the frequency would also depend on the transition probabilities of the MDP and hence, we cannot guarantee the existence of such parameters for every $\delta$, while the same intuition still holds.

\section{Large Figures from Introduction}

\begin{figure}[H]
    \centering
        \includegraphics[width=\textwidth]{./figures/overview}
        \includegraphics[width=0.6\textwidth]{./figures/DynamicShielding} 
        \caption{Overview of \Stars synthesis (top) and runtime-application of \Stars (bottom). The detailed operation of \Stars is illustrated in \cref{fig:stesh}.
        Cyan components are taken from the literature. Purple components illustrate dynamic adaptability.
        }\label{fig:overview}
\end{figure}

\begin{figure}[H]
 \begin{center}
  \includegraphics[width=\textwidth]{./figures/stesh}
  \caption{Illustration of dynamic interference via \Stars{}. Length of arrows indicate the relative probability of the corresponding action in $\mu\in\dist{A(q)}$. The strategy template $\Gamma=(\safegroup,\colivegroup,\livegroup)$ is illustrated via colors red ($\safegroup$), orange ($\colivegroup$) and green ($\livegroup$).}\label{fig:stesh}
 \end{center}
\end{figure}

\section{Additional Experimental Results}\label{sec:suppl-experiments}

\subsection{Larger Figures and More Screenshots from \benchmark{}}\label{sec:moreScreenshots}
Here, we provide larger and more detailed screenshots of the robot controlled by \tool{} in \benchmark{}.
\begin{itemize}
    \item Figure~4(a) of the main text is shown in \cref{fig:buchiRewardsComparison} with larger font size.
    \item Figure~4(b) of the main text is shown in \cref{fig:naiveVsStars} with larger font size.
    \item Figure~1(a) and 1(b) of the main text along with other screenshots are shown in \cref{figureAP:screenshots}.
\end{itemize}

\begin{figure}[H]
    \centering
        \includegraphics[width=0.6\textwidth]{./figures/buechi_rewards_comparison_all.pdf}
        \caption{Effect of $\gamma$ on the \buchi frequency and average reward for \applyStars in \benchmark{}.}\label{fig:buchiRewardsComparison}
\end{figure}

\setcounter{figure}{4}
\begin{figure}[h]
    \centering
        \includegraphics[width=0.6\textwidth]{./figures/thresholdVSBuechi.pdf}
        \caption{\buchi frequency vs. average reward for \applyStars and \applyNaive.}\label{fig:naiveVsStars}
\end{figure}

\begin{figure}[H]
	\centering
	\begin{subfigure}[b]{0.45\textwidth}
		\includegraphics[width=\textwidth]{./figures/noShield.png}
		\caption{Unshielded}
		\label{screenshotAP:unshielded}
	\end{subfigure}
	\quad
	\begin{subfigure}[b]{0.45\textwidth}
		\includegraphics[width=\textwidth]{./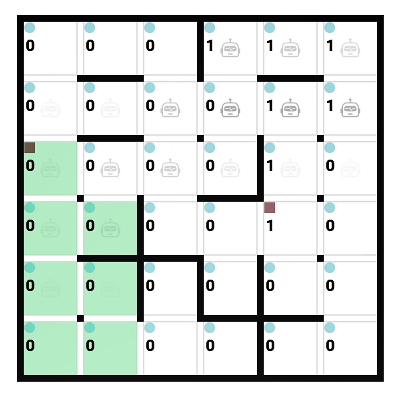}
		\caption{Low $\gamma$}
		\label{screenshotAP:lowShielded}
	\end{subfigure}
	\vspace{1em}
	\begin{subfigure}[b]{0.45\textwidth}
		\includegraphics[width=\textwidth]{./figures/highShield.png}
		\caption{High $\gamma$}
		\label{screenshotAP:highShielded}
	\end{subfigure}
	\quad
	\begin{subfigure}[b]{0.45\textwidth}
		\includegraphics[width=\textwidth]{./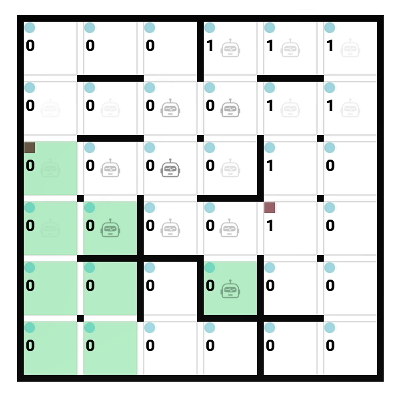}
		\caption{Online objective addition}
		\label{screenshotAP:onlineShielded}
	\end{subfigure}
	\caption{Screenshots of UI showing \tool controlled robot for an instance from \benchmark.}
	\label{figureAP:screenshots}
\end{figure}

\subsection{Additional Evaluation Details for LunarLander}
We show the additional evaluation details on how the enforcement parameters $\gamma$ affect the successful helipad landing rate and average number of steps to reach the helipad in \cref{fig:lunarLanderGamma}. In the main text, we only show the results for $\gamma=0.08$.

\begin{figure}[H]
    \centering
    \includegraphics[width=0.6\textwidth]{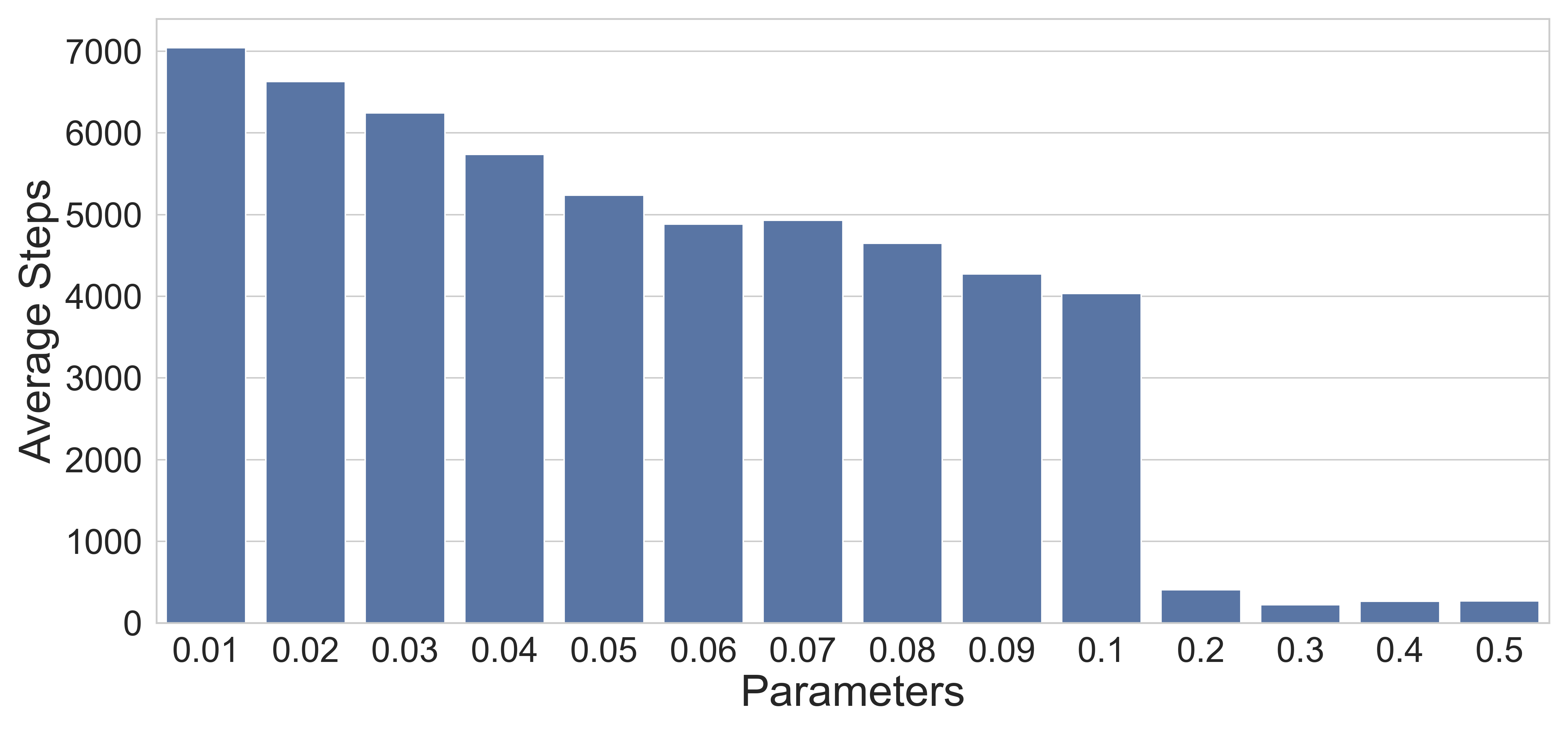}
    \includegraphics[width=0.6\textwidth]{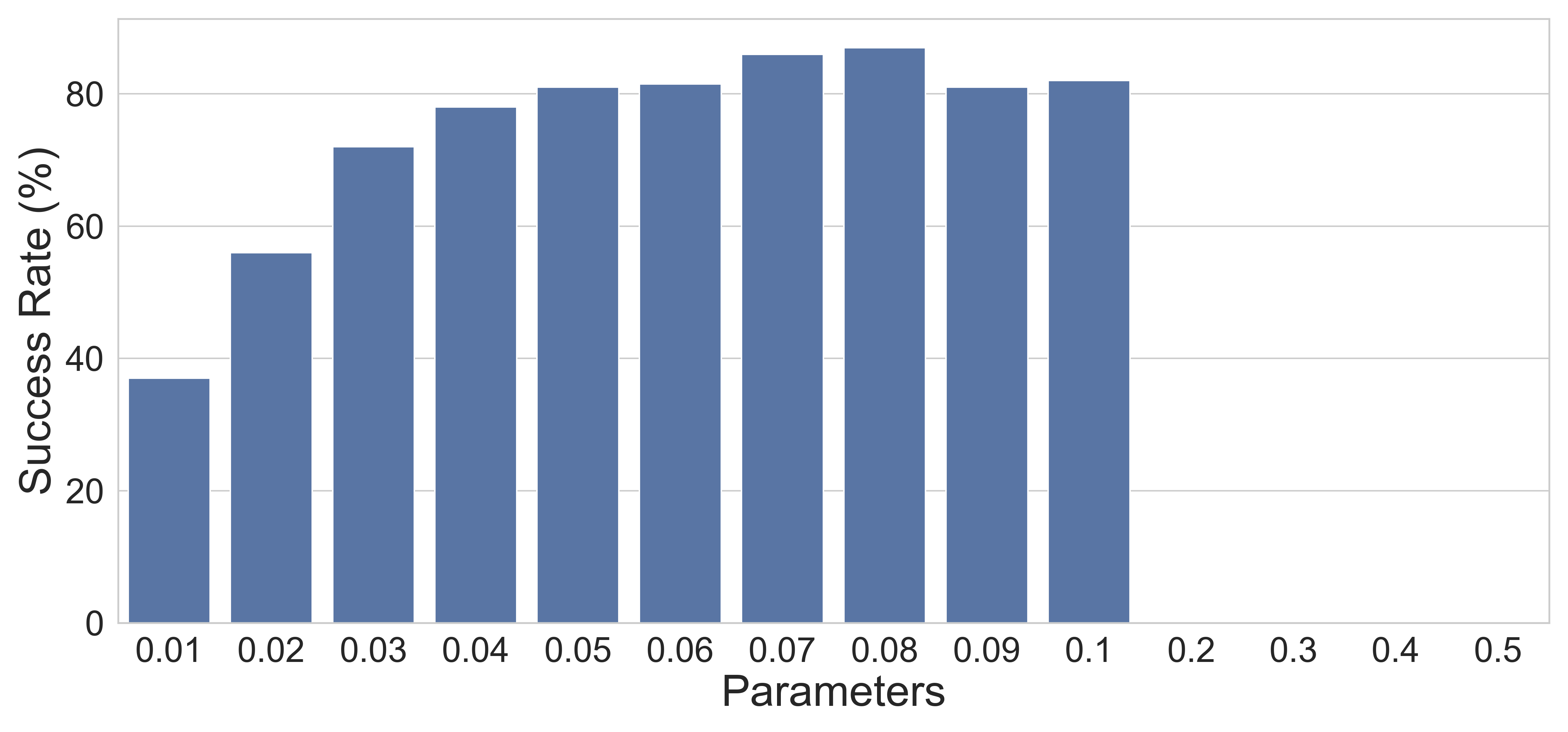}
    \caption{Effect of $\gamma$ on the successful helipad landing rate and average number of steps to reach the helipad in 200 environments of LunarLander.}\label{fig:lunarLanderGamma}
\end{figure}

\subsection{Larger Figures from Overcooked-AI}

\begin{figure}[H]
	\centering
	\includegraphics[width=0.6\textwidth]{figures/overcooked_plot.png}
	\caption{Scalability of \applyStars in the Overcooked-AI environment.}
	\label{fig:overcooked}
\end{figure}

\label{endofdocument}
\newoutputstream{pagestotal}
\openoutputfile{main.pgt}{pagestotal}
\addtostream{pagestotal}{\getpagerefnumber{endofdocument}}
\closeoutputstream{pagestotal}

\end{document}